\documentclass{article}

\usepackage{arxiv}
\usepackage{authblk}

\usepackage[utf8]{inputenc} % allow utf-8 input
\usepackage[T1]{fontenc}    % use 8-bit T1 fonts
\usepackage{hyperref}       % hyperlinks
\usepackage{url}            % simple URL typesetting
\usepackage{booktabs}       % professional-quality tables
\usepackage{amsfonts}       % blackboard math symbols
\usepackage{nicefrac}       % compact symbols for 1/2, etc.
\usepackage{microtype}      % microtypography
\usepackage{lipsum}
\usepackage{graphicx}
\graphicspath{ {./images/} }

\usepackage{graphicx}
\usepackage{natbib}
\usepackage{doi}

\usepackage{microtype}
\usepackage{graphicx}
\usepackage{subfigure}
\usepackage{booktabs} % for professional tables

\usepackage{booktabs}
\usepackage{graphicx} % 用于 resizebox
\usepackage{multirow} % 如果需要合并行（可选，下面代码未用到但常备）

\usepackage{hyperref}

\usepackage{url}
\usepackage{amsmath}
\usepackage{amssymb}
\usepackage{mathtools}
\usepackage{amsthm}

\usepackage{xcolor}
\usepackage{framed}

\definecolor{shadecolor}{gray}{0.95}

\makeatletter
\renewenvironment{proof}[1][\proofname]{%
  \begin{shaded} 
  \par
  \pushQED{\qed}%
  \normalfont \topsep6\p@\@plus6\p@ \relax
  \trivlist
  \item[\hskip\labelsep
        \bfseries % (B) 你的要求：Proof 字体加粗
    #1\@addpunct{.}]\ignorespaces
}{%
  \popQED\endtrivlist\@endpefalse
  \end{shaded} 
}
\makeatother
\usepackage[capitalize,noabbrev]{cleveref}

\usepackage{graphicx}
\usepackage{subcaption}
\usepackage{cleveref} % 建议配合使用，方便引用

\usepackage[T1]{fontenc}

\theoremstyle{plain}
\newtheorem{theorem}{Theorem}[section]

\newtheorem{definition}[theorem]{Definition}
\newtheorem{assumption}[theorem]{Assumption}
\newtheorem{lemma}[theorem]{Lemma}
\theoremstyle{remark}

\title{On the Spectral Flattening of Quantized Embeddings}

\author[1]{Junlin Huang}
\author[2]{Wenyi Fang}
\author[3]{Zhenheng Tang}
\author[2]{Yuxin Wang}
\author[1]{Xueze Kang}
\author[2]{Yang Zheng }
\author[3]{\authorcr Bo Li}
\author[1]{Xiaowen Chu}
\date{}
\renewcommand{\today}{}
\affil[1]{%
    The Hong Kong University of Science and Technology (Guangzhou)
}
\affil[2]{Huawei Technologies Co., Ltd}
\affil[3]{The Hong Kong University of Science and Technology}

\begin{document}
\thispagestyle{fancy} % 确保当前页应用样式
\fancyhead[R]{}       % 将右上角（Right）内容设为空
\maketitle
\begin{abstract}
Training Large Language Models (LLMs) at ultra-low precision is critically impeded by instability rooted in the conflict between discrete quantization constraints and the intrinsic heavy-tailed spectral nature of linguistic data. By formalizing the connection between Zipfian statistics and random matrix theory, we prove that the power-law decay in the singular value spectra of embeddings is a fundamental requisite for semantic encoding. We derive theoretical bounds showing that uniform quantization introduces a noise floor that disproportionately truncates this spectral tail, which induces spectral flattening and a strictly provable increase in the stable rank of representations. Empirical validation across diverse architectures including GPT-2 and TinyLlama corroborates that this geometric degradation precipitates representational collapse.  This work not only quantifies the spectral sensitivity of LLMs but also establishes spectral fidelity as a necessary condition for stable low-bit optimization.
\end{abstract}

% keywords can be removed
%\keywords{First keyword \and Second keyword \and More}

\section{Introduction}
The escalating computational demands of training Large Language Models (LLMs) have mandated a paradigm shift towards low-bit quantization to sustain scalability. In recent years, the community has witnessed a successful migration from standard \texttt{FP32} and \texttt{BF16} formats to \texttt{FP8} training~\cite{fp8,fp8-lm,train_inf_fp8}, realizing substantial gains in memory efficiency and throughput without sacrificing convergence stability. Currently, the research frontier is pivoting further toward 4-bit precision. 
Notably, NVIDIA's technical report on the Blackwell architecture reveals that the 
emerging \texttt{NVFP4} format reduces memory consumption by $1.8\times$, 
enabling the B200 GPU to accelerate General Matrix Multiplications 
(GeMM) by more than $5\times$ compared to the \texttt{FP8} throughput 
of the Hopper-based H100~\cite{nvidia2024blackwell}, underscoring the 
transformative potential of \texttt{FP4} training for next-generation 
foundation models~\cite{alvarez2024nvfp4}.

However, extending the training frontier to FP4 presents distinct theoretical hurdles that cannot be resolved by merely extrapolating previous successes. As precision descends to 4 bits, the representation imposes exponentially tighter constraints on dynamic range and resolution. This introduces a fundamental conflict with the inherently high-dynamic-range distributions of parameters, activations, and gradients characteristic of LLMs. Crucially, these distributional properties are not arbitrary artifacts; rather, they are intrinsic manifestations of the low-rank hypothesis of deep learning~\cite{aghajanyan2020, martin2021}. While the weight matrices of LLMs are nominally full-rank, they exhibit a heavy-tailed spectral structure—governed by Zipfian laws~\cite{Zipf}—wherein a few dominant singular values capture the bulk energy, while a long, fragile tail encodes fine-grained semantic information.

% We found that the primary peril of ultra-low-bit quantization, such as \texttt{FP4}, lies not merely in the introduction of scalar error, but in the structural degradation of this spectral tail. When the quantization noise floor encroaches upon the magnitude of these tail singular values, the model undergoes spectral flattening, effectively truncating the subtle feature subspaces essential for complex reasoning. Existing analyses often overlook this critical interplay between the discrete constraints of quantization and the continuous statistical laws of natural language. In this work, we bridge this gap by establishing a rigorous theoretical framework to quantify this spectral feature. The specific contributions are as follows:

We argue that the primary peril of ultra-low-bit quantization, such as \texttt{FP4}, extends beyond simple scalar approximation error; it resides in the structural degradation of the spectral tail. When the quantization noise floor encroaches upon the magnitude of these delicate tail singular values, the model undergoes spectral flattening. Crucially, while a higher stable rank is often associated with beneficial isotropy in representation learning, we prove that in the context of quantization, this increase signifies a pathological loss of discriminative capacity. This phenomenon effectively truncates the subtle feature subspaces essential for complex reasoning, leading to what we formalize as representational collapse. Existing analyses often overlook this critical interplay between the discrete constraints of quantization and the continuous Zipfian laws governing natural language. In this work, we bridge this gap by establishing a rigorous theoretical framework to quantify this feature:

\paragraph{Contribution 1:} We formalize the connection between the statistical properties of natural language and the spectral structure of linear layers. Consider a linear layer $\mathbf{Y} = \mathbf{XW}$, where $\mathbf{X}$ represents input token embeddings and $\mathbf{W}$ is the weight matrix. Based on empirical studies showing that word frequencies in large-scale corpora follow a Zipfian distribution, we prove that this distribution dictates a power-law decay in the singular values of $\mathbf{X}$ and the weight gradient $\nabla_{\mathbf{W}}=\mathbf{X}^T\mathbf{G}$, where $\mathbf{G}$ is the gradient of the loss with respect to the output. Specifically, we derive the following bounds for the top-$r$ singular values:
    \begin{equation}
        \sigma_k(\mathbf{X}) \leq C \cdot k^{-\alpha/2}, \quad \sigma_k(\nabla_{\mathbf{W}}) \leq (C M) \cdot k^{-\alpha/2},
    \end{equation}
    where $1 \leq k \leq r$, $C$ is a constant, $\sigma_k(\cdot)$ is the $k$-th largest singular value, $\alpha$ is the decay exponent, and $\|\mathbf{G}\|_2\leq M$.

\paragraph{Contribution 2:} We demonstrate
that block-wise symmetric uniform quantization,
the mechanism widely adopted in modern low-precision formats
like MXFP4~\cite{mxfp4} and NVFP4~\cite{nvfp4}, effectively alter the spectral properties of $\mathbf{X}$ and $\nabla_{\mathbf{W}}$. We prove that quantization disrupts the aforementioned power-law decay, increasing the stable rank of the representations.

The remainder of this paper is organized as follows. Section \ref{sec:preliminaries} outlines the statistical foundations. Section \ref{Sec:low-rank} derives the heavy-tailed spectral structure of LLMs using random matrix theory. Section \ref{sec:quantization} proves that quantization truncates this spectral tail and increases the stable rank. Section \ref{sec:experiments} provides empirical verification of these theoretical predictions across diverse model architectures.  Section \ref{sec:conclusion} concludes the paper and outlines directions for future work.

\section{Preliminaries and Theoretical Foundations}
\label{sec:preliminaries}
This section establishes notations and probabilistic frameworks necessary for analysis. We begin by defining the asymptotic behaviors used to describe high-dimensional limits. Subsequently, we formalize the statistical properties of language data and the geometric structure of embedding spaces. Finally, we introduce critical tools from matrix perturbation theory and concentration inequalities that allow us to bound the spectral impact of quantization noise. The supplementary definitions are in appendix Section~\ref{sec:supp_def}.

\subsection{Asymptotic Notations}

To  analyze the spectral properties of large-scale models, we consider the regime where the dimension $d \to \infty$. We define the following asymptotic relations to characterize the convergence behavior of sequences.

\begin{definition}[Asymptotic Equivalence]
\label{AsymEqui}
Define two sequences $A(d)$ and $B(d)$  to be asymptotically equivalent, denoted as $A(d) \sim B(d)$, if and only if their ratio converges to 1 as $d \to \infty$: 
\begin{equation}
A(d) \sim B(d) \iff \lim_{d \to \infty} \frac{A(d)}{B(d)} = 1.
\end{equation}
Or equivalently, the relative error tends to 0:
\begin{equation}
\lim_{d \to \infty} \left| \frac{A(d) - B(d)}{B(d)} \right| = 0.
\end{equation}
\end{definition}

\begin{definition}[Asymptotic Upper Bound]
\label{AsymBound}
For two sequences $A(d)$ and $B(d)$ that depend on $d$, $B(d)$ is the asymptotic upper bound of $A(d)$ if and only if there exists $C>0, d_0\in\mathbb{R}$ such that for all $d>d_0$ the following holds: 
\begin{equation}
|A(d)|\leq C\cdot |B(d)|,
\end{equation}
which is denoted as $A(d)=\mathcal{O}(B(d))$.
\end{definition}

\subsection{Statistical Structure of Language Representations}

To analyze the origin of the heavy-tailed spectra in LLMs, we first model the input data distribution that drives the training dynamics. We establish the spectral properties of the data embedding matrix by two widely accepted assumptions regarding token frequency and embedding geometry. 

\begin{assumption}[\cite{Zipf}]
\label{Zipf_law}
Assume the data stream is generated from a vocabulary $\mathcal{V} = \{w_1, w_2, \dots, w_V \}$ of size $V$. Let $p_k$ denote the occurrence probability of the $k$-th token. We assume $\{p_k\}_{1\leq k \leq V}$ asymptotically follows a Zipfian power-law distribution:
\begin{equation}
p_k \sim k^{-\alpha}, \quad k=1, 2, \dots, V,
\end{equation}
where $\alpha > 1$ is the decay parameter.
\end{assumption}
Assumption~\ref{Zipf_law} is grounded in quantitative linguistics. Empirical studies on large-scale corpora (e.g., Wikipedia~\cite{wikipedia}, Common Crawl~\cite{commoncrawl}) consistently demonstrate a power-law relationship between word frequency and rank. This implies that the embedding matrix constructed from these tokens will inherently exhibit a skewed, heavy-tailed singular value distribution.

\begin{assumption}
\label{quasi_orthogonal}
Let $v_k \in \mathbb{R}^d$ denote the embedding vector corresponding to the $k$-th token, normalized to unit norm such that $\|v_k\|_2 = 1$. The covariance matrix $\Sigma$ is defined as the probability-weighted outer product:
\begin{equation}
\label{eq:covariance}
\Sigma \triangleq \sum_{k=1}^{V} p_k v_k v_k^T.
\end{equation}
We posit that in the high-dimensional limit ($d \to \infty$), these embedding vectors exhibit quasi-orthogonality. Specifically, for any distinct pair of indices $i \neq j$:
\begin{equation}
\label{eq:cosine_sim}
|\langle v_i, v_j \rangle| = \mathcal{O}\left(\frac{1}{\sqrt{d}}\right).
\end{equation}
\end{assumption}

Assumption~\ref{quasi_orthogonal} is grounded in the fundamental geometry of high-dimensional spaces, where independent random vectors exhibit an tendency toward orthogonality. This phenomenon is corroborated by high-dimensional probability theory (often associated with the Johnson-Lindenstrauss Lemma~\cite{JLlemma}), which establishes that the inner product of two random unit vectors has an expectation of zero and a variance of $1/d$. Consequently, in the large $d$ regime, the cosine similarity between distinct semantic vectors scales as $\mathcal{O}(d^{-1/2})$.~\cref{eq:cosine_sim} is required for the proof of Lemma~\ref{LemmaSepcDominance}.

\subsection{Tools for Spectral Perturbation Analysis}

To quantify the spectral distortion induced by quantization noise, we leverage fundamental tools from random matrix theory~\cite{SpectralAnalysis}.

\begin{theorem}
\label{ThmStieltjesTransformContinuity}
Let $\{\mu_n\}_{n \ge 1}$ be a sequence of probability measures on $\mathbb{R}$, and let $\{m_n(z)\}_{n \ge 1}$ denote their corresponding Stieltjes transforms. If there exists a function $m(z)$ defined on the upper complex plane $\mathbb{C}^+$, such that for all $z \in \mathbb{C}^+$ the pointwise limit exists:
\begin{equation}
\lim_{n \to \infty} m_n(z) = m(z),
\end{equation}
and $m(z)$ is the Stieltjes transform of a probability measure $\mu$, then $\{\mu_n\}$ converges weakly to $\mu$:
\begin{equation}
\mu_n \xrightarrow{w.} \mu.
\end{equation}
\end{theorem}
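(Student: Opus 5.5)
The plan is the classical argument: tightness, plus identification of every subsequential limit through the Stieltjes transform. Write $m_\nu(z)=\int_{\mathbb{R}} \frac{1}{x-z}\,d\nu(x)$ for $z\in\mathbb{C}^+$, so that $m_n=m_{\mu_n}$ and $m=m_\mu$. The first step is to extract vague limits. By Helly's selection theorem, every subsequence of $\{\mu_n\}$ has a further subsequence $\mu_{n_j}$ converging vaguely to a sub-probability measure $\nu$ with $\nu(\mathbb{R})\le 1$. For fixed $z\in\mathbb{C}^+$, the function $x\mapsto (x-z)^{-1}$ is continuous and vanishes at infinity, with modulus at most $1/\Im z$. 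Vague convergence therefore gives $m_{n_j}(z)\to m_\nu(z)$, and by hypothesis $m_\nu(z)=m(z)=m_\mu(z)$ for all $z\in\mathbb{C}^+$.

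The second step is uniqueness. I would invoke the Stieltjes inversion formula, which holds for finite measures and hence for sub-probability measures:
\begin{equation}
\nu((a,b))+\tfrac12\nu(\{a,b\})=\lim_{\eta\downarrow 0}\frac{1}{\pi}\int_a^b \Im\, m_\nu(x+i\eta)\,dx .
\end{equation}
Because $m_\nu$ and $m_\mu$ agree on $\mathbb{C}^+$, $\nu$ and $\mu$ agree on all intervals whose endpoints are not atoms of either measure. Such intervals form a $\pi$-system generating the Borel sets, so $\nu=\mu$. In particular $\nu(\mathbb{R})=1$: no mass escapes to infinity.

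The third step upgrades vague convergence to weak convergence. Vague convergence of probability measures to a limit that is itself a probability measure implies tightness along the subsequence, and hence weak convergence. Concretely, take bounded continuous $f$. For $\varepsilon>0$, choose $K$ with $\mu([-K,K])>1-\varepsilon$ and a continuous cutoff $\chi_K$ with $\mathbf{1}_{[-K,K]}\le\chi_K\le\mathbf{1}_{[-K-1,K+1]}$. Then $\int\chi_K\,d\mu_{n_j}\to\int\chi_K\,d\mu>1-\varepsilon$, so the tails of $\mu_{n_j}$ eventually carry mass below $\varepsilon$. Splitting $\int f\,d\mu_{n_j}=\int f\chi_K\,d\mu_{n_j}+\int f(1-\chi_K)\,d\mu_{n_j}$, the first term converges to $\int f\chi_K\,d\mu$ by vague convergence, and the second is bounded by $\|f\|_\infty\varepsilon$. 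This gives $\mu_{n_j}\xrightarrow{w.}\mu$.

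Finally, since every subsequence of $\{\mu_n\}$ has a further subsequence converging weakly to the same $\mu$, the full sequence converges weakly to $\mu$. This uses the standard subsequence criterion, applied to the real sequences $\int f\,d\mu_n$. The main obstacle is the uniqueness and mass-conservation step: one must make sure the Stieltjes transform determines even a possibly defective limit $\nu$. Here that is settled by the hypothesis that $m$ is the transform of a probability measure, combined with inversion. Without that hypothesis, mass could leak to infinity, as in $\mu_n=\delta_n$, where $m_n\to 0$.
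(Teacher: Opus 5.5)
Your proof is correct. The paper does not prove this statement itself; it imports it as a standard tool from \cite{SpectralAnalysis}. Your argument is the standard textbook proof of it: Helly selection, identification of every vague subsequential limit through the Stieltjes inversion formula, the upgrade from vague to weak convergence because the limit has full mass, and the subsequence principle.

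Two small points should be written out explicitly.

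\begin{itemize}
\item The convergence $m_{n_j}(z)\to m_\nu(z)$ does not follow from the pointwise bound on $|(x-z)^{-1}|$. It follows because $x\mapsto (x-z)^{-1}$ lies in $C_0(\mathbb{R})$ and the masses are uniformly bounded by $1$, so convergence against $C_c$ test functions extends to $C_0$ by uniform approximation.
\item In the uniqueness step, the $\pi$-system argument for finite measures needs either equal total mass or an exhausting sequence in the $\pi$-system. Use intervals $(-K,K)$ whose endpoints are not atoms, with $K\to\infty$. This is what actually gives $\nu(\mathbb{R})=\mu(\mathbb{R})=1$. Alternatively, note that $-iy\,m_\nu(iy)\to\nu(\mathbb{R})$ as $y\to\infty$.
\end{itemize}
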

Theorem~\ref{ThmStieltjesTransformContinuity} allows us to characterize the limiting spectral distribution  of large covariance matrices by analyzing their Stieltjes transforms, a standard technique in random matrix theory.

\begin{theorem}[\cite{Weyl}]
\label{lemma:weyl}
Let $A\in\mathbb{R}^{m\times n}$ be a matrix and $\tilde{A} = A + E$ be its perturbed counterpart, with singular values $\sigma_1 \ge \sigma_2 \ge \dots \ge 0$ and $\tilde{\sigma}_1 \ge \tilde{\sigma}_2 \ge \dots \ge 0$, respectively. Then, for every $k$, the absolute difference between the $k$-th singular values is bounded by the spectral norm of the perturbation matrix:
\begin{equation}
\label{eq:WeylIneq}
|\tilde{\sigma}_k - \sigma_k| \le \|E\|_2.
\end{equation}
\end{theorem}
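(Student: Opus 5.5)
We prove the statement of Theorem~\ref{lemma:weyl}, Weyl's perturbation inequality for singular values, through the Courant--Fischer min--max characterization of singular values. For any $A\in\mathbb{R}^{m\times n}$ and $1\le k\le \min(m,n)$,
\begin{equation*}
\sigma_k(A) \;=\; \max_{\substack{S\subseteq\mathbb{R}^n\\ \dim S = k}} \;\min_{\substack{x\in S\\ \|x\|_2=1}} \|Ax\|_2 .
\end{equation*}
This follows from the variational characterization of the eigenvalues of the symmetric positive semidefinite matrix $A^TA$, because $\sigma_k(A)^2=\lambda_k(A^TA)$ and $\|Ax\|_2^2 = x^TA^TAx$. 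For indices $k>\min(m,n)$ both singular values vanish by convention, so the inequality is trivial there.

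\textbf{Upper bound.} Write $\tilde A = A+E$. Fix a $k$-dimensional subspace $S$. For each unit vector $x\in S$, the triangle inequality gives
\begin{equation*}
\|\tilde A x\|_2 \le \|Ax\|_2 + \|Ex\|_2 \le \|Ax\|_2 + \|E\|_2 .
\end{equation*}
Take the minimum over unit $x\in S$ on both sides; the additive constant $\|E\|_2$ passes through the minimum. Then take the maximum over all $S$ of dimension $k$. This yields $\tilde\sigma_k \le \sigma_k + \|E\|_2$.

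\textbf{Lower bound.} Apply the same argument with the roles of $A$ and $\tilde A$ exchanged, writing $A = \tilde A + (-E)$ and using $\|-E\|_2=\|E\|_2$. This gives $\sigma_k \le \tilde\sigma_k + \|E\|_2$. Combining the two one-sided bounds yields $|\tilde\sigma_k-\sigma_k|\le\|E\|_2$, which is \cref{eq:WeylIneq}.

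\textbf{Main obstacle.} The only nontrivial ingredient is the min--max formula for $\sigma_k$. If we cannot take it as standard, we prove it directly.

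For the $\ge$ direction, choose $S$ to be the span of the top $k$ right singular vectors of $A$. Every unit $x$ in this span satisfies $\|Ax\|_2\ge\sigma_k$.

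For the $\le$ direction, let $S$ be any $k$-dimensional subspace. Let $T$ be the span of the right singular vectors with indices $k,\dots,n$, so $\dim T = n-k+1$. Since $\dim S + \dim T > n$, the subspaces $S$ and $T$ intersect nontrivially. Pick a unit vector $x\in S\cap T$; it satisfies $\|Ax\|_2\le\sigma_k$, so the minimum over $S$ is at most $\sigma_k$.

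With this formula in hand, the rest of the argument is the elementary triangle-inequality step above.
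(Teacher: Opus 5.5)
Your proposal is correct and complete; there is nothing to compare it against, because the paper gives no proof of this statement. It imports the result from the cited reference and uses it as a black box in the proof of Lemma~\ref{LemmaSepcDominance} and in Theorems~\ref{thm:relative_error} and~\ref{thm:asymptotic_robustness_exact}.

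Your Courant--Fischer argument is the standard proof, and each step holds:
\begin{itemize}
\item Both directions of $\sigma_k(A)=\max_{\dim S=k}\min_{x\in S,\|x\|_2=1}\|Ax\|_2$ are properly justified. The $\ge$ direction uses the span of $v_1,\dots,v_k$. The $\le$ direction uses the dimension count $\dim S+\dim T=n+1>n$ with $T=\mathrm{span}(v_k,\dots,v_n)$, taken from a full SVD.
\item The pointwise bound $\|\tilde A x\|_2\le\|Ax\|_2+\|E\|_2$ legitimately passes through the inner minimum and then the outer maximum.
\item Writing $A=\tilde A+(-E)$ gives the reverse inequality.
\end{itemize}

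What your version adds over the bare citation is self-containedness in exactly the generality the paper needs. It covers rectangular $A$ and all indices $k$, with $\sigma_k=0$ for $k>\min(m,n)$. It also covers the eigenvalue use in Lemma~\ref{LemmaSepcDominance}, since there both $\Lambda_p$ and $\Lambda_p^{1/2}U^TU\Lambda_p^{1/2}$ are positive semidefinite, so their eigenvalues are their singular values.

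An equally standard alternative applies Weyl's eigenvalue inequality for symmetric matrices to the dilation $\bigl(\begin{smallmatrix}0&A\\A^T&0\end{smallmatrix}\bigr)$, whose eigenvalues are $\pm\sigma_i(A)$ padded with zeros. Your direct min--max route avoids that bookkeeping.
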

Analyzing the distribution of each specific $\tilde{\sigma}_k$ directly is intractable due to complex eigenvector interactions. Theorem \ref{lemma:weyl}  allows us to decouple the individual singular value shifts from the specific noise structure, bounding them uniformly using the global spectral norm $\|E\|_2$.

\begin{theorem}[\cite{Bernstein}]
\label{thm:bernstein}
Consider a finite sequence of independent, centered random matrices $\{X_k\} \subseteq \mathbb{R}^{d_1 \times d_2}$. Assume that each matrix is bounded uniformly in spectral norm:
\begin{equation}
\|X_k\|_2 \le R \quad \text{almost surely},
\end{equation}
for some constant $R$. Let $\delta^2$ be the variance statistic defined as:
\begin{equation}
\small
\label{eq:delta}
\delta^2 = \max \left\{ \left\| \sum_{k} \mathbb{E}[X_k X_k^T] \right\|_2, \left\| \sum_{k} \mathbb{E}[X_k^T X_k] \right\|_2 \right\}.
\end{equation}
Then, for any threshold $t > 0$, the probability that the spectral norm of random matrices $Y \triangleq \sum_k X_k$ exceeds $t$ is bounded by the exponent term:
\begin{equation}
P(\|Y\|_2 \ge t) \le (d_1 + d_2) \cdot \exp \left( \frac{-t^2/2}{\sigma^2 + Rt/3} \right).
\end{equation}
\end{theorem}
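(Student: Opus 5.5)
The statement is the matrix Bernstein inequality. I would prove it by the matrix Laplace-transform method (Ahlswede--Winter / Tropp), using a Hermitian dilation to handle rectangular summands. Throughout, I read the constant $\sigma^2$ in the tail bound as the variance statistic $\delta^2$ defined in \cref{eq:delta}.

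\textbf{Step 1: reduce to the symmetric case.} For $X\in\mathbb{R}^{d_1\times d_2}$ define the dilation
\[
\mathcal{H}(X)=\begin{pmatrix}0 & X\\ X^T & 0\end{pmatrix}\in\mathbb{R}^{(d_1+d_2)\times(d_1+d_2)} .
\]
This map is linear and symmetric-valued, and it satisfies $\lambda_{\max}(\mathcal{H}(X))=\|X\|_2$. It also gives
\[
\mathcal{H}(X)^2=\mathrm{diag}(XX^T,\,X^TX).
\]
Consequently, with $H_k\triangleq\mathcal{H}(X_k)$:
\begin{itemize}
\item the $H_k$ are independent, centered and symmetric;
\item $\lambda_{\max}(H_k)\le\|H_k\|_2\le R$ almost surely;
\item $\|Y\|_2=\lambda_{\max}\bigl(\sum_k H_k\bigr)$;
\item $\bigl\|\sum_k\mathbb{E}H_k^2\bigr\|_2=\delta^2$, since the norm of a block-diagonal matrix is the maximum of the norms of its blocks.
\end{itemize}
So it suffices to prove $P\bigl(\lambda_{\max}(\sum_k H_k)\ge t\bigr)\le D\exp\bigl(\frac{-t^2/2}{\delta^2+Rt/3}\bigr)$ with $D=d_1+d_2$.

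\textbf{Step 2: Laplace transform bound and subadditivity of cumulants.} For $\theta>0$, Markov's inequality together with $e^{\lambda_{\max}(A)}\le\mathrm{tr}\,e^{A}$ gives
\[
P\Bigl(\lambda_{\max}\bigl(\textstyle\sum_k H_k\bigr)\ge t\Bigr)\le e^{-\theta t}\,\mathbb{E}\,\mathrm{tr}\exp\Bigl(\theta\textstyle\sum_k H_k\Bigr).
\]
Next I invoke Lieb's concavity theorem: $A\mapsto\mathrm{tr}\exp(K+\log A)$ is concave on positive definite matrices. Applying Jensen's inequality one summand at a time, by conditioning iteratively on the independent $H_k$, yields
\[
\mathbb{E}\,\mathrm{tr}\exp\Bigl(\theta\textstyle\sum_k H_k\Bigr)\le\mathrm{tr}\exp\Bigl(\textstyle\sum_k\log\mathbb{E}\,e^{\theta H_k}\Bigr).
\]
This is the main obstacle, because the matrices do not commute and Golden--Thompson alone fails beyond two summands. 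I would not reprove Lieb's theorem; I would cite it and spell out only the iterated-conditioning argument.

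\textbf{Step 3: per-summand mgf bound.} The scalar function $f(x)=(e^{\theta x}-\theta x-1)/x^2$ is nondecreasing. Hence, for $x\le R$,
\[
e^{\theta x}\le 1+\theta x+f(R)\,x^2 .
\]
By the transfer rule (a scalar inequality on the spectrum of $H_k$ lifts to a semidefinite inequality), and using $\mathbb{E}H_k=0$, this gives $\mathbb{E}e^{\theta H_k}\preceq I+f(R)\,\mathbb{E}H_k^2$. Operator monotonicity of $\log$ and $\log(1+u)\le u$ then give $\log\mathbb{E}e^{\theta H_k}\preceq f(R)\,\mathbb{E}H_k^2$.

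For $0<\theta<3/R$, the Taylor bound $k!\ge 2\cdot 3^{k-2}$ yields
\[
f(R)\le g(\theta)\triangleq\frac{\theta^2/2}{1-R\theta/3}.
\]

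\textbf{Step 4: assemble and optimize.} Monotonicity of $\mathrm{tr}\exp$ under $\preceq$ and $\mathrm{tr}\,e^{A}\le D\,e^{\lambda_{\max}(A)}$ give
\[
P(\|Y\|_2\ge t)\le D\exp\bigl(-\theta t+g(\theta)\,\delta^2\bigr).
\]
Choosing $\theta=t/(\delta^2+Rt/3)$, which lies in $(0,3/R)$, makes the exponent equal to exactly $-\frac{t^2/2}{\delta^2+Rt/3}$, which completes the proof.
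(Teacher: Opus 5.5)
Your proposal is correct. It is the standard proof of the rectangular matrix Bernstein inequality, using the Hermitian dilation, Lieb's concavity theorem for subadditivity of matrix cumulants, the Bernstein mgf bound, and the choice $\theta = t/(\delta^2+Rt/3)$; the paper gives no proof of its own and imports the result from \cite{Bernstein}. Reading $\sigma^2$ as $\delta^2$ correctly fixes a notational slip in the statement. The only loose end is the degenerate case $\delta^2=0$: there your $\theta$ equals $3/R$ and leaves the admissible range, but then every $X_k=0$ almost surely and the bound holds trivially.
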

Theorem~\ref{thm:bernstein} establishes that the spectral norm of the accumulated quantization noise does not grow linearly with the matrix dimensions. Instead, it concentrates sharply around zero, exhibiting sub-Gaussian tail behavior governed by the variance statistic $\delta^2$. This exponential decay of tail probabilities implies that large spectral deviations are extremely rare, thereby justifying our treatment of quantization noise as a bounded perturbation with high confidence.

\section{Low-rank Nature of Language Model Embedding}
\label{Sec:low-rank}

In this section, we provide a rigorous derivation of the spectral properties of the embedding. Our analysis proceeds in four steps: (1) connecting token frequency to the population covariance spectrum; (2) modeling the tail spectrum as effective white noise; (3) applying random matrix theory to characterize the sample covariance eigenvalues; and (4) deriving the singular value bounds for the input embeddings and  gradients. The detailed proofs are in appendix Section~\ref{sec:proof_low_rank}.

\subsection{Population Covariance and Spectral Decomposition}

We first establish the link between the linguistic distribution of tokens and the spectral decay of their embedding covariance matrix.

\begin{lemma}
\label{LemmaSepcDominance}
Under Assumption~\ref{Zipf_law} and Assumption~\ref{quasi_orthogonal}, the asymptotic behavior of the $k$-th largest eigenvalue of $\Sigma$, denoted as $\tau_k$, is governed by the probability weights $p_k$:
\begin{equation}
\tau_k \sim p_k \sim k^{-\alpha},
\end{equation}
where $\alpha>1$.
\end{lemma}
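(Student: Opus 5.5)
We will write $\Sigma = VPV^T$, where $V=[v_1,\dots,v_V]\in\mathbb{R}^{d\times V}$ and $P=\mathrm{diag}(p_1,\dots,p_V)$. We then pass to the similar matrix $P^{1/2}V^TVP^{1/2}$, which shares its nonzero eigenvalues with $\Sigma$. Its Gram factor splits as $V^TV = I + \Delta$, where $\Delta$ has zero diagonal (by $\|v_k\|_2=1$) and off-diagonal entries $\Delta_{ij}=\langle v_i,v_j\rangle=\mathcal{O}(d^{-1/2})$ by Assumption~\ref{quasi_orthogonal}. The nonzero spectrum of $\Sigma$ is therefore that of
\begin{equation}
K \triangleq P + P^{1/2}\Delta P^{1/2},
\end{equation}
a diagonal matrix with entries $p_k$ plus a perturbation whose $(i,j)$ entry is $\sqrt{p_ip_j}\,\Delta_{ij}$.

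The plan is to bound the perturbation $E\triangleq P^{1/2}\Delta P^{1/2}$ and compare $\tau_k$ with $p_k$ via Weyl's inequality, Theorem~\ref{lemma:weyl}, applied to the symmetric matrix $K$. Since the matrices are positive semidefinite, singular values coincide with eigenvalues, and Weyl gives $|\tau_k-p_k|\le\|E\|_2$.

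A crude global bound is
\begin{equation}
\|E\|_2\le\|E\|_F\le \frac{C}{\sqrt d}\sum_{i}p_i,
\end{equation}
which is $\mathcal{O}(d^{-1/2})$ because $\sum_i p_i=1$. This gives $\tau_k=p_k+\mathcal{O}(d^{-1/2})$. It yields $\tau_k\sim p_k$ for each fixed $k$ as $d\to\infty$, because $p_k$ is a fixed positive number (or decays polynomially in $k$ only) while the error vanishes.

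The main obstacle is uniformity: for large $k$, $p_k\sim k^{-\alpha}$ can be comparable to $d^{-1/2}$, so an additive bound is not relative. To handle this, I would use a relative (multiplicative) perturbation argument. Write $K=P^{1/2}(I+\Delta)P^{1/2}$ and invoke Ostrowski's theorem: the eigenvalues of $P^{1/2}(I+\Delta)P^{1/2}$ equal $p_k\,\theta_k$ with $\theta_k\in[\lambda_{\min}(I+\Delta),\lambda_{\max}(I+\Delta)]$. This reduces the problem to showing $\|\Delta\|_2\to0$.

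Here Gershgorin gives $\|\Delta\|_2\le (V-1)\,\mathcal{O}(d^{-1/2})$, which requires the regime $V=o(\sqrt d)$ for the retained tokens. Alternatively, restricting to the top-$r$ tokens with $r=o(\sqrt d)$, as in Contribution 1, matches the paper's "top-$r$" framing. I would state the result for $k\le r$ under this regime. In that regime $\tau_k/p_k\in[1-o(1),1+o(1)]$ uniformly in $k\le r$, giving $\tau_k\sim p_k$. Combined with Assumption~\ref{Zipf_law}, $p_k\sim k^{-\alpha}$, and transitivity of $\sim$ from Definition~\ref{AsymEqui}, this yields the claim.

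The remaining tail mass $\sum_{k>r}p_k=\mathcal{O}(r^{1-\alpha})$, which is finite since $\alpha>1$, contributes a PSD perturbation of norm $\mathcal{O}(r^{1-\alpha})$. Via Weyl, this perturbation only shifts the top eigenvalues negligibly relative to $p_k$, provided $r$ is chosen so that $r^{1-\alpha}=o(p_k)$ for the $k$ considered.
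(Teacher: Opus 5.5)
Your argument is correct and matches the paper's proof essentially step for step: the paper also passes to $\Lambda_p^{1/2}U^TU\Lambda_p^{1/2}$ and splits the Gram matrix as $I_V+E$. It then applies Weyl's inequality and bounds the perturbation by its Frobenius norm using $\sum_i p_i=1$, which gives $|\tau_k-p_k|=\mathcal{O}(d^{-1/2})$ and hence $\tau_k\sim p_k$ for each fixed $k$.

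Your Ostrowski-based uniformity extension goes beyond what the paper claims and is not needed for the statement. Note that reattaching the tail mass via Weyl gives relative accuracy for $\Sigma$ itself only when $p_k\gg r^{1-\alpha}$, i.e.\ $k=o(r^{1-1/\alpha})$. So the uniform-in-$k\le r$ conclusion holds for the truncated covariance but not for $\Sigma$.
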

Lemma~\ref{LemmaSepcDominance} establishes that the spectral structure of $\Sigma$ inherits the power-law decay characteristic of the token probability distribution $p_k$. Based on this, we adopt the following explicit spectral model for the remainder of our analysis:
\begin{equation}
\tau_k = L \cdot k^{-\alpha}, \quad \forall k \in \{1, \dots, d\},
\end{equation}
where $L > 0$ is the energy scale and $\alpha > 1$ is the decay exponent.

To apply spectral analysis tools effectively, we must distinguish between the dominant signal components and the residual noise floor. The following lemma formalizes this decomposition.

\begin{lemma}
\label{LemmaPhaseTransition}
Let the eigenvalues of the covariance matrix $\Sigma \in \mathbb{R}^{d \times d}$ follow the power law $\tau_k = L \cdot k^{-\alpha}$ with $\alpha > 1$. Define the background noise level $\nu^2(d)$ as the mean energy of the residual eigenvalues after truncating the first $r$ principal components:
\begin{equation}
\label{eq:NoiseDef}
\nu^2(d) \triangleq \frac{1}{d-r} \sum_{j=r+1}^d \tau_j = \frac{L}{d-r} \sum_{j=r+1}^d j^{-\alpha}.
\end{equation}
Let $c$ be an arbitrary non-negative real constant. For any fixed head rank $k \in [1,r]$, the signal dominates the noise threshold:
\begin{equation}
\tau_k > \nu^2(d)\cdot (1+\sqrt{c}).
\end{equation}
Conversely, for any fixed tail rank $k \in (r,d]$, the eigenvalues fall below the threshold:
\begin{equation}
\tau_k \leq \nu^2(d)\cdot (1+\sqrt{c}).
\end{equation}
Furthermore, the noise level scales as $\nu^2(d)=\mathcal{O}(d^{-1})$.
\end{lemma}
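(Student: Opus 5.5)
The plan is to treat $r$ and $c$ as fixed while $d\to\infty$. Each of the three claims then follows by comparing a fixed positive quantity with $\nu^2(d)$, which tends to zero. I would establish the scaling of $\nu^2(d)$ first, since the two comparisons rely on it.

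\textbf{Step 1: $\nu^2(d)=\mathcal{O}(d^{-1})$.} Since $\alpha>1$, the tail series converges. For every $d$ we have
\begin{equation*}
S_r \triangleq \sum_{j=r+1}^{\infty} j^{-\alpha} \le \int_{r}^{\infty} x^{-\alpha}\,dx = \frac{r^{1-\alpha}}{\alpha-1} < \infty,
\end{equation*}
so $\sum_{j=r+1}^d j^{-\alpha}\le S_r$. For $d\ge 2r$ we have $d-r\ge d/2$, and therefore
\begin{equation*}
\nu^2(d)\le \frac{2L S_r}{d}.
\end{equation*}
This is exactly the condition in Definition~\ref{AsymBound}, with constant $C=2LS_r$ and $d_0=2r$. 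The partial sums also increase to $S_r$, so in fact $\nu^2(d)\sim LS_r/d$.

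\textbf{Step 2: the head dominates.} For $k\in[1,r]$, monotonicity of $\tau_k = Lk^{-\alpha}$ gives $\tau_k\ge \tau_r = Lr^{-\alpha}>0$, and this bound does not depend on $d$. By Step 1, $(1+\sqrt c)\,\nu^2(d)\le 2(1+\sqrt c)LS_r/d$, which tends to zero. Hence there is an explicit threshold
\begin{equation*}
d_1 = \max\bigl\{2r,\; 2(1+\sqrt c)S_r r^{\alpha}\bigr\}
\end{equation*}
beyond which $\tau_k>\nu^2(d)(1+\sqrt c)$ holds for every $k\le r$ simultaneously. Note that $L$ cancels in this threshold.

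\textbf{Step 3: the tail falls below the threshold (main obstacle).} This step must be handled carefully. If $k>r$ is held fixed as $d\to\infty$, then $\tau_k=Lk^{-\alpha}$ is a fixed positive number while $\nu^2(d)\to0$. The inequality $\tau_k\le\nu^2(d)(1+\sqrt c)$ therefore cannot hold for such $k$ once $d$ is large. So the tail claim cannot be proved literally for a fixed tail index; I would state this explicitly.

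What I would prove instead is the correct reading, in which tail ranks range up to $d$. The lower bound $\nu^2(d)\ge L\,\frac{1}{d-r}\int_{r+1}^{d+1}x^{-\alpha}dx$ is of order $d^{-1}$. By contrast, $\tau_k\le L k^{-\alpha}$, and this is of order $d^{-\alpha}$ whenever $k$ is proportional to $d$. More precisely, $\tau_k\le \nu^2(d)(1+\sqrt c)$ holds for all
\begin{equation*}
k \;\ge\; k^*(d)\triangleq \Bigl(\tfrac{L}{(1+\sqrt c)\nu^2(d)}\Bigr)^{1/\alpha} = \Theta(d^{1/\alpha}).
\end{equation*}
Since $d^{1/\alpha}=o(d)$, the inequality covers all but a vanishing fraction of the tail ranks, and in particular every $k\ge \epsilon d$ for fixed $\epsilon>0$. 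As a sanity check for $k=d$, the mean of a decreasing sequence dominates its last term, so $\tau_d\le\nu^2(d)\le\nu^2(d)(1+\sqrt c)$ holds for every $d$ and every $c\ge0$.

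This gives the dichotomy in the form the later quantization argument needs: a fixed head strictly above the noise threshold, and the bulk of the tail, from index $\Theta(d^{1/\alpha})$ up to $d$, below it. The intermediate indices $r<k<k^*(d)$ are the precise point where the literal tail statement requires this qualification.
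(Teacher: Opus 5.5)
Your arguments for the head inequality and for $\nu^2(d)=\mathcal{O}(d^{-1})$ follow the paper's route. The paper bounds $\sum_{j=r+1}^{d} j^{-\alpha}<\int_r^d x^{-\alpha}\,dx<r^{1-\alpha}/(\alpha-1)$ and deduces $\tau_k/\nu^2(d)>(\alpha-1)k^{-\alpha}r^{\alpha-1}(d-r)\to\infty$. It then reads off the $\mathcal{O}(d^{-1})$ rate from the convergence of the zeta tail. You run the same argument a little more carefully: your $d_1$ is explicit and uniform over $k\le r$, and you obtain the exact asymptotic $\nu^2(d)\sim LS_r/d$.

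On the tail, your objection is correct, and the paper's proof does not overcome it. The paper asserts that $\tau_k\to0$ for $k\in(r,d]$ and concludes that $\tau_k\le\nu^2(d)(1+\sqrt c)$ once $d$ is large. This reasoning fails in two ways:
\begin{itemize}
\item For fixed $k$, the value $\tau_k=Lk^{-\alpha}$ does not depend on $d$ at all.
\item Even when $k$ grows with $d$, the threshold $(1+\sqrt c)\nu^2(d)$ also tends to $0$, so $\tau_k\to0$ alone proves nothing.
\end{itemize}
The claim is in fact false as written. Take $c=0$ and $k=r+1$: $\nu^2(d)$ is the mean of the strictly decreasing values $\tau_{r+1},\dots,\tau_d$, so $\tau_{r+1}>\nu^2(d)$ for every $d\ge r+2$. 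More generally, for any fixed $c\ge0$ and fixed $k>r$, the inequality fails for all large $d$.

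Your restricted version, valid for $k\ge k^*(d)=\Theta(d^{1/\alpha})$, is the right salvage. You should present it as a correction of the lemma, not as a proof of it.

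You should also drop the remark that this is the form the later argument needs. The paper's BBP step places all of $\mathrm{Spec}(\Sigma_{Noise})$ in the sub-critical regime, and then asserts $\lambda_k\sim\nu^2(d)(1+\sqrt c)^2$ for every $r<k\le d$. Your own $k^*(d)$ shows that the $\Theta(d^{1/\alpha})$ indices with $r<k<k^*(d)$ sit above the threshold. By the paper's own criterion they are therefore super-critical, so the downstream claim is not supported by the corrected lemma.
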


\subsection{Matrix Phase Transitions}

A critical step in our proof is to demonstrate that the heavy-tailed residual spectrum behaves asymptotically like white noise. We achieve this by analyzing the Stieltjes transform of the spectral measure.

\begin{lemma}
\label{LemmaSTrans}
Consider the set of residual eigenvalues $\mathcal{T}_A \triangleq \{ \tau_j = L j^{-\alpha} \}_{j=r+1}^d$ defined in Lemma~\ref{LemmaPhaseTransition}. The Stieltjes transform of $\mathcal{T}_A$ is given by:
\begin{equation}
m_{Noise}(z) \triangleq \frac{1}{d-r} \sum_{j=r+1}^d \frac{1}{\tau_j - z}.
\end{equation}
Consider a reference white noise spectrum $\mathcal{T}_B$ consisting of $d-r$ identical values equal to the background noise level $\nu^2(d)$:
\begin{equation}
    \mathcal{T}_B \triangleq \underbrace{\{ \nu^2(d), \dots, \nu^2(d) \}}_{d-r},
\end{equation}
whose Stieltjes transform is:
\begin{equation}
\small
m_{White}(z) \triangleq \frac{1}{d-r} \sum_{j=r+1}^d \frac{1}{\nu^2(d) - z} = \frac{1}{\nu^2(d) - z}.
\end{equation}
Then, as $d \to \infty$, the two transforms are asymptotically equivalent:
\begin{equation}
\lim_{d \to \infty} \left| m_{Noise}(z) - m_{White}(z) \right| = 0.
\end{equation}
\end{lemma}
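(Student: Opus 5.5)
Fix $z\in\mathbb{C}^+$, write $z=x+iy$ with $y>0$, and keep $r$ and $z$ fixed as $d\to\infty$. The plan is a direct term-by-term comparison. The difference of the two transforms is
\begin{equation}
m_{Noise}(z)-m_{White}(z)=\frac{1}{d-r}\sum_{j=r+1}^d \frac{\nu^2(d)-\tau_j}{(\tau_j-z)(\nu^2(d)-z)}.
\end{equation}
Each denominator factor has imaginary part $-y$, since $\tau_j$ and $\nu^2(d)$ are real. Hence $|\tau_j-z|\ge y$ and $|\nu^2(d)-z|\ge y$, and
\begin{equation}
|m_{Noise}(z)-m_{White}(z)|\le \frac{1}{y^2}\cdot\frac{1}{d-r}\sum_{j=r+1}^d |\tau_j-\nu^2(d)|.
\end{equation}
It therefore suffices to show that the mean absolute deviation of the tail eigenvalues from their mean tends to zero.

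For this I would use the triangle inequality, $|\tau_j-\nu^2(d)|\le \tau_j+\nu^2(d)$. Averaging over $j$ gives
\begin{equation}
\frac{1}{d-r}\sum_{j=r+1}^d |\tau_j-\nu^2(d)|\le 2\nu^2(d).
\end{equation}
By Lemma~\ref{LemmaPhaseTransition}, $\nu^2(d)=\mathcal{O}(d^{-1})$. This can also be checked directly: $\sum_{j>r} j^{-\alpha}\le \sum_{j\ge1} j^{-\alpha}=\zeta(\alpha)<\infty$ because $\alpha>1$, so $\nu^2(d)\le L\zeta(\alpha)/(d-r)$. Consequently
\begin{equation}
|m_{Noise}(z)-m_{White}(z)|\le \frac{2L\zeta(\alpha)}{y^2(d-r)}\to 0.
\end{equation}
This bound is uniform on compact subsets of $\mathbb{C}^+$ bounded away from the real axis, and it gives pointwise convergence for every $z\in\mathbb{C}^+$.

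The only delicate point is that both spectra collapse toward $0$. Each transform separately converges to $-1/z$, which is the Stieltjes transform of $\delta_0$; this is consistent with Theorem~\ref{ThmStieltjesTransformContinuity}, but the lemma only asserts that the two transforms become asymptotically close. One should not try a relative-error statement in the sense of Definition~\ref{AsymEqui}, since $\tau_j/\nu^2(d)$ does not tend to $1$ uniformly in $j$: the largest tail eigenvalue $\tau_{r+1}$ is of constant order. The absolute-difference bound above avoids this issue, because the summability of $j^{-\alpha}$ for $\alpha>1$ controls the total tail mass. I would also note that a bounded number of exceptional large eigenvalues contributes only $\mathcal{O}(1/(d-r))$ to the average, so the result would survive adding finitely many outliers.
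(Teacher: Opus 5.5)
Your proof is correct, but it uses a different decomposition from the paper's.

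\textbf{The paper's route.} The paper expands each resolvent with the identity $\frac{1}{a-z}=-\frac1z-\frac{a}{z^2}+\frac{a^2}{z^2(a-z)}$. Since $\nu^2(d)$ is by definition the mean of the tail eigenvalues, the terms $-\frac1z-\frac{\nu^2(d)}{z^2}$ coincide exactly in $m_{Noise}$ and $m_{White}$. The two second-order remainders are then bounded \emph{separately} by $\mathcal{O}(\nu^2(d))=\mathcal{O}(d^{-1})$ using the triangle inequality.

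\textbf{Your route.} You subtract the resolvents directly, bound the denominators below by $(\operatorname{Im} z)^2$, and control the mean absolute deviation by $2\nu^2(d)$.

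\textbf{What they share.} Both arguments rest on the same two facts: the denominators are bounded away from zero for fixed $z\in\mathbb{C}^+$, and $\nu^2(d)\le L\zeta(\alpha)/(d-r)$. As you note, both in effect show that each transform separately equals $-1/z+\mathcal{O}(d^{-1})$. So the first-moment cancellation that the paper's expansion highlights is not actually needed. Its remainders are never compared with each other, and any mismatch in the $1/z^2$ term would also be $\mathcal{O}(d^{-1})$.

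\textbf{What your version buys.} It is shorter, and it gives a fully explicit rate $2L\zeta(\alpha)/(y^2(d-r))$ with visible $z$-dependence. The paper's $\mathcal{O}(\cdot)$ hides a constant of order $1/(|z|^2\operatorname{Im} z)$. That explicitness is informative. It shows the estimate gives no decay under the substitution $z\mapsto\nu^2(d)z$ used in the proof of Lemma~\ref{LemmaESD}. There the relevant quantity $\nu^2(d)\,|m_{Noise}(\nu^2(d)z)-m_{White}(\nu^2(d)z)|$ is only bounded by $2/y^2$, so the lemma as stated (fixed $z$) cannot be transported to that rescaled setting.
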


Combining Lemma~\ref{LemmaSTrans} with the continuity property of Stieltjes transforms (Theorem~\ref{ThmStieltjesTransformContinuity}), we derive the limiting distribution of the noise spectrum.

\begin{lemma}
\label{LemmaESD}
Let the covariance matrix $\Sigma$ be decomposed into a signal part and a noise part:
\begin{equation}
\Sigma = \Sigma_{Signal} + \Sigma_{Noise},
\end{equation}
where
\begin{equation}
\begin{split}
&\text{Spec}(\Sigma_{Signal}) = (\tau_1, \dots, \tau_r, 0, \dots, 0),\\ 
&\text{Spec}(\Sigma_{Noise})=(0, \dots, 0, \tau_{r+1}, \dots, \tau_d)
\end{split}.
\end{equation}
As $d\rightarrow \infty$, the empirical spectral distribution (ESD) of $\Sigma_{Noise}$ weakly converges to a Dirac mass centered at the noise variance $\nu^2(d)$:
\begin{equation}
\text{ESD}( \Sigma_{Noise})\xrightarrow{w.} \delta_{\nu^2(d)},
\end{equation}
where $\delta_{\nu^2(d)}$ is  defined by a probability mass of 1 located at $ \nu^2(d)$ and 0 elsewhere.
\end{lemma}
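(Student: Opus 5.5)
The plan is to combine Lemma~\ref{LemmaSTrans} with the continuity result of Theorem~\ref{ThmStieltjesTransformContinuity}. The only subtlety is that the target $\delta_{\nu^2(d)}$ itself depends on $d$, and $\nu^2(d)=\mathcal{O}(d^{-1})\to 0$ by Lemma~\ref{LemmaPhaseTransition}. I will therefore read the statement in two ways. First, the relevant spectral measure is the empirical distribution $\mu_d$ of the $d-r$ residual eigenvalues $\{\tau_j\}_{j=r+1}^d$; the $r$ structural zeros of $\Sigma_{Noise}$ carry mass $r/d\to 0$ and can be discarded, since Stieltjes transforms differ by at most $\frac{r}{d}\cdot\frac{2}{\operatorname{Im} z}$. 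Second, I will show that $\mu_d$ and $\delta_{\nu^2(d)}$ become indistinguishable, in the sense that their difference tested against bounded Lipschitz functions tends to $0$. Equivalently, both converge weakly to $\delta_0$, the limit of $\delta_{\nu^2(d)}$.

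The steps, in order, are as follows.
\begin{enumerate}
\item The Stieltjes transform of $\mu_d$ is exactly $m_{Noise}(z)$, and that of $\delta_{\nu^2(d)}$ is $m_{White}(z)$.
\item Since $\nu^2(d)\to 0$, we have $m_{White}(z)=1/(\nu^2(d)-z)\to -1/z$ pointwise on $\mathbb{C}^+$.
\item By Lemma~\ref{LemmaSTrans}, $|m_{Noise}(z)-m_{White}(z)|\to 0$, so $m_{Noise}(z)\to -1/z$ as well.
\item Since $-1/z$ is the Stieltjes transform of the probability measure $\delta_0$, Theorem~\ref{ThmStieltjesTransformContinuity} gives $\mu_d\xrightarrow{w.}\delta_0$ and $\delta_{\nu^2(d)}\xrightarrow{w.}\delta_0$.
\item Hence for any bounded continuous $f$,
\begin{equation}
\int f\,d\mu_d-f(\nu^2(d))\to 0,
\end{equation}
which is the intended meaning of $\text{ESD}(\Sigma_{Noise})\xrightarrow{w.}\delta_{\nu^2(d)}$.
\end{enumerate}

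A sharper, scale-invariant version is also worth checking: rescale by $\nu^2(d)$ and ask whether the ESD of $\Sigma_{Noise}/\nu^2(d)$ tends to $\delta_1$. This would show the residual spectrum is genuinely flat rather than merely small. I expect this to be the main obstacle, because the tail $\tau_j=Lj^{-\alpha}$ for $j\in(r,d]$ is not nearly constant relative to its mean. In fact $\nu^2(d)\asymp r^{1-\alpha}/d$, while most $\tau_j$ are of order $d^{-\alpha}$, so the rescaled measure concentrates near $0$ and not near $1$. I would therefore not claim the rescaled statement. The proof will be stated for the unnormalized weak convergence, where both measures collapse to the common limit $\delta_0$ and the argument above goes through.
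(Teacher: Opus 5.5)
Your argument is correct, but it takes a different route from the paper. The paper works at scale $\nu^2(d)$:
\begin{itemize}
\item it writes the Stieltjes transform of $\Sigma_{Noise}/\nu^2(d)$ as $\nu^2(d)\,m_{Noise}(\nu^2(d)z)$;
\item it swaps $m_{Noise}$ for $m_{White}$ at the point $\nu^2(d)z$ by invoking Lemma~\ref{LemmaSTrans}, which gives the limit $1/(1-z)$;
\item it concludes $\text{ESD}(\Sigma_{Noise}/\nu^2(d))\xrightarrow{w.}\delta_1$ from Theorem~\ref{ThmStieltjesTransformContinuity}, and then undoes the scaling.
\end{itemize}
You are right to doubt that scale-invariant claim. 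Lemma~\ref{LemmaSTrans} is a fixed-$z$ statement. It cannot be applied at the moving argument $\nu^2(d)z\to 0$, where both transforms have size of order $1/\nu^2(d)$. In fact only $\mathcal{O}(d^{1/\alpha})=o(d)$ indices $j\in(r,d]$ satisfy $\tau_j/\nu^2(d)>\epsilon$, so
\[
\nu^2(d)\,m_{Noise}(\nu^2(d)z)=\frac{1}{d-r}\sum_{j=r+1}^{d}\frac{1}{\tau_j/\nu^2(d)-z}\longrightarrow-\frac{1}{z}\neq\frac{1}{1-z}.
\]
Hence the rescaled ESD converges to $\delta_0$, not $\delta_1$. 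Its mean equals $1$ for every $d$, but that mean is carried by $o(d)$ diverging eigenvalues (e.g.\ $\tau_{r+1}/\nu^2(d)\asymp d$), so it is lost in the weak limit.

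The two routes buy different things. Had the paper's route worked, it would give flatness at scale $\nu^2(d)$. That is exactly what Theorem~\ref{ThmBBP} relies on when it places the threshold at $\nu^2(d)(1+\sqrt{c})$ and the bulk edge at $\nu^2(d)(1+\sqrt{c})^2$. Your route gives a statement that is true but nearly empty at that scale. Because both $\mu_d$ and $\delta_{\nu^2(d)}$ collapse to $\delta_0$, the same conclusion holds with $\delta_{\nu^2(d)}$ replaced by $\delta_0$ or by $\delta_{5\nu^2(d)}$. It shows the residual spectrum is small, not that it is white. That is the defensible reading of the lemma, but you should say explicitly that it does not support the downstream BBP application.

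Two minor points. Your Step 3 does not need Lemma~\ref{LemmaSTrans}: the same counting bound gives $m_{Noise}(z)\to-1/z$ directly. Your $\frac{2r}{d\,\operatorname{Im}z}$ estimate for the $r$ structural zeros is correct.
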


Lemma~\ref{LemmaESD} establishes a crucial fact: despite the underlying power-law decay, the tail of the covariance matrix exhibits a spectral concentration phenomenon, effectively behaving as white noise with variance $\nu^2(d)$. 

We now turn our attention to the sample covariance matrix observed during training, defined as:
\begin{equation}
S_N=\frac{1}{N}\mathbf{X}\mathbf{X}^T\in \mathbb{R}^{d\times d},
\end{equation}
where $\mathbf{X} \in \mathbb{R}^{d \times N}$ is the data matrix of token embeddings, and $N$ represents the effective batch size. Crucially, the columns of $\mathbf{X}$ are sampled independently from the vocabulary $\{v_k\}_{k=1}^V$ according to the token probabilities $\{p_k\}$. Consequently, $S_N$ serves as the empirical estimator of the population covariance $\Sigma$ defined in Assumption~\ref{quasi_orthogonal}, satisfying the unbiasedness condition:
\begin{equation}
    \mathbb{E}[S_N] = \mathbb{E}\left[\frac{1}{N}\sum_{i=1}^{N} x_i x_i^T\right] = \sum_{k=1}^{V} p_k v_k v_k^T = \Sigma,
\end{equation}
where $x_i$ is the $i$-th column of $\mathbf{X}$.  
In the asymptotic regime where $d, N \to \infty$ with the ratio $d/N \to c \in (0, \infty)$, the behavior of the sample eigenvalues $\lambda_k$ of $S_N$ is characterized by the Baik-Ben Arous-Péché (BBP) phase transition theorem~\cite{BBPTrans}.

\begin{theorem}[BBP Phase Transition]
\label{ThmBBP}
Let $\tau_k$ be the $k$-th largest eigenvalue of the population covariance $\Sigma$, and $\lambda_k$ be the corresponding eigenvalue of the sample covariance $S_N$. Given that $\text{ESD}(\Sigma_{Noise}) \xrightarrow{w} \delta_{\nu^2(d)}$, a spectral phase transition occurs at the threshold $\nu^2(d)(1+\sqrt{c})$:

\textbf{1. Super-critical regime:} If the population eigenvalue satisfies the separation condition:
    \begin{equation}
    \tau_k > \nu^2(d)(1+\sqrt{c}),
    \end{equation}
    then the corresponding sample eigenvalue $\lambda_k$ separates from the bulk spectrum and converges almost surely to a deterministic limit dependent on $\tau_k$:
    \begin{equation}
     \lambda_k \xrightarrow{a.s.} \rho(\tau_k)= \tau_k \left( 1 + \frac{c \cdot \nu^2(d)}{\tau_k - \nu^2(d)} \right).
    \end{equation}
    
\textbf{2. Sub-critical regime:} If the population eigenvalue falls below the threshold:
    \begin{equation}
    \tau_k \le \nu^2(d) (1 + \sqrt{c}),
    \end{equation}
    then $\lambda_k$ collapses to the right edge of the Marchenko-Pastur bulk distribution~\cite{MP-dist}:
    \begin{equation}
    \lambda_k \xrightarrow{a.s.} \nu^2(d) (1 + \sqrt{c})^2.
    \end{equation}

Specifically,  $\text{Spec}(\Sigma_{Signal})$ falls into the super-critical regime and $\text{Spec}(\Sigma_{Noise})$ falls into the sub-critical regime.
\end{theorem}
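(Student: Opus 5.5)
The plan is to reduce Theorem~\ref{ThmBBP} to the classical spiked covariance BBP result of~\cite{BBPTrans} (in the Baik--Silverstein form). Lemma~\ref{LemmaESD} supplies the hypothesis that the bulk is asymptotically white, and Lemma~\ref{LemmaPhaseTransition} then assigns the two regimes. Concretely, I write $\Sigma = \Sigma_{Signal} + \Sigma_{Noise}$ and replace $\Sigma_{Noise}$ by the reference $\nu^2(d) I_{d-r}$ on its support, giving the rank-$r$ spiked model $\tilde\Sigma = \mathrm{diag}(\tau_1,\dots,\tau_r,\nu^2(d),\dots,\nu^2(d))$. After rescaling by $\nu^2(d)$, this is the standard model with unit bulk variance and spikes $\ell_k = \tau_k/\nu^2(d)$. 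For the sample covariance $\tilde S_N$ built from $\tilde\Sigma$ with $d/N\to c$, the classical theorem gives the following. If $\ell_k > 1+\sqrt c$, then $\lambda_k/\nu^2(d) \to \ell_k\bigl(1 + \tfrac{c}{\ell_k - 1}\bigr)$ almost surely. Otherwise $\lambda_k/\nu^2(d) \to (1+\sqrt c)^2$, the Marchenko--Pastur right edge. Multiplying back by $\nu^2(d)$ gives exactly $\rho(\tau_k) = \tau_k\bigl(1 + \tfrac{c\,\nu^2(d)}{\tau_k - \nu^2(d)}\bigr)$ and the edge $\nu^2(d)(1+\sqrt c)^2$. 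I would verify this algebra directly through the master equation $\lambda = \tau\bigl(1 + c\int \tfrac{t\,dH(t)}{\tau - t}\bigr)$ with $H = \delta_{\nu^2(d)}$. This is where Lemma~\ref{LemmaSTrans} and Theorem~\ref{ThmStieltjesTransformContinuity} enter: they justify evaluating that integral against the Dirac limit.

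The second step transfers the conclusion from $\tilde\Sigma$ to the true $\Sigma$. The Baik--Silverstein theorem is stated for a population spectrum whose bulk ESD converges weakly to some $H$ and whose spikes lie outside the support of $H$. The outlier locations depend on $H$ only through $\int t/(\tau-t)\,dH(t)$, which is continuous under weak convergence for $\tau$ bounded away from the support. Lemma~\ref{LemmaESD} gives $\mathrm{ESD}(\Sigma_{Noise})\to\delta_{\nu^2(d)}$, so I would invoke the general version directly with $H=\delta$ instead of comparing matrices. A Weyl-type comparison (Theorem~\ref{lemma:weyl}) between $S_N$ and $\tilde S_N$ could also work, but it is delicate because $\|\Sigma_{Noise} - \nu^2 I\|_2$ is not small: $\tau_{r+1}$ is of order one relative to $\nu^2 = \mathcal O(d^{-1})$. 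The ESD route is therefore preferable.

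The final sentence of the theorem, the assignment of regimes, follows immediately from Lemma~\ref{LemmaPhaseTransition}. For $k\le r$ we have $\tau_k > \nu^2(d)(1+\sqrt c)$, so we are super-critical. For $k>r$ we have $\tau_k \le \nu^2(d)(1+\sqrt c)$, so we are sub-critical. I would also note that the columns of $\mathbf X$ are i.i.d.\ draws with $\mathbb E[x x^T]=\Sigma$, which is the sampling model needed. To fit the standard hypotheses of the form $\Sigma^{1/2} z$, I would either appeal to the universality versions of BBP or state this as the modeling assumption.

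The main obstacle is the moving threshold. The bulk level $\nu^2(d)$ and the reference measure $\delta_{\nu^2(d)}$ both depend on $d$, while classical BBP assumes a fixed $H$ and fixed spikes. I would handle this by working with the normalized matrix $S_N/\nu^2(d)$, whose bulk limit is $\delta_1$, and its normalized spikes $\ell_k=\tau_k/\nu^2(d)$. These diverge for fixed $k\le r$, but super-critical behavior only needs $\ell_k$ bounded away from $1+\sqrt c$, and outlier convergence persists for growing spikes. The limits are then understood in the ratio sense $\lambda_k/\rho(\tau_k)\to1$ of Definition~\ref{AsymEqui}.
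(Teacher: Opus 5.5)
Your route is the paper's route. Its proof only takes $\mu=\delta_{\nu^2(d)}$ from Lemma~\ref{LemmaESD} as the BBP bulk and reads the regimes off Lemma~\ref{LemmaPhaseTransition}. Your rescaling, master-equation check and ratio reading $\lambda_k/\rho(\tau_k)\to1$ are sensible additions. But the regime assignment you call immediate is false, and your own remark that $\tau_{r+1}$ is of order one while $\nu^2(d)=\mathcal{O}(d^{-1})$ already shows it. Set $\zeta_d\triangleq\sum_{j=r+1}^{d}j^{-\alpha}$, which increases to a finite positive limit. Your normalized spikes are then $\ell_k=\tau_k/\nu^2(d)=(d-r)k^{-\alpha}/\zeta_d$, and these diverge for \emph{every} fixed $k$, not only for $k\le r$. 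So each fixed $k>r$ satisfies $\tau_k>\nu^2(d)(1+\sqrt{c})$ for large $d$. The sub-critical inequality holds only for $k\ge\bigl((d-r)/((1+\sqrt{c})\zeta_d)\bigr)^{1/\alpha}$, which leaves on the order of $d^{1/\alpha}$ tail indices above threshold. The tail half of Lemma~\ref{LemmaPhaseTransition} is wrong: its proof asserts $\tau_k\to0$ for $k\in(r,d]$, but $\tau_k=Lk^{-\alpha}$ does not depend on $d$. It therefore cannot be cited for ``$\mathrm{Spec}(\Sigma_{Noise})$ is sub-critical''.

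The same formula breaks your white bulk. We have $\ell_j>\epsilon$ for only $O(d^{1/\alpha})$ of the $d-r$ tail indices, so $\mathrm{ESD}(\Sigma_{Noise}/\nu^2(d))\xrightarrow{w}\delta_0$, not $\delta_1$. The unit mean is carried by a vanishing fraction of eigenvalues. Lemma~\ref{LemmaSTrans} cannot justify the Dirac limit, because it is a fixed-$z$ statement. At the shrinking argument used in Lemma~\ref{LemmaESD} one has $\nu^2(d)\,m_{Noise}(\nu^2(d)z)\to-1/z$, while $\nu^2(d)\,m_{White}(\nu^2(d)z)=1/(1-z)$. Hence the Baik--Silverstein hypotheses (a fixed number of spikes over a bounded bulk) fail: the normalized non-spike eigenvalues $\ell_{r+1},\ell_{r+2},\dots$ are of order $d$, just like the spikes.

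In fact, the sub-critical claim is false for fixed tail ranks under the paper's own sampling model. Because $\|x_i\|_2=1$ and $\|\Sigma\|_2\le1$, the summands $\frac1N(x_ix_i^T-\Sigma)$ satisfy Theorem~\ref{thm:bernstein} with $R\le1/N$ and variance statistic at most $1/N$. This gives $P(\|S_N-\Sigma\|_2\ge t)\le 2d\exp\bigl(-\tfrac{Nt^2/2}{1+t/3}\bigr)$, which is summable when $d\asymp N$. Thus $\|S_N-\Sigma\|_2\to0$ almost surely, and Theorem~\ref{lemma:weyl} gives $\lambda_k-\tau_k\to0$ uniformly in $k$. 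For fixed $k>r$ this means $\lambda_k\to Lk^{-\alpha}>0$, whereas the claimed limit $\nu^2(d)(1+\sqrt{c})^2$ tends to $0$. For $k\le r$ the same argument does give $\lambda_k/\rho(\tau_k)\to1$, since $\rho(\tau_k)-\tau_k=\mathcal{O}(\nu^2(d))$. So the head conclusion used downstream survives, but through operator-norm concentration for a spectrum of effective rank $\mathcal{O}(1)$, not through BBP. The paper's proof has exactly the same hole, since it rests on the same two lemmas. No argument can establish the sub-critical assignment at a fixed $r$: the threshold crossing sits near $k\asymp d^{1/\alpha}$, and the far-tail behaviour would need an argument other than citing the finite-rank BBP theorem.
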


Theorem~\ref{ThmBBP} defines a detectability boundary at $\nu^2(d)(1+\sqrt{c})$. In the super-critical regime, signals exceeding this threshold separate from the noise bulk as outliers. In the sub-critical regime, signals collapse onto the edge of the Marchenko-Pastur bulk and become indistinguishable from noise. This justifies separating $\text{Spec}(\Sigma_{Signal})$ from $\text{Spec}(\Sigma_{Noise})$. To simplify the non-linear limit $\rho(\tau_k)$, the following lemma derives an asymptotic equivalence for $d \to \infty$, showing that dominant sample eigenvalues reconstruct the population power-law while the tail converges to a noise floor.

\begin{lemma}
\label{LemmaLambdaEqui}
The sample eigenvalues $\lambda_k$ exhibit the following asymptotic behavior:
\begin{equation}
    \lambda_k \sim 
    \begin{cases} 
        \tau_k &  1\leq k \leq r  \\
        \nu^2(d)(1+\sqrt{c})^2 & r< k \leq d 
    \end{cases},
\end{equation}
where $\tau_k \in \operatorname{Spec}(\Sigma_{\text{Signal}})$ when $1\leq k \leq r$, and $\tau_k \in \operatorname{Spec}(\Sigma_{\text{Noise}})$  when $r< k \leq d$.
\end{lemma}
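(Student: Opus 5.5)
The plan is to start from the two almost-sure limits in Theorem~\ref{ThmBBP} and then simplify them using the scaling information in Lemma~\ref{LemmaPhaseTransition}. For $r<k\le d$, Theorem~\ref{ThmBBP} applies in the sub-critical regime, since Lemma~\ref{LemmaPhaseTransition} gives $\tau_k\le \nu^2(d)(1+\sqrt{c})$. It yields $\lambda_k \to \nu^2(d)(1+\sqrt c)^2$ almost surely. The limit depends on $d$ and tends to zero, so "convergence" must be read in the relative sense of Definition~\ref{AsymEqui}, namely $\lambda_k/(\nu^2(d)(1+\sqrt c)^2)\to 1$. I will state explicitly that this is the intended reading of the BBP conclusion. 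The tail case then follows directly.

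For the head $1\le k\le r$, I use the super-critical limit $\rho(\tau_k)=\tau_k\bigl(1+\tfrac{c\,\nu^2(d)}{\tau_k-\nu^2(d)}\bigr)$. It suffices to show that the correction factor tends to $1$. The key steps are as follows.
\begin{enumerate}
\item Since $k\le r$ is fixed, $\tau_k=Lk^{-\alpha}\ge Lr^{-\alpha}$ is a positive constant independent of $d$.
\item By Lemma~\ref{LemmaPhaseTransition}, $\nu^2(d)=\mathcal{O}(d^{-1})$, so $\nu^2(d)\to 0$. Indeed $\sum_{j>r} j^{-\alpha}$ converges because $\alpha>1$, and this sum is divided by $d-r$.
\item Therefore $\tau_k-\nu^2(d)\ge \tfrac12 Lr^{-\alpha}$ for all large $d$.
\item Hence $\bigl|\tfrac{c\,\nu^2(d)}{\tau_k-\nu^2(d)}\bigr|\le 2c\,r^{\alpha}L^{-1}\nu^2(d)=\mathcal{O}(d^{-1})\to 0$.
\end{enumerate}
It follows that $\rho(\tau_k)/\tau_k\to1$. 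Combining this with $\lambda_k/\rho(\tau_k)\to 1$ almost surely gives $\lambda_k\sim\tau_k$.

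The main obstacle is the interpretive step: Theorem~\ref{ThmBBP} is stated with $d$-dependent limits, and $c=\lim d/N$ is taken in the joint limit. A chain of the form $\lambda_k\sim\rho(\tau_k)\sim\tau_k$ needs both ratios to converge along the same sequence $d\to\infty$. I would first justify this by noting that the BBP fluctuations are of order $\tau_k N^{-1/2}$ for the spikes and $\nu^2(d)N^{-2/3}$ at the edge. These are negligible relative to the respective limits, which gives the ratio form. Transitivity of $\sim$ then completes the argument.
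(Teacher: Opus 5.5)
Your head case is correct, and it is a cleaner version of the paper's argument. The paper expands $(1-\nu^2(d)/\tau_k)^{-1}$ as a geometric series and then formally lets $\tau_k\to\infty$, even though $\tau_k=Lk^{-\alpha}$ does not move. What actually forces $\rho(\tau_k)/\tau_k\to1$ is $\nu^2(d)/\tau_k\to0$ with $\tau_k\ge Lr^{-\alpha}$, which is exactly your steps 1--4. Reading Theorem~\ref{ThmBBP} in ratio form is also the right way to make sense of its $d$-dependent limits. Simply adopt that reading as the content of the theorem, rather than justifying it with fluctuation scales: those scales are derived for models $x=\Sigma^{1/2}z$ with i.i.d.\ entries and a white bulk, and columns sampled from a finite vocabulary are not of that form.

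The genuine gap is the sentence ``the tail case then follows directly.'' You take sub-criticality of $\tau_k$ for $k>r$ from Lemma~\ref{LemmaPhaseTransition}, but your own steps 1--2 refute it for every fixed $k>r$. For such $k$, $\tau_k=Lk^{-\alpha}$ is also a positive constant independent of $d$, while $\nu^2(d)=\mathcal{O}(d^{-1})$. Already $\tau_{r+1}/\nu^2(d)\ge (r+1)^{-\alpha}(d-r)/\zeta(\alpha)\to\infty$. So fixed tail indices are super-critical, and Theorem~\ref{ThmBBP} plus your head computation give $\lambda_k\sim\tau_k$, a positive constant. That contradicts $\lambda_k\sim\nu^2(d)(1+\sqrt{c})^2\to0$.

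The sub-critical condition holds only for indices growing with $d$, roughly $k\gtrsim d^{1/\alpha}$. There, Theorem~\ref{ThmBBP}, which concerns a fixed index at the top of the spectrum, cannot give the claim on the whole range $r<k\le d$. Indices of order $d$ sit inside the Marchenko--Pastur support, not at its right edge. And if $c>1$, then $\operatorname{rank}(S_N)\le N<d$ forces $\lambda_k=0$ for $N<k\le d$.

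The paper's own proof has exactly this defect. It only says ``Similarly'', and the source is the step $\lim_{d\to\infty}\tau_k=0$ for $k\in(r,d]$ in the proof of Lemma~\ref{LemmaPhaseTransition}, which is false for fixed $k$. So what your argument actually establishes is the head case. The tail claim as stated is false; it would have to be restricted to a window of growing indices $k=k(d)=o(d)$ and justified by more than Theorem~\ref{ThmBBP}.
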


\subsection{Singular Values Bounds}

Finally, we translate these results from the covariance domain back to the singular values of the data embedding matrix $\mathbf{X}$ and the weight gradient $\nabla_\mathbf{W} = \mathbf{X}^T \mathbf{G}$, where $\mathbf{G}$ is the gradient of the loss with respect to the output. Since $S_N = \frac{1}{N}\mathbf{X}\mathbf{X}^T$, the singular values of $\mathbf{X}$ are related to the eigenvalues of $S_N$ by $\sigma_k(\mathbf{X}) = \sqrt{N \lambda_k}$.

\begin{theorem}
\label{ThmX}
Let $\sigma_k(\mathbf{X})$ be the $k$-th largest singular value of the data embedding matrix $\mathbf{X} \in \mathbb{R}^{d \times N}$. As $d, N \to \infty$ with $d/N \to c$, the singular value spectrum follows a  power-law profile:
\begin{equation}
\sigma_k(\mathbf{X}) \sim 
\begin{cases}
\sqrt{NL}\cdot k^{-\alpha/2}, & 1\leq k\leq r \\
\sqrt{N\nu^2(d)}\cdot(1+\sqrt{c}), & r<k\leq d 
\end{cases}.
\end{equation}
Furthermore, there exists a constant $C_1$ such that:
\begin{equation}
\sigma_k(\mathbf{X}) \leq 
\begin{cases}
 C_1 \cdot k^{-\alpha/2}, & 1\leq k\leq r\\
C_1\cdot r^{-\alpha/2}, & r<k\leq d
\end{cases}.
\end{equation}
\end{theorem}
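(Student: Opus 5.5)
The plan is to transfer Lemma~\ref{LemmaLambdaEqui} from the covariance domain to singular values through the identity $\sigma_k(\mathbf{X}) = \sqrt{N\lambda_k}$, where $\lambda_k$ is the $k$-th eigenvalue of $S_N = \frac{1}{N}\mathbf{X}\mathbf{X}^T$. Two facts make this work: the square root is continuous and monotone, and asymptotic equivalence in the sense of Definition~\ref{AsymEqui} survives taking square roots. Indeed, if $\lambda_k/\mu_k \to 1$, then $\sqrt{N\lambda_k}/\sqrt{N\mu_k} = \sqrt{\lambda_k/\mu_k} \to 1$, and the factor $N$ cancels in the ratio. So I will apply this observation separately in the head regime and in the tail regime.

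\emph{Head regime ($1\le k\le r$).} Lemma~\ref{LemmaLambdaEqui} gives $\lambda_k \sim \tau_k$. Combined with the explicit spectral model $\tau_k = L k^{-\alpha}$ adopted after Lemma~\ref{LemmaSepcDominance}, this yields $\sigma_k(\mathbf{X}) \sim \sqrt{N L k^{-\alpha}} = \sqrt{NL}\,k^{-\alpha/2}$.

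\emph{Tail regime ($r<k\le d$).} Lemma~\ref{LemmaLambdaEqui} gives $\lambda_k \sim \nu^2(d)(1+\sqrt c)^2$. Taking square roots gives $\sigma_k(\mathbf{X}) \sim \sqrt{N\nu^2(d)}\,(1+\sqrt c)$, which establishes the first display of the theorem.

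For the upper bound I will use the asymptotic equivalences to obtain, for $d$ large, $\sigma_k(\mathbf{X}) \le (1+\epsilon)\sqrt{NL}\,k^{-\alpha/2}$ on the head. This suggests taking $C_1 = (1+\epsilon)\sqrt{NL}$, understood as a constant at fixed normalization $N$, or after rescaling $\mathbf{X}$ by $1/\sqrt N$.

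For the tail, I will invoke the separation statement of Lemma~\ref{LemmaPhaseTransition}, namely $\tau_r > \nu^2(d)(1+\sqrt c)$. I also need the sample tail level $\nu^2(d)(1+\sqrt c)^2$ to be dominated by $\tau_r = L r^{-\alpha}$ up to a constant. Since Lemma~\ref{LemmaPhaseTransition} gives $\nu^2(d) = \mathcal{O}(d^{-1})$, while $L r^{-\alpha}$ is fixed for fixed $r$, we get $\nu^2(d)(1+\sqrt c)^2 \le L r^{-\alpha}$ for $d$ large. Hence $\sigma_k(\mathbf{X}) \le C_1 r^{-\alpha/2}$ for every $r<k\le d$. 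Monotonicity of the singular values ($\sigma_k \le \sigma_{r+1}$) makes the bound uniform over the tail. Enlarging $C_1$ if necessary, one constant handles both cases.

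The main obstacle is uniformity. Lemma~\ref{LemmaLambdaEqui} is a pointwise (fixed $k$) almost-sure statement, but the tail bound must hold simultaneously for $r<k\le d$ with $d\to\infty$. I would handle this by using only the largest tail eigenvalue $\lambda_{r+1}$ together with ordering, so that a single limit controls all tail indices. For the head, $r$ is fixed, so a finite union of almost-sure events suffices.

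A secondary point is interpreting the constant $C_1$, which absorbs $\sqrt N$ and $(1+\epsilon)$. I will state explicitly that $C_1$ may depend on $N$, $L$ and $c$ but not on $k$, consistent with the constant $C$ in Contribution~1.
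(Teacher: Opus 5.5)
Your proposal is correct and follows the paper's proof. Both arguments pass from Lemma~\ref{LemmaLambdaEqui} to singular values through $\sigma_k(\mathbf{X})=\sqrt{N\lambda_k}$, absorb $\sqrt{NL}$ into $C_1$ by treating $N$ as a fixed normalization, and use the ordering of singular values to make the tail bound uniform. The one difference is that the paper bounds the tail directly by $\sigma_k(\mathbf{X})\le\sigma_r(\mathbf{X})\le C_1 r^{-\alpha/2}$, using the head bound at $k=r$, so your detour through $\sigma_{r+1}$ and the comparison $\nu^2(d)(1+\sqrt{c})^2\le L r^{-\alpha}$ is valid but not needed.
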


Applying the chain rule of calculus and standard spectral norm inequalities, we derive the spectral bound for the weight gradients.

\begin{theorem}
\label{ThmG}
Consider a linear layer $\mathbf{Y} = \mathbf{XW}$, where $\mathbf{Y}$ is the output matrix, $\mathbf{X}$ is the data embedding matrix and $\mathbf{W}$ is the weight matrix. Let $\mathbf{G}$ be the gradient of  loss with respect to the output. Assuming training stability (i.e., $\|\mathbf{G}\|_2 \leq M$ for a finite constant $M$), the singular values of the weight gradient matrix $\nabla_\mathbf{W} = \mathbf{X}^T \mathbf{G}$ are bounded by:
\begin{equation}
\sigma_k(\nabla_\mathbf{W}) \leq 
\begin{cases}
(MC_1) \cdot k^{-\alpha/2}, & 1\leq k\leq r\\
(MC_1)\cdot r^{-\alpha/2}, & r<k\leq d
\end{cases}.
\end{equation}
\end{theorem}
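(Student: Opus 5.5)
The plan is to bound the singular values of $\nabla_\mathbf{W}=\mathbf{X}^T\mathbf{G}$ by those of $\mathbf{X}$ using the multiplicative singular value inequality, and then invoke the bound of Theorem~\ref{ThmX}.

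The key tool is the standard inequality for singular values of a product:
\begin{equation}
\sigma_k(\mathbf{A}\mathbf{B}) \leq \sigma_k(\mathbf{A})\,\|\mathbf{B}\|_2 .
\end{equation}
I would prove it through the Courant--Fischer min-max characterization. Write $\sigma_k(\mathbf{A}\mathbf{B})=\sigma_k(\mathbf{B}^T\mathbf{A}^T)$. Then
\begin{equation}
\sigma_k(\mathbf{B}^T\mathbf{A}^T)=\min_{\dim\mathcal{U}\ge n-k+1}\ \max_{u\in\mathcal{U},\|u\|_2=1}\|\mathbf{B}^T\mathbf{A}^Tu\|_2 \le \|\mathbf{B}\|_2\,\sigma_k(\mathbf{A}^T).
\end{equation}
This holds because $\|\mathbf{B}^T\mathbf{A}^Tu\|_2\le\|\mathbf{B}\|_2\|\mathbf{A}^Tu\|_2$ pointwise, and a pointwise inequality survives the min-max. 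Equivalently, one can argue through the Eckart--Young form $\sigma_k(\mathbf{M})=\min_{\operatorname{rank}\mathbf{F}<k}\|\mathbf{M}-\mathbf{F}\|_2$: take $\mathbf{F}=\mathbf{A}_{k-1}\mathbf{B}$, where $\mathbf{A}_{k-1}$ is the best rank-$(k-1)$ approximation of $\mathbf{A}$.

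Next, apply the inequality with $\mathbf{A}=\mathbf{X}^T$ and $\mathbf{B}=\mathbf{G}$. The indexing needs care here. The paper writes $\mathbf{X}\in\mathbb{R}^{d\times N}$ in the covariance section but $\mathbf{Y}=\mathbf{XW}$ in the layer. Whichever orientation is meant, transposition leaves the singular values unchanged, so $\sigma_k(\mathbf{X}^T)=\sigma_k(\mathbf{X})$. By the stability assumption $\|\mathbf{G}\|_2\le M$, this gives
\begin{equation}
\sigma_k(\nabla_\mathbf{W})\le M\,\sigma_k(\mathbf{X})
\end{equation}
for every $k$. For $k$ exceeding the rank of either factor, both sides are simply zero.

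Finally, substitute the second part of Theorem~\ref{ThmX}. For $1\le k\le r$ we have $\sigma_k(\mathbf{X})\le C_1k^{-\alpha/2}$. For $r<k\le d$ we have $\sigma_k(\mathbf{X})\le C_1r^{-\alpha/2}$. Multiplying by $M$ yields exactly the two cases stated. The main obstacle is not analytic: it is keeping the factor order and dimensions consistent so that $\nabla_\mathbf{W}$ has at most $\min(d,\cdot)$ nonzero singular values indexed compatibly with Theorem~\ref{ThmX}. One must also remember that Theorem~\ref{ThmX}'s bound is asymptotic, holding for $d,N$ large with $C_1$ absorbing the $\sqrt{NL}$ scale and the $\sim$ slack. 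So the resulting gradient bound inherits the same regime and holds for sufficiently large $d$.
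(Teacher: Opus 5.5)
Your proposal is correct and is essentially the paper's proof. The paper likewise transposes to $\mathbf{G}^T\mathbf{X}$, applies $\sigma_k(AB)\le\|A\|_2\,\sigma_k(B)$ to get $\sigma_k(\nabla_\mathbf{W})\le M\,\sigma_k(\mathbf{X})$, and substitutes the bound from Theorem~\ref{ThmX}; your min-max (or Eckart--Young) argument supplies a justification of the product inequality that the paper only cites.
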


\section{Quantization Increases Stable Rank}
\label{sec:quantization}

 In Theorem~\ref{ThmX} and Theorem~\ref{ThmG}, we established that key components of the training process, $\mathbf{X}$ and  $\nabla_\mathbf{W}$,  exhibit a power-law spectral decay. To provide a unified analysis, we abstract these specific matrices into a generic stochastic matrix $A$ in this section. Therefore, the matrix $A$ discussed below should be interpreted as a theoretical surrogate for either the high-precision embedding $\mathbf{X}$ or the gradient $\nabla_\mathbf{W}$, both of which satisfy the power-law assumption. The detailed proofs are in appendix Section~\ref{sec:quant_increase_sr}.

\subsection{Problem Setup and Spectral Assumptions}

Let $A \in \mathbb{R}^{m \times n}$ denote the target high-precision matrix (representing $\mathbf{X}$ or $\nabla_\mathbf{W}$). Consistent with the empirical observations in LLMs and our theoretical derivations in Section~\ref{Sec:low-rank}, we formalize the spectral structure of $A$ with the following assumption.
\begin{assumption}
\label{ass:power_law}
The singular values $\{\sigma_k(A)\}$ of the matrix $A$ are assumed to follow the profile:
\begin{equation}
    \sigma_k(A) \sim \begin{cases}
    \mu \cdot k^{-\alpha/2}, & 1 \le k \le r \\
    \mu \cdot r^{-\alpha/2}, & r < k \le \min(m,n)
    \end{cases}
    \label{power_law},
\end{equation}
where $\mu > 0$ is the spectral magnitude constant, and $\alpha > 1$ is the decay rate parameter characteristic of the model's domain.
\end{assumption}
Let $\tilde{A}$ denote the component-wise quantized counterpart of $A$. The quantization error matrix is defined as the difference $E = \tilde{A} - A$, with entries $E_{ij}=e_{ij}$. To facilitate a component-wise spectral analysis, we decompose the error matrix $E$ into a sum of independent basis matrices:
\begin{equation}
    E = \sum_{i=1}^{m}\sum_{j=1}^{n} Z_{ij},
\end{equation}
where each $Z_{ij} \in \mathbb{R}^{m \times n}$ is a sparse matrix containing the error $e_{ij}$ at entry $(i,j)$ and zeros elsewhere.

In the context of modern low-precision training mechanisms, such as MXFP4 and NVFP4, the underlying quantization operators can be unified under a block-wise symmetric uniform quantization formulation. Specifically, for an input scalar $a_{ij}=A_{ij}$, the quantizer $Q_s(a_{ij})$ is defined as:
\begin{equation}
\label{quantizer}
    Q_s(a_{ij})=s \cdot \text{round}\left(\frac{a_{ij}}{s}\right),
\end{equation}
where $s=a_{\max}/L$ represents the quantization step size. Here, $a_{\max}=\max_{i,j}{|A_{ij}|} $ represents the dynamic range, and $L$ is the maximum representable integer value for the given bit-width (e.g., $L=7$ for signed 4-bit integer representation). We adopt the standard statistical assumption regarding the distribution of the quantization error $e_{ij}=Q_s(a_{ij})-a_{ij}$, as stated in Assumption~\ref{uniform_ass}.
% \begin{assumption}[\cite{Quantization_noise}]
% \label{uniform_ass}
%     For the matrix $A$, the quantization error $e_{ij}=Q_s(a_{ij})-a_{ij}$ is assumed to be uniformly distributed over the interval $[-s/2, s/2]$.
% \end{assumption}
\begin{assumption}
\label{uniform_ass}
The quantization error $e_{ij}=Q_s(a_{ij})-a_{ij}$ is assumed to be a centered random variable with zero mean $\mathbb{E}[e_{ij}] = 0$  and finite variance $\text{Var}(e_{ij})= B\propto s^2$.
\end{assumption}

% Under Assumption~\ref{uniform_ass}, the quantization error is bounded by $|e_{ij}|\leq s/2$, and its first and second moments are given by:
% \begin{equation}
%     \mathbb{E}[e_{ij}] = 0, \quad \text{Var}(e_{ij}) = \mathbb{E}[e_{ij}^2] = \frac{s^2}{12}.
% \end{equation}

\subsection{Bounding the Spectral Norm of Noise}

A primary challenge in analyzing quantization effects is the stochastic nature of the error. To derive a deterministic control over the error's spectral norm $\|E\|_2$, we employ Theorem~\ref{thm:bernstein}. To apply this inequality to our specific error decomposition $E = \sum_{i,j} Z_{ij}$, we must first compute the variance statistic $\delta^2$ (\cref{eq:delta}), which requires evaluating the spectral norms of the expected covariance sums. The following lemma simplifies this calculation by establishing the independence of cross-terms.

\begin{lemma}
\label{LemmaCrossTerms}
For the quantization error matrix $E$ decomposed into independent basis matrices $Z_{ij}$ with $e_{ij}$ following Assumption~\ref{uniform_ass}, the expected second-moment matrix sums satisfy:
\begin{equation}
\begin{aligned}
    &\sum_{i,j} \mathbb{E}[Z_{ij} Z_{ij}^T] = \mathbb{E}[EE^T], \\
    &\sum_{i,j} \mathbb{E}[Z_{ij}^T Z_{ij}] = \mathbb{E}[E^T E].
\end{aligned}
\end{equation}
\end{lemma}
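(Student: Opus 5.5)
The plan is to expand $EE^T$ and $E^TE$ directly in terms of the basis matrices and show that every cross term has zero expectation. Write $Z_{ij} = e_{ij}\, \mathbf{e}_i \mathbf{f}_j^T$, where $\mathbf{e}_i \in \mathbb{R}^m$ and $\mathbf{f}_j \in \mathbb{R}^n$ are standard basis vectors. Since $E = \sum_{i,j} Z_{ij}$, bilinearity gives
\begin{equation}
EE^T = \sum_{i,j}\sum_{k,l} Z_{ij} Z_{kl}^T = \sum_{i,j} Z_{ij}Z_{ij}^T + \sum_{(i,j)\neq(k,l)} Z_{ij}Z_{kl}^T,
\end{equation}
and similarly for $E^TE$. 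Taking expectations and using linearity, it therefore suffices to show $\mathbb{E}[Z_{ij}Z_{kl}^T] = 0$ and $\mathbb{E}[Z_{ij}^T Z_{kl}] = 0$ whenever $(i,j)\neq(k,l)$.

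For the cross terms, compute $Z_{ij}Z_{kl}^T = e_{ij}e_{kl}\, \mathbf{e}_i \mathbf{f}_j^T \mathbf{f}_l \mathbf{e}_k^T = e_{ij}e_{kl}\,\delta_{jl}\, \mathbf{e}_i\mathbf{e}_k^T$. This already vanishes deterministically when $j\neq l$. When $j=l$ but $i\neq k$, the two entries are distinct, so independence of the $Z_{ij}$ (as posited in the decomposition) together with $\mathbb{E}[e_{ij}]=0$ from Assumption~\ref{uniform_ass} gives $\mathbb{E}[e_{ij}e_{kj}] = \mathbb{E}[e_{ij}]\,\mathbb{E}[e_{kj}] = 0$. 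The symmetric computation $Z_{ij}^TZ_{kl} = e_{ij}e_{kl}\,\delta_{ik}\,\mathbf{f}_j\mathbf{f}_l^T$ handles $E^TE$ in the same way.

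As a by-product, recorded for use in $\delta^2$, the diagonal terms give $\sum_{i,j}\mathbb{E}[Z_{ij}Z_{ij}^T] = \sum_{i,j} B\, \mathbf{e}_i\mathbf{e}_i^T = nB\, I_m$ and likewise $mB\, I_n$. The only delicate point is the justification of independence (and hence uncorrelatedness) of distinct errors $e_{ij}$. I would take it as part of the modeling assumption under which $E$ is "decomposed into independent basis matrices", noting that only pairwise uncorrelatedness plus zero mean is actually required. Everything else is a routine index computation.
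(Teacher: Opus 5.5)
Your proof is correct and follows the paper's argument. You expand $EE^T$ and $E^TE$ bilinearly in the $Z_{ij}$, kill every cross term using independence together with $\mathbb{E}[e_{ij}]=0$, and keep only the diagonal terms. Your explicit computation $Z_{ij}Z_{kl}^T = e_{ij}e_{kl}\,\delta_{jl}\,\mathbf{e}_i\mathbf{e}_k^T$ is a mild sharpening: it shows many cross terms vanish deterministically, so only pairwise uncorrelatedness is actually used. Your $nB\,I_m$, $mB\,I_n$ by-product matches what the paper computes in the proof of Theorem~\ref{ThmBernstein2}.
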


Lemma~\ref{LemmaCrossTerms} is critical as it decouples the spatial correlations within the error matrix. It essentially states that, due to independence, the variance of the sum equals the sum of the variances, even in the matrix domain. By Theorem~\ref{thm:bernstein} and Lemma~\ref{LemmaCrossTerms}, the following theorem holds.
\begin{theorem}
\label{ThmBernstein2}
Consider a series of independent basis matrices $\{Z_{ij}\}$ of dimension $m \times n$ where the error terms $e_{ij}$ follow Assumption~\ref{uniform_ass}. Let $\delta^2$ be the variance statistic defined as:
\begin{equation}
\small
    \delta^2 = \max \left\{ \left\| \sum_{i,j} \mathbb{E}[Z_{ij} Z_{ij}^T] \right\|_2, \left\| \sum_{i,j} \mathbb{E}[Z_{ij}^T Z_{ij}] \right\|_2 \right\}.
\end{equation}
Then, for any threshold $t > 0$, the probability that the spectral norm of the sum $E = \sum_{i,j} Z_{ij}$ exceeds $t$ is bounded by:
\begin{equation}
\small
\label{eq:tail_bound}
     P(\|E\|_2 \ge t) \le (m+n) \cdot \exp \left( -\frac{t^2}{C \cdot \max(m,n) \cdot s^2} \right),
\end{equation}
where $C$ is a universal constant related to the sub-Gaussian norm of the noise.
\end{theorem}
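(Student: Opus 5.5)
The plan is to apply the matrix Bernstein inequality (Theorem~\ref{thm:bernstein}) to the decomposition $E=\sum_{i,j}Z_{ij}$. This requires checking its hypotheses, computing $R$ and $\delta^2$ explicitly, and then simplifying the resulting exponent into the stated sub-Gaussian form.

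\textbf{Step 1: hypotheses and the uniform bound $R$.} Independence and centering of the $Z_{ij}$ come directly from Assumption~\ref{uniform_ass}, since $\mathbb{E}[Z_{ij}]=\mathbb{E}[e_{ij}]\,e_ie_j^T=0$. Each $Z_{ij}=e_{ij}\,e_ie_j^T$ is rank one, so $\|Z_{ij}\|_2=|e_{ij}|$. Because $Q_s$ rounds to the nearest grid point of spacing $s$, and no clipping occurs since $s=a_{\max}/L$, we have $|e_{ij}|\le s/2$ deterministically. We may therefore take $R=s/2$.

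\textbf{Step 2: the variance statistic $\delta^2$.} Using Lemma~\ref{LemmaCrossTerms}, or directly, we have $Z_{ij}Z_{ij}^T=e_{ij}^2\,e_ie_i^T$. Summing over $j$ and $i$ gives the diagonal matrix $\sum_{i,j}\mathbb{E}[Z_{ij}Z_{ij}^T]=nB\,I_m$, whose spectral norm is $nB$. Symmetrically, $\sum_{i,j}\mathbb{E}[Z_{ij}^TZ_{ij}]=mB\,I_n$, with norm $mB$. Hence $\delta^2=B\max(m,n)$. Since $B\propto s^2$, say $B=\beta s^2$ (for instance $\beta=1/12$ in the uniform case), this gives $\delta^2=\beta\max(m,n)s^2$.

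\textbf{Step 3: simplifying the exponent.} Substituting into Theorem~\ref{thm:bernstein} yields
\begin{equation*}
P(\|E\|_2\ge t)\le(m+n)\exp\left(-\frac{t^2/2}{\beta\max(m,n)s^2+st/6}\right).
\end{equation*}
This is the main obstacle, because the linear term $st/6$ must be absorbed into the quadratic form $C\max(m,n)s^2$. I would split into two regimes:
\begin{itemize}
\item For $t\le \max(m,n)\,s$, which covers the relevant range since the typical size is $\sqrt{\max(m,n)}\,s$, we have $st/6\le \max(m,n)s^2/6$. The denominator is then at most $(\beta+1/6)\max(m,n)s^2$, and the claim holds with $C=2(\beta+1/6)$.
\item For $t>\max(m,n)\,s$, note the deterministic bound $\|E\|_2\le\|E\|_F\le\sqrt{mn}\,s/2\le \max(m,n)s/2$. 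Hence the probability is $0$ and the inequality holds trivially.
\end{itemize}
Together these give \cref{eq:tail_bound} for all $t>0$ with a universal constant $C$ depending only on $\beta$, which is the ratio encoding the sub-Gaussian scale of the noise.

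An alternative route is to invoke the sub-Gaussian matrix series bound directly, using the fact that the bounded $e_{ij}$ are sub-Gaussian with parameter proportional to $s$. That argument yields the pure Gaussian-type tail immediately. The Bernstein route with the case split above, however, stays within the tools already stated in the paper.
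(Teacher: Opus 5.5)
Your proposal is correct and follows the paper's route: apply the matrix Bernstein inequality to $E=\sum_{i,j}Z_{ij}$, with $\delta^2=\max(m,n)B$ read off from the diagonal matrices $nB\,I_m$ and $mB\,I_n$. Your Step 3 goes beyond the paper, which never computes $R$ and simply drops $Rt/3$ on the grounds that $\delta^2$ dominates when $m,n\gg 1$; your choice $R=s/2$ and the split at $t=\max(m,n)\,s$, with the large-$t$ case made vacuous by $\|E\|_2\le\|E\|_F\le\sqrt{mn}\,s/2$, make that step rigorous, and since $|e_{ij}|\le s/2$ forces $\beta\le 1/4$, your $C=2(\beta+1/6)$ can be replaced by the truly universal value $5/6$.
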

The derivation above yields a Gaussian-like tail bound, quantitatively demonstrating that the global spectral noise $\|E\|_2$ scales approximately with $\sqrt{\max(m,n)} \cdot s$. While this suggests that the absolute spectral error increases for extremely wide layers, the stability of the representation is dictated by the relative impact of this noise, which we analyze next.

Let us define the tail bound function $F(t)$ as the right-hand side of Equation~\eqref{eq:tail_bound}:
\begin{equation}
\label{F(t)}
    F(t) \triangleq (m+n) \cdot \exp \left( -\frac{t^2}{C \cdot \max(m,n) \cdot s^2} \right).
\end{equation}
Note that the derivative $F'(t) < 0$ for all $t > 0$, ensuring that $F(t)$ is strictly monotonically decreasing. This monotonicity is crucial for the existence of its inverse function in the subsequent analysis.

\subsection{Relative Error Analysis of Singular Values}

Having established a probabilistic upper bound for the global spectral error $\|E\|_2$, we now investigate how this global noise translates into perturbations of individual singular values. To bridge the gap between the spectral norm of the error matrix and the specific deviations of singular values, we define the relative error violation event $\mathcal{R}(\epsilon_k)$ over the sample space $\Omega$. Specifically, $\mathcal{R}(\epsilon_k)$ collects all outcomes $\omega \in \Omega$ where the relative perturbation exceeds $\epsilon_k$:
\begin{equation}
    \mathcal{R}(\epsilon_k) \triangleq \left\{ \omega \in \Omega : \frac{|\sigma_k(\tilde{A}(\omega)) - \sigma_k(A)|}{\sigma_k(A)} > \epsilon_k \right\}.
\end{equation}
For brevity, we suppress the dependence on $\omega$ in the notation and refer to the event as $\mathcal{R}(\epsilon_k) = \left\{ \frac{|\sigma_k(\tilde{A}) - \sigma_k(A)|}{\sigma_k(A)} > \epsilon_k \right\}$.

\begin{theorem}
\label{thm:relative_error}
For a fixed confidence level $1-\theta$ where $\theta \in (0,1)$, the maximum relative error bound $\epsilon_k(\theta)$ induced by quantization is strictly inversely proportional to the magnitude of the singular value $\sigma_k(A)$:
\begin{equation}
\label{eq:inverse}
    \epsilon_k(\theta) \propto \frac{1}{\sigma_k(A)}.
\end{equation}
Furthermore, for any two singular values with magnitudes $\sigma_i(A) > \sigma_j(A) > 0$, under the same confidence level $1-\theta$, it strictly holds that:
\begin{equation}
    \epsilon_i(\theta) < \epsilon_j(\theta).
\end{equation}
\end{theorem}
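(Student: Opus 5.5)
The plan is to combine Weyl's inequality (Theorem~\ref{lemma:weyl}) with the tail bound of Theorem~\ref{ThmBernstein2}, using the monotonicity of $F(t)$ in \eqref{F(t)} to invert the tail bound at the prescribed confidence level. We then read off $\epsilon_k(\theta)$ as a single spectral-norm threshold divided by $\sigma_k(A)$.

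\textbf{Step 1: reduce to an event on $\|E\|_2$.} By Theorem~\ref{lemma:weyl} applied to $\tilde{A}=A+E$, we have $|\sigma_k(\tilde{A})-\sigma_k(A)|\le \|E\|_2$ for every $k$ simultaneously. Hence, whenever $\sigma_k(A)>0$,
\begin{equation*}
\mathcal{R}(\epsilon_k)\subseteq \left\{ \|E\|_2 > \epsilon_k\,\sigma_k(A) \right\},
\end{equation*}
and therefore $P(\mathcal{R}(\epsilon_k))\le P(\|E\|_2\ge \epsilon_k\sigma_k(A))\le F(\epsilon_k\sigma_k(A))$ by Theorem~\ref{ThmBernstein2}.

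\textbf{Step 2: invert $F$.} Since $F$ is continuous and strictly decreasing on $(0,\infty)$, with $F(0^+)=m+n$ and $F(\infty)=0$, for every $\theta\in(0,1)$ there is a unique $t^*(\theta)=F^{-1}(\theta)>0$. Solving explicitly gives
\begin{equation*}
t^*(\theta)=s\sqrt{C\max(m,n)\,\log\!\left(\frac{m+n}{\theta}\right)}.
\end{equation*}
We then define $\epsilon_k(\theta)\triangleq t^*(\theta)/\sigma_k(A)$. This gives $F(\epsilon_k\sigma_k(A))=\theta$, so $P(\mathcal{R}(\epsilon_k(\theta)))\le\theta$. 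In other words, with confidence at least $1-\theta$ the relative error of the $k$-th singular value is at most $\epsilon_k(\theta)$. Because $t^*(\theta)$ depends only on $(\theta,m,n,s,C)$ and not on $k$, the proportionality $\epsilon_k(\theta)\propto 1/\sigma_k(A)$ in \eqref{eq:inverse} follows at once. The proportionality constant is $t^*(\theta)$.

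\textbf{Step 3: the strict ordering.} Given $\sigma_i(A)>\sigma_j(A)>0$ and $t^*(\theta)>0$, we get $\epsilon_i(\theta)=t^*/\sigma_i<t^*/\sigma_j=\epsilon_j(\theta)$.

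\textbf{Main obstacle.} The delicate point is interpretive rather than computational: "the maximum relative error bound at confidence $1-\theta$" has to be defined as the threshold obtained from the uniform Weyl and Bernstein bound, and not as the exact quantile of each individual relative error. The exact per-$k$ quantiles need not scale as $1/\sigma_k$, because of eigenvector interactions. I would therefore state explicitly that $\epsilon_k(\theta)$ denotes the bound $F^{-1}(\theta)/\sigma_k(A)$, and check that $\theta<m+n$, which always holds, so that the inverse exists.
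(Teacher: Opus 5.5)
Your proposal is correct and follows the paper's proof essentially step for step. Both use Weyl's inequality to place $\mathcal{R}(\epsilon_k)$ inside the tail event $\{\|E\|_2 > \epsilon_k\sigma_k(A)\}$, bound that event by $F$ from Theorem~\ref{ThmBernstein2}, invert the strictly decreasing $F$ at level $\theta$, and define $\epsilon_k(\theta)=F^{-1}(\theta)/\sigma_k(A)$. Your explicit formula $F^{-1}(\theta)=s\sqrt{C\max(m,n)\log((m+n)/\theta)}$ and your remark that $\epsilon_k(\theta)$ is this uniform bound rather than the true per-$k$ quantile are clarifications the paper leaves implicit.
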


Theorem \ref{thm:relative_error} establishes the theoretical core of our analysis, revealing a fundamental spectral asymmetry: while quantization adds a roughly uniform noise floor to the entire spectrum, the relative damage is highly non-uniform. Dominant singular values are sufficiently large to suppress this noise, resulting in negligible relative perturbation. Conversely, tail singular values operate in a low signal-to-noise regime, making them highly susceptible to distortion.

\subsection{ Failure Probability under Power-Law Decay}

In this subsection, we concretize the relative error analysis by calculating the explicit failure probability.  Define $\theta_k$ as the failure probability for the $k$-th singular value. The following theorem gives an upper bound of $\theta_k$

\begin{theorem}
\label{thm:asymptotic_robustness_exact}
Consider a stochastic matrix $A\in \mathbb{R}^{m\times n}$ satisfying Assumption~\ref{ass:power_law} and Assumption~\ref{uniform_ass}. Let $R_k \triangleq \frac{|\sigma_k(\tilde{A}) - \sigma_k(A)|}{\sigma_k(A)}$ be the relative error.
For any fixed relative error tolerance $\eta > 0$, define $\theta_k$ to be the probability that  $R_k>\eta$. Then $\theta_k$ is bounded by:
\begin{equation}
    \theta_k \lesssim (m+n) \cdot \exp\left( - \frac{\eta^2 \mu^2}{CDs^2} \cdot \xi_k \right),
    \label{theta_bound}
\end{equation}
where
\begin{equation}
    \xi_k = \begin{cases} k^{-\alpha}, & 1 \le k \le r \\ r^{-\alpha}, & r < k \le \min(m,n) \end{cases},
\end{equation}
$D\triangleq\max(m,n)$ and  $C$ is a universal constant related to the sub-Gaussian norm of the noise.
\end{theorem}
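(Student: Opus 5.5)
The plan is to chain Weyl's inequality (Theorem~\ref{lemma:weyl}) with the matrix Bernstein tail bound of Theorem~\ref{ThmBernstein2}, and then substitute the power-law profile of Assumption~\ref{ass:power_law}.

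First, rewrite the failure event in terms of the global noise. By definition,
\begin{equation}
\theta_k = P\left( R_k > \eta \right) = P\left( |\sigma_k(\tilde{A}) - \sigma_k(A)| > \eta\, \sigma_k(A) \right).
\end{equation}
Since $\tilde{A} = A + E$, Theorem~\ref{lemma:weyl} gives $|\sigma_k(\tilde{A}) - \sigma_k(A)| \le \|E\|_2$ deterministically, for every outcome $\omega$. Hence the event $\{R_k > \eta\}$ is contained in $\{\|E\|_2 > \eta\,\sigma_k(A)\}$. By monotonicity of probability,
\begin{equation}
\theta_k \le P\left(\|E\|_2 \ge \eta\,\sigma_k(A)\right).
\end{equation}
This is the same containment that underlies Theorem~\ref{thm:relative_error}. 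There one inverts $F(t)$ from \eqref{F(t)} at level $\theta$; here we evaluate $F$ at the fixed threshold $t = \eta\,\sigma_k(A)$ instead.

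Second, apply Theorem~\ref{ThmBernstein2} with $t = \eta\,\sigma_k(A) > 0$:
\begin{equation}
\theta_k \le F(\eta\,\sigma_k(A)) = (m+n)\exp\left(-\frac{\eta^2\,\sigma_k(A)^2}{C D s^2}\right), \qquad D = \max(m,n).
\end{equation}
Third, insert Assumption~\ref{ass:power_law}. Squaring the profile gives $\sigma_k(A)^2 \sim \mu^2 k^{-\alpha}$ for $k \le r$ and $\sigma_k(A)^2 \sim \mu^2 r^{-\alpha}$ for $r < k \le \min(m,n)$. In both cases $\sigma_k(A)^2 \sim \mu^2 \xi_k$, which yields \eqref{theta_bound}.

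The main obstacle is this last substitution, because Assumption~\ref{ass:power_law} only gives asymptotic equivalence $\sim$, not equality. This is why the statement uses $\lesssim$. I would make it precise as follows. For any $\delta \in (0,1)$ and sufficiently large dimensions, $\sigma_k(A)^2 \ge (1-\delta)\mu^2\xi_k$. Since $F$ is strictly decreasing, this gives $\theta_k \le (m+n)\exp\left(-(1-\delta)\eta^2\mu^2\xi_k/(CDs^2)\right)$. Letting $\delta \to 0$, or equivalently absorbing $(1-\delta)$ into the universal constant $C$, gives the stated bound up to asymptotic equivalence.

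I would also note that $C$ is the constant already produced in Theorem~\ref{ThmBernstein2}. It comes from bounding $\delta^2 \le D\cdot B$ with $B \propto s^2$, via Lemma~\ref{LemmaCrossTerms}, and absorbing the $Rt/3$ term using $|e_{ij}| \le s/2$. So no new constants are introduced.
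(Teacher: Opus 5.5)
Your proposal is correct and follows the paper's proof essentially step for step: Weyl's inequality embeds $\{R_k>\eta\}$ in $\{\|E\|_2>\eta\,\sigma_k(A)\}$, the tail bound of Theorem~\ref{ThmBernstein2} is evaluated at $t=\eta\,\sigma_k(A)$, and the two regimes of Assumption~\ref{ass:power_law} are then substituted. Your $(1-\delta)$ argument, which replaces $\sigma_k^2(A)$ by $\mu^2\xi_k$ and absorbs the factor into $C$, is a slightly more careful version of the step the paper performs by direct substitution under $\sim$.
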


Theorem \ref{thm:asymptotic_robustness_exact} establishes the failure probability profile under the asymptotic spectral assumption. Recall that the quantization step size is defined as $s = a_{\max}/L$.  Based on Assumption~\ref{ass:power_law}, we have $\sigma_1(A) \sim \mu$. Consequently, the step size satisfies the scaling relation $s \sim \mu/L$. Substituting this asymptotic relation into~\cref{theta_bound}, the failure probability bound takes the following form:
\begin{equation}
    \theta_k \lesssim (m+n) \cdot \exp\left( - \frac{\eta^2 L^2}{C \cdot D} \cdot \xi_k \right).
    \label{eq:simplified_resolution_bound}
\end{equation}
Consider the principal components where $1\leq k\leq r$. In this regime $\xi_k=k^{-\alpha}$. As the quantization resolution $L$ increases, the exponent diverges to negative infinity:
    \begin{equation}
        \lim_{L \to \infty} \left( - \frac{\eta^2 L^2}{C \cdot D} \cdot k^{-\alpha} \right) = -\infty.
    \end{equation}
    Consequently, the failure probability vanishes:
    \begin{equation}
        \lim_{L \to \infty} \theta_k = 0,
    \end{equation}
which proves that the dominant singular values are robust against quantization noise, provided the bit-width is sufficiently large.  Consider the residual components where $r<k\leq \min(m,n)$. In this regime $\xi_k=r^{-\alpha}$. When $r\rightarrow \infty$, the limit holds:
\begin{equation}
    \lim_{r\rightarrow\infty} (m+n) \cdot \exp\left( - \frac{\eta^2 L^2}{C \cdot D} \cdot r^{-\alpha} \right)=(m+n).
\end{equation}
The bound relaxes to the trivial limit $(m+n)$, implying that the relative error is no longer suppressed. This mathematically formalizes the spectral flattening: in the tail singular values, the quantization noise magnitude is comparable to the signal floor, rendering the spectral structure indistinguishable from noise.

\subsection{Quantization Increases Stable Rank}

Having established the non-uniform impact of quantization noise under \cref{ass:power_law}, we now apply these findings to explain the behavior of the matrix stable rank.

\begin{theorem}
\label{thm:stable_rank_impact}
Let $A \in \mathbb{R}^{m \times n}$ be a matrix satisfying Assumption~\ref{ass:power_law}, and let $\tilde{A}$ be its quantized counterpart. Then the stable rank of the matrix $A$ strictly increases after quantization:
\begin{equation}
    S_r(\tilde{A}) > S_r(A).
\end{equation}
\end{theorem}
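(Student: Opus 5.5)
Stable rank is $S_r(A)=\|A\|_F^2/\|A\|_2^2=\sum_k\sigma_k^2(A)/\sigma_1^2(A)$. I would compare the quantized numerator and denominator separately, working in expectation or with high probability. Write $\tilde A=A+E$ with $E$ as in Lemma~\ref{LemmaCrossTerms}.

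\textbf{Numerator.} Expand $\|\tilde A\|_F^2=\|A\|_F^2+2\langle A,E\rangle_F+\|E\|_F^2$. Assumption~\ref{uniform_ass} gives $\mathbb{E}\langle A,E\rangle_F=0$, so $\mathbb{E}\|\tilde A\|_F^2=\|A\|_F^2+mnB$. Thus the Frobenius energy grows by a term of order $mn s^2$. Concentration of the sum of independent bounded $e_{ij}^2$ and of the linear term $\langle A,E\rangle_F$ upgrades this to a high-probability statement: $\|\tilde A\|_F^2\ge\|A\|_F^2+(1-o(1))mnB$.

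\textbf{Denominator.} By Weyl's inequality (Theorem~\ref{lemma:weyl}), $\sigma_1(\tilde A)\le\sigma_1(A)+\|E\|_2$. Theorem~\ref{ThmBernstein2} gives $\|E\|_2\lesssim\sqrt{D}\,s$ with high probability. Since $\sigma_1(A)\sim\mu$, the relative inflation of the top singular value is $\|E\|_2/\sigma_1(A)=O(\sqrt D\,s/\mu)$. This is exactly the head-robustness phenomenon of Theorems~\ref{thm:relative_error} and~\ref{thm:asymptotic_robustness_exact} for $k=1$. Combining the two bounds,
\begin{equation*}
S_r(\tilde A)\ \ge\ \frac{\|A\|_F^2+(1-o(1))mnB}{\sigma_1^2(A)\bigl(1+\|E\|_2/\sigma_1(A)\bigr)^2}.
\end{equation*}

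\textbf{Key comparison and main obstacle.} It remains to show that the additive gain in the numerator beats the multiplicative inflation of the denominator. This requires
\begin{equation*}
mnB\ \gtrsim\ \|A\|_F^2\Bigl(\frac{2\sqrt D s}{\mu}+\frac{Ds^2}{\mu^2}\Bigr).
\end{equation*}
Under Assumption~\ref{ass:power_law} with $\alpha>1$, the head $\sum_{k\le r}\mu^2k^{-\alpha}$ is $O(\mu^2)$. The flat tail contributes at most $\min(m,n)\mu^2r^{-\alpha}$. So $S_r(A)=\|A\|_F^2/\mu^2$ is bounded by a constant plus $\min(m,n)r^{-\alpha}$.

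The left side is $\asymp mns^2$ and the right side is $\asymp S_r(A)\bigl(\sqrt D s\mu+Ds^2\bigr)$. The quadratic term compares $mn$ with $S_r(A)D$, which holds whenever $S_r(A)\ll\min(m,n)$. That is precisely the low-stable-rank regime the power law delivers, provided the tail floor $r^{-\alpha}$ is small. The linear term $\sqrt D s\mu$ is the delicate one. It needs $s\gtrsim S_r(A)\mu/\min(m,n)^{1/2}\cdot D^{-1/2}\cdot(\ldots)$, which I expect to verify using $s\sim\mu/L$ together with $\sqrt{mn}\ge\min(m,n)$.

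A cleaner route I would try first avoids the linear term. Weyl's inequality is crude for $\sigma_1$, because $E$ is nearly orthogonal to the top singular pair. Instead I would bound $\mathbb{E}\,\sigma_1^2(\tilde A)\le\sigma_1^2(A)+\|\mathbb{E}E^TE\|_2+(\text{cross term with mean zero})$, using $\mathbb{E}E^TE=mB\,I$ from Lemma~\ref{LemmaCrossTerms}. This gives a first-order inflation of only $\max(m,n)B$, against the numerator gain $mnB$. The strict inequality then reduces to $\|A\|_F^2\max(m,n)<mn\,\sigma_1^2(A)$, i.e.\ $S_r(A)<\min(m,n)$. That holds strictly for any matrix with non-flat spectrum, which Assumption~\ref{ass:power_law} guarantees when $r\ge2$.

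Making this second-moment comparison rigorous is the hard step. It is rigorous in expectation, and holds with high probability after Bernstein-type concentration.
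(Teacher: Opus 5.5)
Your proposal has a genuine gap: the ``cleaner route'' you intend to close the argument with rests on an inequality that points the wrong way. Since $\sigma_1^2(\tilde A)=\lambda_{\max}\bigl(A^TA+A^TE+E^TA+E^TE\bigr)$ and $\lambda_{\max}$ is convex, Jensen gives $\mathbb{E}\,\sigma_1^2(\tilde A)\ge\lambda_{\max}(A^TA+mB\,I)=\sigma_1^2(A)+mB$. So what you wrote is a lower bound, not an upper bound.

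The mean-zero cross term cannot be discarded inside $\lambda_{\max}$. At second order it contributes, in expectation, $B\sum_{j=2}^{n}(\sigma_1^2+\sigma_j^2)/(\sigma_1^2-\sigma_j^2)\ge(n-1)B$, with $\sigma_j=0$ beyond the rank. For a rank-one spike above the detection threshold, the BBP limit is exactly $\sigma_1^2+(m+n)B+mnB^2/\sigma_1^2$, and the contribution grows as $\sigma_2/\sigma_1\to1$. Hence the reduction to $S_r(A)<\min(m,n)$ is wrong, and so is the claim that every non-flat spectrum works. Take $m=n$, $r=2$ and $\rho=2^{-\alpha}$. The leading-order requirement $n^2B>S_r(A)\,\mathbb{E}[\sigma_1^2(\tilde A)-\sigma_1^2(A)]$ becomes $2\rho/(1-\rho)<1$, which fails for $1<\alpha<\log_2 3$. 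You were right to exclude $r=1$: the profile is then flat and $S_r(A)=\min(m,n)$ cannot strictly increase. But $r\ge2$ is not enough.

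Your Weyl route is valid, and the linear term you worried about is harmless under $s\sim\mu/L$ once $\sqrt D\gg L$, since it is smaller than the quadratic term by a factor of order $L/\sqrt D$. Note that in this regime $\|E\|_2/\sigma_1(A)\sim\sqrt D/L$ is large, so what you are using is noise domination, not head robustness. The binding constraint is the quadratic term. It requires $S_r(A)$ to sit a constant factor below $mn/(\sqrt m+\sqrt n)^2$; if you take $\|E\|_2$ from Theorem~\ref{ThmBernstein2}, which hides a $\sqrt{\log(m+n)}$, you even need $S_r(A)\lesssim\min(m,n)/\log(m+n)$.

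Under Assumption~\ref{ass:power_law}, $S_r(A)\approx\sum_{k\le r}k^{-\alpha}+(\min(m,n)-r)\,r^{-\alpha}$. So this is a smallness condition on the tail floor $r^{-\alpha}$, which the assumption does not supply, and it cannot be dropped. Indeed, $\|E\|_2\approx(\sqrt m+\sqrt n)\sqrt B\gg\mu$, so $\sigma_1(\tilde A)\ge\|E\|_2-\mu$ forces $S_r(\tilde A)\le(1+o(1))\,mn/(\sqrt m+\sqrt n)^2$, about $n/4$ when $m=n$. Meanwhile $r=2$ with $1<\alpha<2$ gives $S_r(A)\approx2^{-\alpha}n>n/4$, so the stable rank decreases. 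You must therefore state the tail-floor condition as an explicit hypothesis. With it, your Weyl argument gives a high-probability proof that is essentially sharp in this regime.

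The paper takes a different path. It introduces the energy concentration $F_A(k)$ and asserts, on the strength of Theorems~\ref{thm:relative_error} and~\ref{thm:asymptotic_robustness_exact}, that quantization inflates tail energy by a larger relative factor than head energy. It then reads off $S_r=1/F(1)$. That assertion is exactly the numerator-versus-denominator comparison you are attempting, and those theorems, which only upper-bound deviation probabilities, do not establish it. Your computation is the more informative one because it exposes the condition that is actually required.
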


\section{Experiments}
\label{sec:experiments}
\subsection{Model Architecture and Dataset}

To verify the analysis, we employ a decoder-only Transformer architecture following the configuration of GPT-2-124M~\cite{gpt-2}. Specifically, the model comprises $L=12$ Transformer layers with a hidden dimension of $d_{\text{model}}=768$ and $H=12$ attention heads. The context window size is set to 1024 tokens. We train all models on the OpenWebText dataset~\cite{openwebtext}, a large-scale collection of web text widely utilized for language model pre-training. All experiments are implemented using the PyTorch framework and executed on a computational node equipped with 8 GPUs. We utilize a global batch size of 1280, which is achieved using a local batch size of 4 per GPU combined with 40 steps of gradient accumulation. The models are trained with AdamW optimizer~\cite{adamw} with a weight decay coefficient of $\beta=0.1$. The learning rate is initialized at $1.5 \times 10^{-4}$ and features a linear warm-up over the first 20 steps. Furthermore, to mitigate the risk of gradient explosion, we apply gradient clipping with a maximum norm of $1.0$.  All FP4 training in this work
adopts W4A4G4 quantization, where weights,
activations, and gradients are represented in
the E2M1 NVFP4 format. Due to Nvidia’s
closed-source FP4 training software stack, native hardware-supported FP4 training is not currently accessible; consequently, our experiments
with NVFP4 are conducted through simulation
in \texttt{BF16}. More experiments are in appendix Section~\ref{sec:more_exp}.

\subsection{Empirical Verification of Spectral Flattening}
\label{subsec:spectral_analysis}

We empirically validate the predicted spectral degradation through a layer-wise analysis of the Multilayer Perceptron (MLP) blocks in GPT-2-124M. Specifically, we evaluate the singular value spectra of the \texttt{BF16} baseline against its  NVFP4 quantized counterpart.

The left panel of Figure~\ref{fig:singular_values} elucidates the mechanism via the component-wise distortion ratio $\epsilon_k$. We observe disproportionate attenuation in dominant singular values (top 30\%, ratio $\approx 0.02$--$0.05$), which directly inflates the stable rank by suppressing the spectral norm. Conversely, the tail components exhibit relative preservation ( $\epsilon_k\to 0.18$). This asymmetry corroborates Theorem~\ref{thm:relative_error}, confirming that the quantization noise floor effectively obstructs the decay of head spectra. The right panel of Figure~\ref{fig:singular_values} confirms that pre-quantization embeddings exhibit a log-linear spectral decay, validating the power-law  result in Theorem~\ref{ThmX}. Following NVFP4 quantization, we detect a deterministic increase in stable rank ($2.52 \to 2.71$), consistent with the spectral flattening predicted by Theorem~\ref{thm:stable_rank_impact}.

\begin{figure}[htbp]
    \centering
    \includegraphics[width=\linewidth]{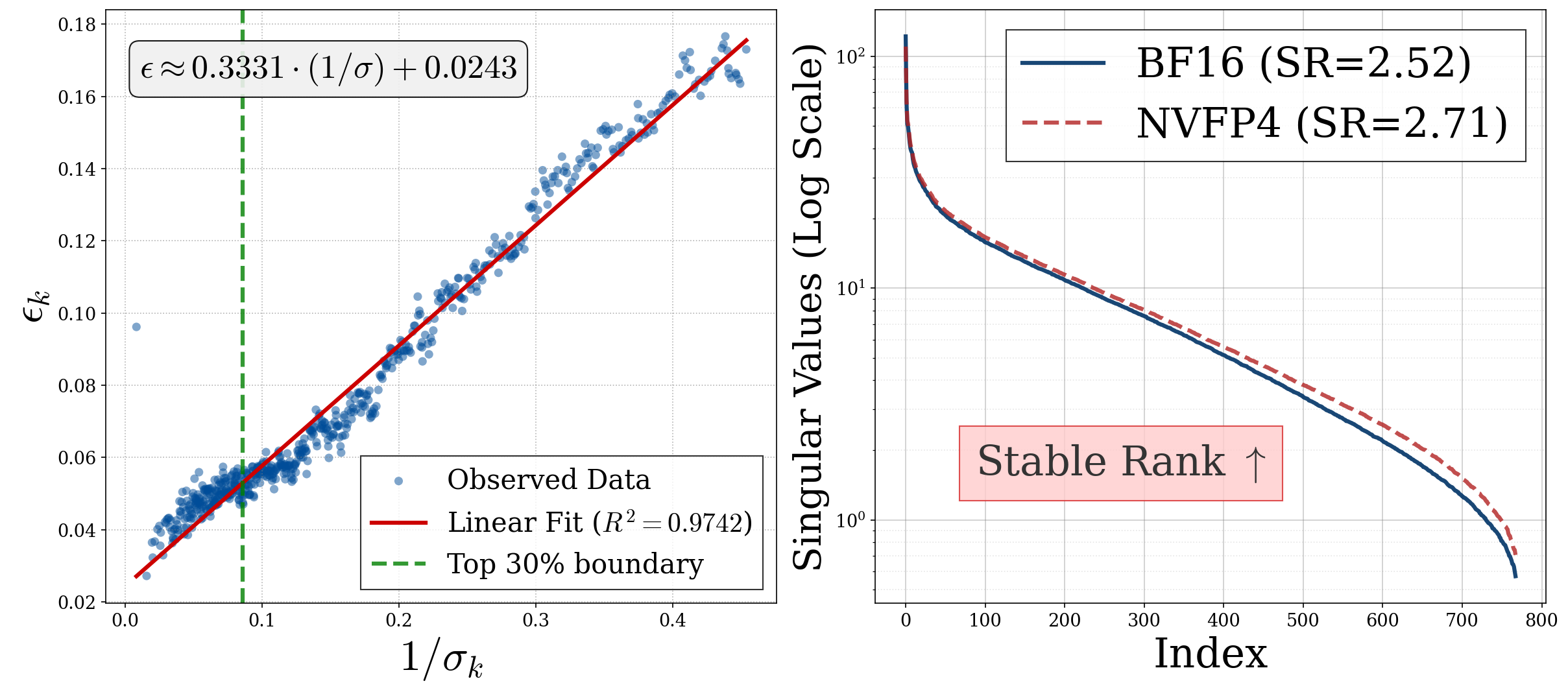}
    \caption{Spectral analysis of MLP embedding matrices. (Left) The ratio of quantized to original singular values across indices. (Right) Logarithmic distribution of singular values with \texttt{BF16} and \texttt{NVFP4} quantization. }
    \label{fig:singular_values}
\end{figure}

\subsection{Spectral Evolution and Representational Collapse}
\label{subsec:dynamic_spectral}

Extending the analysis of Theorem~\ref{thm:stable_rank_impact}, we investigate the temporal evolution of spectral properties. Figure~\ref{fig:spectral_evolution} contrasts the singular value spectra of \texttt{BF16} and \texttt{NVFP4} embedding matrices in MLP layers across representative training steps.

The empirical evidence reveals a bifurcation in spectral dynamics, characterizing two distinct phases of low-precision training. In the initial iterations (Steps 0--10), observations corroborate our theoretical predictions. As shown in the top row of Figure~\ref{fig:spectral_evolution}, the quantized embeddings (\texttt{NVFP4}) display an elevated tail spectrum relative to \texttt{BF16} baseline. Consequently, the stable rank of the quantized representations increases. This confirms that quantization introduces a noise floor that obscures fine-grained spectral structures, resulting in a generalized flattening of the distribution.

However, as training progresses (Steps 50--100), a pathological divergence emerges. The bottom row of Figure~\ref{fig:spectral_evolution} demonstrates that the stable rank of quantized embedding declines rapidly, deviating from the monotonic increase observed in the \texttt{BF16} baseline. By Step 100, the \texttt{BF16} model attains a high stable rank (SR=1.08), indicative of effective dimensionality expansion and feature encoding. Conversely, the NVFP4 embedding degenerates into a low-rank state (SR=1.00). This crossover suggests that cumulative gradient errors shift the regime from static flattening to dynamic representational collapse. This process compresses the model capacity into a few dominant singular values, effectively eliminating the heavy-tailed spectral structure essential for semantic encoding.

\begin{figure}[htbp]
    \centering
    \includegraphics[width=\linewidth]{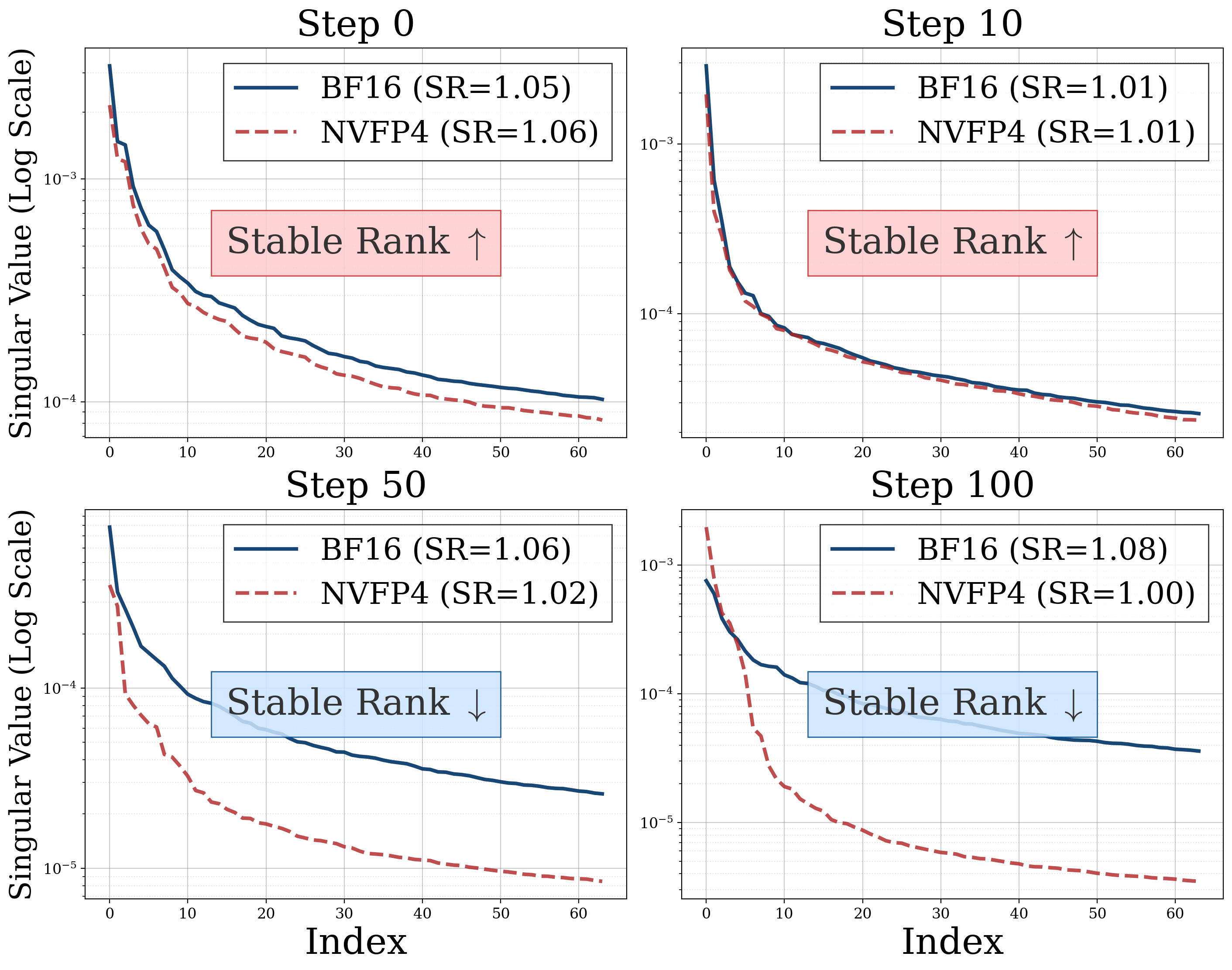}
    \caption{Spectral evolution of training. Top: Initial noise inflates Stable Rank (SR). Bottom: Later steps show representational collapse.}
    \label{fig:spectral_evolution}
\end{figure}
\section{Conclusion and Future Work}
\label{sec:conclusion}
This work unifies Zipfian statistics and Random Matrix Theory to demonstrate that uniform quantization disrupts the heavy-tailed spectral structure of LLMs, causing spectral flattening and a strict increase in stable rank. Our empirical analysis shows that this distortion precipitates a phase transition into representational collapse. This finding challenges unstructured perturbation models by characterizing quantization as a geometric degradation of the optimization landscape rather than simple additive noise.

Future research should focus on designing non-uniform quantization schemes grounded in power-law statistics to preserve tail fidelity, alongside employing spectral regularization to prevent rank collapse. Furthermore, extending this spectral analysis to non-linearities such as softmax and attention is essential to fully characterize quantization effects. These directions collectively establish the theoretical foundation for robust low-precision training.

\bibliographystyle{unsrt}

\bibliography{references}

\newpage
\appendix

\section{More Related Works}
\label{sec:related_work}

The challenge of ultra-low-bit quantization in Large Language Models (LLMs) is a problem of preserving spectral structure under extreme compression. In this section, we review the theoretical foundations of this challenge, tracing the trajectory from the statistical physics of natural language to the spectral properties of neural networks, and subsequently analyzing the evolution of quantization algorithms designed to mitigate spectral degradation.

\subsection{Statistical and Spectral Foundations}
The spectral attributes of LLMs are not mere artifacts of random initialization; rather, they are intrinsically shaped by the statistical distribution of training data. As established by \cite{Zipf}, word frequency in natural language adheres to a strict power-law distribution $f \propto r^{-\alpha}$. This distribution has been extensively validated across web-scale corpora, including Common Crawl \cite{commoncrawl} and Wikipedia \cite{wikipedia}. Recent theoretical analysis by \cite{debowski2025zipf} suggests that observed Neural Scaling Laws are deductive consequences of these Zipfian statistics.

In the context of neural representations, this heavy-tailed data distribution induces geometric anisotropy in the embedding space. \cite{ethayarajh2019contextual} demonstrated that contextual representations occupy a \textit{narrow cone} topology, while \cite{gao2019representation} formalized this phenomenon as representation degeneration. Although early frameworks postulated that this generative process governs geometric structure \cite{arora2016}, recent analysis by \cite{ren2025variability} empirically confirms that high-frequency tokens exhibit distinct variability patterns compared to rare tokens, establishing that token frequency dictates the local curvature of the embedding manifold. This phenomenon is often exacerbated by the frequency bias inherent in the softmax layer \cite{jiang2019}.

Chronologically, strategies to mitigate this collapse have evolved from heuristic subtraction \cite{All-but-the-Top} to density-aware calibration, such as normalizing flows \cite{li2020} and standard whitening \cite{su2021}. Most recently, \cite{ref14} introduced Zipfian whitening, advancing this methodology by explicitly leveraging word frequency statistics to flatten the spectrum. While \cite{aghajanyan2021intrinsic} leveraged the resulting low intrinsic dimensionality to show models can be fine-tuned in low-rank subspaces, \cite{jha2025spectral} argues that the fragile long tail—often dismissed as noise—encodes critical semantic distinctions via a tail-first growth mechanism. This emphasis on the heavy-tailed spectrum aligns with rigorous characterizations from Random Matrix Theory (RMT). \cite{martin2021} identified heavy-tailed self-regularization (HT-SR) as a hallmark of high-performance deep networks. Parallel research on the Hessian spectrum \cite{sagun2017empirical, ghorbani2019investigation} confirms that the loss landscape consists of a massive bulk and a few extreme outliers, often associated with flat minima \cite{hochreiter1997flat}. Early work on Optimal Brain Damage (OBD) by \cite{lecun1990optimal} utilized these properties for pruning. Subsequently, \cite{dong2019hawq} leveraged this in Hessian-Aware Quantization, demonstrating that layers with larger spectral norms are disproportionately sensitive to noise. More recently, the Muon optimizer \cite{liu2025muon} explicitly manages these spectral properties during training, reinforcing the necessity of maintaining specific spectral profiles for scalable learning. Table~\ref{tab:foundations_literature} summarized the literature on statistical and spectral foundations.

\begin{table*}[htbp]
\centering
\caption{Summary of Literature on Statistical and Spectral Foundations.}
\label{tab:foundations_literature}
\resizebox{\textwidth}{!}{%
\begin{tabular}{@{}lp{8cm}l@{}}
\toprule
\textbf{Category} & \textbf{Key Contributions \& Findings} & \textbf{Citations} \\ \midrule
\textbf{Zipfian Statistics} & Establishes power-law distribution ($f \propto r^{-\alpha}$) in natural language and validates it on web-scale corpora; links Zipf's law to Neural Scaling Laws. & \cite{Zipf, commoncrawl, wikipedia, debowski2025zipf} \\ \midrule
\textbf{Anisotropy \& Geometry} & Identifies representation degeneration and the narrow cone effect; links token frequency to local manifold curvature and softmax bias. & \cite{ethayarajh2019contextual, gao2019representation, ref13, arora2016, ren2025variability, jiang2019} \\ \midrule
\textbf{Isotropy Restoration} & Evolution from heuristic subtraction (All-but-the-Top) to Normalizing Flows. & \cite{All-but-the-Top, li2020, su2021, ref14} \\ \midrule
\textbf{Dimensionality \& Tails} & Debates on intrinsic dimensionality vs. the importance of the fragile long tail for semantic distinctions and capacity growth. & \cite{aghajanyan2021intrinsic, jha2025spectral} \\ \midrule
\textbf{Spectral Theory (RMT)} & Characterizes heavy-tailed self-regularization (HT-SR) in trained weights; analyzes empirical spectral densities. & \cite{martin2021} \\ \midrule
\textbf{Hessian \& Optimization} & Analyzes Hessian spectrum (bulk vs. outliers, flat minima); Optimal Brain Damage (OBD); Hessian-Aware Quantization (HAWQ); Spectral control via Muon optimizer. & \cite{sagun2017empirical, ghorbani2019investigation, hochreiter1997flat, lecun1990optimal, dong2019hawq, liu2025muon} \\ \bottomrule
\end{tabular}%
}
\end{table*}

\subsection{Quantization Pathology and Algorithmic Evolution}
These theoretical properties manifest practically as the outlier phenomenon \cite{bondarenko2021understanding}, which constitutes the primary barrier to low-bit quantization. \cite{dettmers2022llm} observed that when parameter counts exceed 6.7B, systematic massive activations emerge \cite{sun2024massive}. According to classic quantization theory \cite{Quantization_noise}, these outliers compel uniform quantizers to adopt excessive step sizes, causing the fine-grained information in the distribution bulk to collapse. \cite{wei2022outlier} further identified LayerNorm as a key factor amplifying these variances.

To address the spectral challenges posed by outliers, quantization strategies have evolved from simple decomposition to complex geometric transformations. Early mitigation strategies, such as LLM.int8() \cite{dettmers2022llm}, employed mixed-precision decomposition to isolate outliers. Subsequent approaches utilizing mathematical equivalences, such as SmoothQuant \cite{xiao2023smoothquant} and AWQ \cite{lin2024awq}, sought to migrate quantization difficulty from activations to weights.

Recent research has shifted towards geometric transformations to fundamentally alter the spectral distribution. Methods like QuIP \cite{chee2023quip} and QuIP\# \cite{tseng2024quip} introduce Randomized Hadamard Transforms to spectrally whiten the distribution, approximating a Gaussian distribution by leveraging the Johnson-Lindenstrauss lemma \cite{johnson1984extensions}. \cite{ashkboos2024quarot} extended this approach to end-to-end inference in QuaRot. Most recently, \cite{xie2025mhc} proposed limiting signal propagation spectral norms via manifold constraints.

Concurrently, hardware-aware formats have evolved to support these algorithmic advances. While initial efforts explored \texttt{FP8} formats \cite{fp8}, newer standards like \texttt{NVFP4} \cite{nvidia2025blackwell} and \texttt{MXFP4} \cite{mxfp4} adopt block-wise scaling to perform local spectral normalization, thereby preserving representational capacity in low-precision regimes. Despite these engineering advances, the specific theoretical mechanism linking the breakdown of reasoning capabilities in quantized models to the fragility of the Zipfian spectral tail remains an open question. Table~\ref{tab:quantization_literature} summarized the literature on quantization pathology and algorithmic evolution.

\begin{table*}[htbp]
\centering
\caption{Summary of Literature on Quantization Pathology and Algorithmic Evolution. }
\label{tab:quantization_literature}
\resizebox{\textwidth}{!}{%
\begin{tabular}{@{}lp{8cm}l@{}}
\toprule
\textbf{Category} & \textbf{Key Contributions \& Mechanisms} & \textbf{Citations} \\ \midrule
\textbf{Outlier Pathology} & Identification of the outlier phenomenon and massive activations in LLMs (>6.7B); analyzes the role of LayerNorm in variance amplification. & \cite{bondarenko2021understanding, dettmers2022llm, sun2024massive, wei2022outlier} \\ \midrule
\textbf{Classic Theory} & Foundational quantization noise theories assuming uniform distribution (often violated by LLM outliers). & \cite{Quantization_noise} \\ \midrule
\textbf{Decomp. \& Migration} & \textbf{Decomposition}: Mixed-precision (INT8/FP16) isolation. \newline \textbf{Migration}: Mathematically shifting quantization difficulty from activations to weights (SmoothQuant, AWQ). & \cite{dettmers2022llm, xiao2023smoothquant, lin2024awq} \\ \midrule
\textbf{Rotation \& Whitening} & Uses Randomized Hadamard Transforms (RHT) and JL Lemma to whiten spectral distribution and delocalize outliers (QuIP, QuIP\#, QuaRot). & \cite{chee2023quip, tseng2024quip, johnson1984extensions, ashkboos2024quarot} \\ \midrule
\textbf{Manifold Constraints} & Explicitly limits signal propagation spectral norms via manifold projections (mHC). & \cite{xie2025mhc} \\ \midrule
\textbf{Hardware Formats} & Evolution from \texttt{FP8} to block-wise scaling formats (\texttt{NVFP4}, \texttt{MXFP4}) designed for local spectral normalization. & \cite{fp8, nvidia2025blackwell, mxfp4} \\ \bottomrule
\end{tabular}%
}
\end{table*}

\section{Supplementary Definitions}
\label{sec:supp_def}
To ensure the self-contained nature of this work, we provide formal definitions of the mathematical concepts and theoretical frameworks utilized in our spectral analysis. 
\begin{definition}[Stable Rank]
\label{def:stable_rank}
For a stochastic matrix $A \in \mathbb{R}^{m \times n}$, the \textit{stable rank}, denoted by $S_r(A)$, is defined as the ratio of its squared Frobenius norm to its squared spectral norm:
\begin{equation}
    S_r(A) \triangleq \frac{\|A\|_F^2}{\|A\|_2^2},
\end{equation}
where $\|A\|_F = \sqrt{\sum_{i,j} A_{ij}^2}$ is the Frobenius norm and $\|A\|_2 = \sigma_{\max}(A)$ is the spectral norm.
\end{definition}
\begin{definition}[Stieltjes Transform]
The Stieltjes transform of a probability measure $\mu$ is defined for $z \in \mathbb{C} \setminus \text{supp}(\mu)$ as:
\begin{equation}
m_{\mu}(z) = \int_{\mathbb{R}} \frac{1}{\lambda - z} d\mu(\lambda).
\end{equation}
For a matrix with $n$ discrete eigenvalues $\{\lambda_i\}_{i=1}^n$, the transform takes the empirical form:
\begin{equation}
m_n(z) = \frac{1}{n} \sum_{i=1}^n \frac{1}{\lambda_i - z}.
\end{equation}
\end{definition}
Stieltjes transform is used to prove the spectral convergence of noise models in Lemma~\ref{LemmaSTrans} and Lemma~\ref{LemmaESD}.

\begin{definition}[Empirical Spectral Distribution, ESD]
For a symmetric matrix $M \in \mathbb{R}^{d \times d}$ with eigenvalues $\lambda_1, \dots, \lambda_d$, the ESD is defined as:
\begin{equation}
\mu_M = \frac{1}{d} \sum_{i=1}^d \delta_{\lambda_i}
\end{equation}
where $\delta$ is the Dirac delta measure. The ESD converges to a deterministic measure as $d \to \infty$.
\end{definition}
The ESD is employed to characterize the global density of eigenvalues, serving as the basis for proving that heavy-tailed quantization noise converges to a deterministic noise bulk in Lemma~\ref{LemmaESD} and Theorem~\ref{ThmBBP}.

\section{ Math Proof of Low-rank Nature of Language Model Embedding}
\label{sec:proof_low_rank}

\begin{proof}[Proof of Lemma~\ref{LemmaSepcDominance}]
    We write~\cref{eq:covariance} in matrix form. Let $U = [v_1, v_2, \dots, v_V] \in \mathbb{R}^{d \times V}$ be the matrix of eigenvectors, $\Lambda_p = \text{diag}(p_1, \dots, p_V)$ be the probability diagonal matrix. Then, 

    \begin{equation}
\Sigma = U \Lambda_p U^T.
\end{equation}
By Assumption~\ref{Zipf_law}, the probability weights satisfy $p_1 > p_2 > \dots > p_V > 0$, and $p_k \sim k^{-\alpha}$, which means that $p_k$ has a large spectral gap. According to the singular values decomposition (SVD) duality property in linear algebra, $\Sigma$'s non-zero eigenvalues are exactly the same as those of the matrix $M = \Lambda_p^{1/2} U^T U \Lambda_p^{1/2}$. We analyze the matrix $M\in \mathbb{R}^{V\times V}$. Let $G = U^T U$ be the Gram matrix~\cite{Weyl} of the eigenvectors. Due to vector normalization, 
\begin{equation}
(G)_{ii} = \|v_i\|^2 = 1.
\end{equation}
By Assumption~\ref{quasi_orthogonal}, $(G)_{ij} = \langle v_i, v_j \rangle$ ($i \neq j$), and $|(G)_{ij}|=\mathcal{O}(1/\sqrt{d})$.  $G$ can be written as:
\begin{equation}
G = I_V + E,
\end{equation}
where $I_V$ is the identity matrix, $E$ is the full interference matrix with diagonal elements equal to 0 and off-diagonal absolute values equal to $\mathcal{O}(1/\sqrt{d})$, i.e., 
\begin{equation}
E=\begin{cases}
|E_{ij}|
=0,\quad &i=j\\
|E_{ij}|= \mathcal{O}\left(\frac{1}{\sqrt{d}}\right), \quad &i\neq j
\end{cases}.
\end{equation}
Therefore, the target matrix $M$ becomes:
\begin{equation}
M = \Lambda_p^{1/2} (I_V + E) \Lambda_p^{1/2} = \Lambda_p + \Lambda_p^{1/2} E \Lambda_p^{1/2}.
\end{equation}
Let:
\begin{equation}
A = \Lambda_p = \text{diag}(p_1, \dots, p_V), \quad \Delta = \Lambda_p^{1/2} E \Lambda_p^{1/2}.
\end{equation}
Since $A$ is a diagonal matrix, the eigenvalues of $A$ are $p_k$. By Theorem~\ref{lemma:weyl}, the difference between $\tau_k$ and $p_k$ is bounded by the 2-norm of perturbation matrix $\Delta$:
\begin{equation}
\label{eq:weyl}
|\tau_k - p_k| \le \|\Delta\|_2.
\end{equation}
We need to compute the upper bound of $\|\Delta\|_2$, where
\begin{equation}
    \Delta=\begin{cases}
(\Delta)_{ij} = \sqrt{p_i p_j} \cdot \langle v_i, v_j \rangle, \quad &i\neq j\\
(\Delta)_{ij} = 0, \quad &i=j
    \end{cases}.
\end{equation}
Using the matrix norm inequality:
\begin{equation}
\|\Delta\|_2 \le \|\Delta\|_F = \sqrt{\sum_{i \neq j} p_i p_j \langle v_i, v_j \rangle^2},
\end{equation}
where $||\cdot||_F$ is the Frobenius norm. By~\cref{quasi_orthogonal}, $|\langle v_i, v_j \rangle| =\mathcal{O}(1/\sqrt{d})$. Then, 
\begin{equation}
\|\Delta\|_2 \le \mathcal{O}\left(\frac{1}{\sqrt{d}}\right) \cdot \sqrt{\sum_{i \neq j} p_i p_j} < \mathcal{O}\left(\frac{1}{\sqrt{d}}\right) \cdot \sum_{i=1}^V p_i.
\end{equation}
Due to probability normalization $\sum p_i = 1$:
\begin{equation}
\label{eq:weylTwo}
\|\Delta\|_2 \le\mathcal{O}\left(\frac{1}{\sqrt{d}}\right) \cdot 1 = \mathcal{O}\left(\frac{1}{\sqrt{d}}\right).
\end{equation}
By~\cref{eq:weyl} and~\cref{eq:weylTwo}, 
\begin{equation}
| \tau_k - p_k | \le \mathcal{O}\left(\frac{1}{\sqrt{d}}\right).
\end{equation}
Note that $d\rightarrow \infty$ leads to $\mathcal{O}(1/\sqrt{d})\rightarrow 0$. Thus, for a fixed $p_k$:
\begin{equation}
\frac{| \tau_k - p_k |}{p_k} \to 0 \quad ( d \to \infty).
\end{equation}
By Definition~\ref{AsymEqui}, the asymptotic equivalence relation holds:
\begin{equation}
\tau_k \sim p_k.
\end{equation}
By Assumption~\ref{Zipf_law}, the following relation holds:
\begin{equation}
\tau_k\sim p_k \sim k^{-\alpha}.
\end{equation}
\end{proof}

\begin{proof}[Proof of Lemma~\ref{LemmaPhaseTransition}]
    By Cauchy's integral test~\cite{cauchy}, for a monotonically decreasing function $f(x) = x^{-\alpha}$, there is an inequality holds:
    \begin{equation}
\sum_{j=r+1}^d j^{-\alpha} < \int_{r}^{d} x^{-\alpha} dx.
\end{equation}
Calculate the definite integral:
\begin{equation}
\int_{r}^{d} x^{-\alpha} dx = \left[ \frac{x^{1-\alpha}}{1-\alpha} \right]_{r}^{d} = \frac{1}{\alpha-1} \left( r^{1-\alpha} - d^{1-\alpha} \right).
\end{equation}
Since $d > r$ and $\alpha > 1$, $d^{1-\alpha} > 0$.  Discard the subtraction term $-d^{1-\alpha}$ for further scale:
\begin{equation}
\label{eq:integral}
\int_{r}^{d} x^{-\alpha} dx < \frac{r^{1-\alpha}}{\alpha-1}.
\end{equation}
Substituting~\cref{eq:integral} into~\cref{eq:NoiseDef} leads to: 
\begin{equation}
\label{eq:cauchyBound}
\nu^2(d) < \frac{L}{d-r} \cdot \frac{r^{1-\alpha}}{\alpha-1}.
\end{equation}
For a fixed head rank $k\in [1,r]$, define a signal-to-noise ratio function $R(d)$ as: 
\begin{equation}
R(d) \triangleq \frac{\tau_k}{\nu^2(d)}.
\end{equation}
By~\cref{eq:cauchyBound}:
\begin{equation}
\label{eq:RBound}
\begin{split}
R(d) = \frac{L \cdot k^{-\alpha}}{\nu^2(d)} > \frac{L \cdot k^{-\alpha}}{\left( \frac{L}{d-r} \cdot \frac{r^{1-\alpha}}{\alpha-1} \right)}=\frac{k^{-\alpha}(\alpha-1)}{r^{1-\alpha}} \cdot (d-r).
\end{split}
\end{equation}
Since $r$ and $\frac{k^{-\alpha}(\alpha-1)}{r^{1-\alpha}}$ are positive real number, the following limit holds:
\begin{equation}
\lim_{d \to \infty}  \frac{k^{-\alpha}(\alpha-1)}{r^{1-\alpha}} \cdot (d-r) = +\infty.
\end{equation}
By comparison test,~\cref{eq:RBound} leads to 
\begin{equation}
\lim_{d \to \infty} R(d) = +\infty.
\end{equation}
Due to $\lim_{d \to \infty} R(d)=\lim_{d \to \infty} \frac{\tau_k}{\sigma^2(d)} = +\infty$, for any given constant $1+\sqrt{c}$, there is a  dimension $d_1$ such that for all $d > d_1$, the following relation holds:
\begin{equation}
\frac{\tau_k}{\nu^2(d)} > 1+\sqrt{c}.
\end{equation}
Namely:
\begin{equation}
\label{eq:head}
\tau_k > \nu^2(d)\cdot (1+\sqrt{c}).
\end{equation}
On the other hand, $\lim_{d \to \infty}\tau_k=0$ when $k\in (r,d]$. For any given constant $1+\sqrt{c}$, there is a dimension $d_2$, such that for all $d>d_2$, the following relation holds:
\begin{equation}
\label{eq:tail}
\tau_k \leq \nu^2(d)\cdot (1+\sqrt{c}).
\end{equation}
Take $d=\max\{d_1, d_2\}$, then~\cref{eq:head} and~\cref{eq:tail} hold simultaneously. Furthermore, recall the definition of $\nu^2(d)$:
\begin{equation}
\nu^2(d) = \frac{L}{d-r} \sum_{j=r+1}^d j^{-\alpha}.
\end{equation}
When $d\rightarrow \infty$, $\sum_{j=r+1}^d j^{-\alpha}$ can be regarded as the tail of Riemann Zeta function $\zeta(\alpha)$~\cite{RiemannZeta}:
\begin{equation}
\zeta(\alpha)=\sum_{j=1}^{\infty}j^{-\alpha}.
\end{equation}
Since $\zeta(\alpha)$ converges, $\sum_{j=r+1}^d j^{-\alpha}$ converges. As $L,r$ are constants, there are constants $C>0, C,d_0\in\mathbb{R}$ such that for all $d>d_0$ the following condition is satisfied
\begin{equation}
|\nu^2(d)| = \left|\frac{L}{d-r} \sum_{j=r+1}^d j^{-\alpha}\right|\leq C\cdot\left|\frac{1}{d}\right|.
\end{equation}
By Definition~\ref{AsymBound}, that is $\nu^2(d)=\mathcal{O}(d^{-1})$.
\end{proof}

\begin{proof}[Proof of Lemma~\ref{LemmaSTrans}]
    For any $a$ and $z$ ($a \neq z$), the following identity holds:
    \begin{equation}
    \label{eq:identity}
\frac{1}{a - z} = -\frac{1}{z} - \frac{a}{z^2} + \frac{a^2}{z^2(a-z)}.
\end{equation}
Substituting $a = \tau_j$ into~\cref{eq:identity} and expanding $m_{Noise}(z)$ gives:
\begin{equation}
\label{eq:Noise}
\begin{aligned} m_{Noise}(z) &= \frac{1}{d-r} \sum_{j=r+1}^d \left( -\frac{1}{z} - \frac{\tau_j}{z^2} + \frac{\tau_j^2}{z^2(\tau_j - z)} \right) \\ &= -\frac{1}{z} - \frac{1}{z^2} \left( \frac{1}{d-r} \sum_{j=r+1}^d \tau_j \right) + \frac{1}{d-r} \sum_{j=r+1}^d \frac{\tau_j^2}{z^2(\tau_j - z)}\\
&= -\frac{1}{z} - \frac{\nu^2(d)}{z^2} + R_{Noise}(z)
\end{aligned},
\end{equation}
where
\begin{equation}
R_{Noise}(z)\triangleq\frac{1}{d-r} \sum_{j=r+1}^d \frac{\tau_j^2}{z^2(\tau_j - z)}.
\end{equation}
Substituting $a = \nu^2(d)$ into~\cref{eq:identity} and expanding $m_{White}(z)$ gives:
\begin{equation}
\label{eq:white}
\begin{aligned} m_{White}(z) &= -\frac{1}{z} - \frac{\nu^2(d)}{z^2} + \frac{(\nu^2(d))^2}{z^2(\nu^2(d) - z)}
\end{aligned}.
\end{equation}
Denote
\begin{equation}
 R_{White}(z)\triangleq\frac{(\nu^2(d))^2}{z^2(\nu^2(d) - z)}.
\end{equation}
Now examine the difference $\Delta(z) \triangleq |m_{Noise}(z) - m_{White}(z)|$. The first two terms of~\cref{eq:Noise} and~\cref{eq:white} are the same, so by the triangle inequality:
\begin{equation}
\label{eq:noiseB}
\Delta(z) = |R_{Noise}(z) - R_{White}(z)| \le |R_{Noise}(z)| + |R_{White}(z)|.
\end{equation}
For a fixed $z$, since $\tau_j \in \{ \tau_j = L j^{-\alpha} \}_{j=r+1}^d$,
\begin{equation}
\frac{\tau_j^2}{z^2(\tau_j-z)}=\mathcal{O}(\tau_j).
\end{equation}
Therefore,
\begin{equation}
\label{eq:RNoise}
|R_{Noise}| =\mathcal{O}\left(\frac{1}{d-r} \sum_{j=r+1}^d \tau_j\right)=\mathcal{O}\left(\frac{L}{d-r} \sum_{j=r+1}^d j^{-\alpha}\right).
\end{equation}
When $d\rightarrow \infty$ in~\cref{eq:RNoise}, $\sum_{j=r+1}^d j^{-\alpha}$ can be regarded as the tail of Riemann Zeta function $\zeta(\alpha)$. Since $\zeta(\alpha)$ converges, $\sum_{j=r+1}^d j^{-\alpha}$ converges. Thus,
\begin{equation}
\label{eq:x1}
\frac{L}{d-r} \sum_{j=r+1}^d j^{-\alpha}=\mathcal{O}(d^{-1})
\end{equation}
~\cref{eq:Noise} and~\cref{eq:x1} give:
\begin{equation}
\label{eq:NoiseEq}
|R_{Noise}| = \mathcal{O}(\mathcal{O}(d^{-1}))=\mathcal{O}(d^{-1}).
\end{equation}
Similarly,
\begin{equation}
|R_{White}| =\mathcal{O} (\nu^2(d)).
\end{equation}
Lemma~\ref{LemmaPhaseTransition} shows that $\nu^2(d)= \mathcal{O}(d^{-1})$, so 
\begin{equation}
\label{eq:WhiteEq}
|R_{White}| = \mathcal{O}(\mathcal{O}(d^{-1}))=\mathcal{O}(d^{-1}).
\end{equation}
Substituting~\cref{eq:NoiseEq} and~\cref{eq:WhiteEq} into~\cref{eq:noiseB} gives:
\begin{equation}
\lim_{d \to \infty} \Delta(z) = \lim_{d \to \infty} [\mathcal{O}(d^{-1}) + \mathcal{O}(d^{-1})] = 0
\end{equation}
By the Squeeze theorem,
\begin{equation}
\lim_{d \to \infty}|m_{Noise}(z) - m_{White}(z)|=0
\end{equation}

\end{proof}

\begin{proof}[Proof of Lemma~\ref{LemmaESD}]
Examine the normalization matrix $\tilde{\Sigma} = \frac{1}{\nu^2(d)} \Sigma$:
\begin{equation}
\tilde{\Sigma} = \frac{1}{\nu^2(d)} \Sigma_{Signal} + \frac{1}{\nu^2(d)} \Sigma_{Noise}.
\end{equation}
Define the set of  eigenvalues belonging to $\frac{1}{\nu^2(d)} \Sigma_{Noise}$
\begin{equation}
\tilde{\mathcal{T}}_A=\{\tilde{\tau}_j = \frac{\tau_j}{\nu^2(d)}\}_{j=r+1}^d.
\end{equation}
The Stieltjes transform of $\tilde{\mathcal{T}}_A$ is:
\begin{equation}
\tilde{m}_{Noise}(z) \triangleq \frac{1}{d-r} \sum_{j=r+1}^d \frac{1}{\tilde{\tau}_j - z}.
\end{equation}
Substituting $\tilde{\tau}_j$'s definition and multiplying both the numerator and denominator by $\nu^2(d)$ give:
\begin{align}
\tilde{m}_{Noise}(z) &= \frac{1}{d-r} \sum_{j=r+1}^{d} \frac{1}{\frac{\tau_j}{\nu^2(d)} - z} \\
&= \frac{1}{d-r} \sum_{j=r+1}^{d} \frac{\nu^2(d)}{\tau_j - \nu^2(d)z} \\
&= \nu^2(d) \cdot \underbrace{\left( \frac{1}{d-r} \sum_{j=r+1}^{d} \frac{1}{\tau_j - \nu^2(d)z} \right)}_{m_{Noise}(\nu^2(d)z)}
\end{align}
% \begin{equation}
% \begin{split}
% \tilde{m}_{Noise}(z) &= \frac{1}{d-r} \sum_{j=r+1}^d \frac{1}{\frac{\tau_j}{\nu^2(d)} - z}\\
% &= \frac{1}{d-r} \sum_{j=r+1}^d \frac{\nu^2(d)}{\tau_j - \nu^2(d) z} \\
% &= \nu^2(d) \cdot \underbrace{\left( \frac{1}{d-r} \sum_{j=r+1}^d \frac{1}{\tau_j - (\nu^2(d) z)} \right)}_{m_{Noise}(\nu^2(d) z)}.
% \end{split}
% \end{equation}
The remaining summation term is $m_{Noise}(z)$, with the independent variable becoming $\nu^2(d) z$. Therefore, the following relation holds:
\begin{equation}
\tilde{m}_{Noise}(z) = \nu^2(d) \cdot m_{Noise}(\nu^2(d) z)
\end{equation}
By Lemma~\ref{LemmaSTrans} $\lim_{d \to \infty}|m_{Noise}(z) - m_{White}(z)|=0$, so the relation holds:
\begin{align}
\label{eq:mlimit}
 \lim_{d \to \infty} \tilde{m}_{Noise}(z) &=\lim_{d \to \infty}\nu^2(d) \cdot \left( \frac{1}{\nu^2(d) - (\nu^2(d) z)} \right) \\ &= \lim_{d \to \infty}\nu^2(d) \cdot \frac{1}{\nu^2(d) (1 - z)}  \\ &= \lim_{d \to \infty}\frac{\nu^2(d)}{\nu^2(d)} \cdot \frac{1}{1 - z} \\ &= \lim_{d \to \infty}\frac{1}{1 - z}\\
&= \frac{1}{1 - z}.
\end{align}
% Note that function $f(z)=\frac{1}{1-z}$ is the Stieltjes transform of the Dirac measure $\delta_1$. In fact,
% \begin{equation}
% m_{\delta_1}(z) = \int_{\mathbb{R}} \frac{1}{t - z} d\delta_1(t)=m_{\delta_1}(z) = \frac{1}{1 - z}
% \end{equation}
% By Theorem~\ref{ThmStieltjesTransformContinuity}, the ESD of $\frac{1}{\nu^2(d)} \Sigma_{Noise}$ weakly converges to $\delta_1$:
% \begin{equation}
% \text{ESD}(\frac{1}{\nu^2(d)} \Sigma_{Noise})\xrightarrow{w.} \delta_1 
% \end{equation}

% Theorem~\ref{ThmStieltjesTransformContinuity} states that if a sequence of Stieltjes transforms $m_n(z)$ converges to a limit $m(z)$, and $m(z)$ corresponds to a unique probability measure $\mu$, then the underlying distributions $\mu_n$ must weakly converge to $\mu$. \cref{eq:mlimit} gives the asymptotic limit of the noise matrix's transform:
%     \begin{equation}
%         \lim_{d \to \infty} \tilde{m}_{Noise}(z) = \frac{1}{1-z}.
%     \end{equation}
%     Note that the limit function $(1-z)^{-1}$ is the unique Stieltjes transform of the Dirac mass $\delta_1$.  In fact,
% \begin{equation}
% m_{\delta_1}(z) = \int_{\mathbb{R}} \frac{1}{t - z} d\delta_1(t)=m_{\delta_1}(z) = \frac{1}{1 - z}
% \end{equation}
% Consequently, applying Theorem~\ref{ThmStieltjesTransformContinuity} allows us to translate the algebraic limit directly into the geometric collapse of the spectrum:
%     \begin{equation}
%         \text{ESD}\left(\frac{1}{\nu^2(d)} \Sigma_{Noise}\right) \xrightarrow{w} \delta_1.
%     \end{equation}

Theorem~\ref{ThmStieltjesTransformContinuity} states that if a sequence of Stieltjes transforms $m_n(z)$ converges to a limit $m(z)$, and $m(z)$ corresponds to a unique probability measure $\mu$, then the underlying distributions $\mu_n$ must weakly converge to $\mu$. \cref{eq:mlimit} gives the asymptotic limit of the noise matrix's transform:
\begin{equation}
    \lim_{d \to \infty} \tilde{m}_{Noise}(z) = \frac{1}{1-z}.
\end{equation}
Note that the limit function $(1-z)^{-1}$ is the unique Stieltjes transform of the Dirac mass $\delta_1$. In fact,
\begin{equation}
    m_{\delta_1}(z) = \int_{\mathbb{R}} \frac{1}{t - z} d\delta_1(t) = \frac{1}{1 - z}.
\end{equation}
Consequently, identifying the ESD as the sequence of measures $\{\mu_n\}$, we invoke Theorem~\ref{ThmStieltjesTransformContinuity} to translate the algebraic limit directly into the geometric collapse of the spectrum:
\begin{equation}
    \text{ESD}\left(\frac{1}{\nu^2(d)} \Sigma_{Noise}\right) \xrightarrow{w} \delta_1.
\end{equation}

By the linear property of random variables, the ESD of $\Sigma_{Noise}$ weakly converges to the Dirac measure concentrated at $\nu^2(d)$, i.e.,
\begin{equation}
\text{ESD}( \Sigma_{Noise})\xrightarrow{w.} \delta_{\nu^2(d)}, \quad  d\rightarrow \infty 
\end{equation}
\end{proof}
\begin{proof}[Proof of Theorem~\ref{ThmBBP}]

By Lemma~\ref{LemmaESD}, $\text{ESD}(\Sigma_{Noise})$ weakly converges to the Dirac measure $\delta_{\nu^2(d)}$:
\begin{equation}
\text{ESD}( \Sigma_{Noise})\xrightarrow{w.} \delta_{\nu^2(d)}, \quad  d\rightarrow \infty 
\end{equation}
Obviously, $\delta_{\nu^2(d)}$ is a deterministic probability measure. Since $\nu^2(d)=\mathcal{O}(d^{-1})$, the support set $\{\nu^2(d)\}$ is a compact set. Therefore, take $\delta_{\nu^2(d)}$ as $\mu$. By Lemma~\ref{LemmaPhaseTransition}, for $d\rightarrow \infty, k=1,...,r$, the relation:
\begin{equation}
\tau_k > \nu^2(d)\cdot (1+\sqrt{c})
\end{equation}
holds, which means that $\text{Spec}(\Sigma_{signal}) = (\tau_1, \dots, \tau_r, 0, \dots, 0)$ falls into the super-critical regime. For $d\rightarrow \infty, k=r+1,...,d$, the relation:
\begin{equation}
\tau_k \leq \nu^2(d)\cdot (1+\sqrt{c})
\end{equation}
holds, which means that $\text{Spec}(\Sigma_{tail})=(0, \dots, 0, \tau_{r+1}, \dots, \tau_d)$ falls into the sub-critical regime.
\end{proof}

\begin{proof}[Proof of Lemma~\ref{LemmaLambdaEqui}]
By Theorem~\ref{ThmBBP}, since $\text{Spec}(\Sigma_{Signal})$ falls into the super-critical regime, the relation holds:
\begin{equation}
\label{eq:rho}
 \lambda_k \xrightarrow{a.s.}\rho(\tau_k)= \tau_k \left( 1 + \frac{c \cdot \nu^2(d)}{\tau_k - \nu^2(d)} \right). 
\end{equation}
Notice that:
\begin{equation}
\frac{1}{\tau_k - \nu^2(d)} = \frac{1}{\tau_k \left( 1 - \frac{\nu^2(d)}{\tau_k} \right)} = \frac{1}{\tau_k} \cdot \left( 1 - \frac{\nu^2(d)}{\tau_k} \right)^{-1}
\end{equation}
Applying the power series expansion formula $(1-x)^{-1} = 1 + x + x^2 + \mathcal{O}(x^3)$, where $x = \frac{\nu^2(d)}{\tau_k}$, gives:
\begin{equation}
\label{eq:taylor}
\left( 1 - \frac{\nu^2(d)}{\tau_k} \right)^{-1} = 1 + \frac{\nu^2(d)}{\tau_k} + \left(\frac{\nu^2(d)}{\tau_k}\right)^2 + \mathcal{O}(\tau_k^{-3}).
\end{equation}
Substituting~\cref{eq:taylor} into~\cref{eq:rho} gives:
\begin{equation}
\begin{aligned} \rho(\tau_k) &= \tau_k \left[ 1 + c \nu^2(d) \cdot \left( \frac{1}{\tau_k} \left( 1 + \frac{\nu^2(d)}{\tau_k} + \frac{(\nu^2(d))^2}{\tau_k^2} + \dots \right) \right) \right] \\ &= \tau_k \left[ 1 + \frac{c \nu^2(d)}{\tau_k} + \frac{c (\nu^2(d))^2}{\tau_k^2} + \frac{c (\nu^2(d))^4}{\tau_k^3} + \mathcal{O}(\tau_k^{-4}) \right]\\
&= \tau_k + c \nu^2(d) + \frac{c (\nu^2(d))^2}{\tau_k} + \mathcal{O}(\tau_k^{-2})  \end{aligned}
\end{equation}
By the definition of almost surely convergence,
\begin{equation}
\lim_{\tau_k \to \infty} \frac{\lambda_k}{\tau_k} = \lim_{\tau_k \to \infty} \left( \frac{\tau_k + c \nu^2(d) + \frac{c (\nu^2(d))^2}{\tau_k} + O(\tau_k^{-2})}{\tau_k} \right) = 1
\end{equation}
By Definition~\ref{AsymEqui}, the relation holds:
\begin{equation}
\lambda_k\sim \tau_k, \quad k=1,...,r.
\end{equation}
Similarly, since $\text{Spec}(\Sigma_{tail})$ falls into the sub-critical regime of Theorem~\ref{ThmBBP}, 
\begin{equation}
\lambda_k\sim \nu^2(d)(1+\sqrt{c})^2, \quad k=r+1,...,r.
\end{equation}
\end{proof}

\begin{proof}[Proof of Theorem~\ref{ThmX}]
Let the SVD of $\mathbf{X}$ be $\mathbf{X} = U \Sigma_{svd} V^T$, where $\Sigma_{svd}$ is a diagonal matrix with diagonal elements being singular values $\sigma_1(\mathbf{X}) \ge \sigma_2(\mathbf{X}) \ge \dots \ge \sigma_d(\mathbf{X}) \ge 0$. Recall that the sample covariance matrix is defined as:
\begin{equation}
S_N=\frac{1}{N}\mathbf{X}\mathbf{X}^T\in \mathbb{R}^{d\times d}.
\end{equation}
Substituting $\mathbf{X} = U \Sigma_{svd} V^T$ into $S_N$ yields:
\begin{equation}
\begin{aligned}
S_N &= \frac{1}{N} (U \Sigma_{svd} V^T) (U \Sigma_{svd} V^T)^T\\
&=\frac{1}{N} U \Sigma_{svd} V^T V \Sigma_{svd}^T U^T
\end{aligned}
\end{equation}
Since $V$ is an orthogonal matrix and $\Sigma_{svd}$ is a diagonal matrix, $V^T V = I$,  $\Sigma_{svd}^T = \Sigma_{svd}$. Therefore:
\begin{equation}
\label{SnSVD}
\begin{aligned}
S_N &=\frac{1}{N} U (\Sigma_{svd} \Sigma_{svd}) U^T\\
&=U \left( \frac{1}{N} \Sigma_{svd}^2 \right) U^T
\end{aligned}
\end{equation}
Note that~\cref{SnSVD} is the SVD of $S_N$, where $U$ is the eigenvector matrix.  Therefore,
\begin{equation}
\lambda_k = \frac{1}{N} \sigma_k^2(\mathbf{X}),
\end{equation}
which gives:
\begin{equation}
\sigma_k(\mathbf{X}) = \sqrt{N \cdot \lambda_k}.
\end{equation}
Since $N$ is a scalar constant, in the limit process of $d, N \to \infty$, it only affects the amplitude and does not affect the shape of the distribution. By Lemma~\ref{LemmaLambdaEqui},
\begin{equation}
\begin{aligned} \sigma_k(\mathbf{X}) &\sim \sqrt{N \cdot \tau_k} =
\begin{cases}
\sqrt{NL}\cdot k^{-\alpha/2},& \quad k\leq r\\

\sqrt{N\nu^2(d)}\cdot(1+\sqrt{c}),&\quad r<k\leq d
\end{cases}
\end{aligned}.
\end{equation}
By Definition~\ref{AsymBound}, $\sqrt{NL}\cdot k^{-\alpha/2}\sim \mathcal{O}(k^{-\alpha/2})$. Recall  the definition of $\nu^2(d)$:
\begin{equation}
\nu^2(d)= \frac{1}{d-r} \sum_{j=r+1}^d \tau_j = \frac{L}{d-r} \sum_{j=r+1}^d j^{-\alpha}.
\end{equation}
Substituting the definition of $\nu^2(d)$ into $\sqrt{N\nu^2(d)}\cdot(1+\sqrt{c})$, when $d,N\rightarrow \infty$,
\begin{equation}
\label{eq:OBound}
\begin{aligned}
\sqrt{N\nu^2(d)}\cdot(1+\sqrt{c})&\rightarrow(1+\sqrt{\frac{d}{N}})\cdot\sqrt{\frac{NL}{d-r}\sum_{j=r+1}^dj^{-\alpha}}\\
&=\sqrt{\frac{NL}{d-r}\sum_{j=r+1}^dj^{-\alpha}}+\sqrt{\frac{dL}{d-r}\sum_{j=r+1}^dj^{-\alpha}}
\end{aligned}
\end{equation}
Since $\sum_{j=r+1}^dj^{-\alpha}$ is the tail of the Riemann Zeta function $\zeta(\alpha)$, it converges when $d\rightarrow \infty$. Therefore,
\begin{equation}
\label{eq:OBounddd}
    \sqrt{\frac{NL}{d-r}\sum_{j=r+1}^dj^{-\alpha}}=\mathcal{O}(d^{-\frac{1}{2}}),\quad \sqrt{\frac{dL}{d-r}\sum_{j=r+1}^dj^{-\alpha}}=\mathcal{O}(1).
\end{equation}
Substituting~\cref{eq:OBounddd} into~\cref{eq:OBound} yields:
\begin{equation}
\sqrt{N\nu^2(d)}\cdot(1+\sqrt{c})\sim \mathcal{O}(1).
\end{equation}
Therefore,
\begin{equation}
\begin{aligned} \sigma_k(\mathbf{X}) &\sim \
\begin{cases}
 \mathcal{O}(k^{-\alpha/2}),& \quad 1\leq k\leq r\\

\mathcal{O}(1),&\quad r<k\leq d
\end{cases}
\end{aligned}.
\end{equation}

By Definition~\ref{AsymBound}, there are two constants $C_1, C_2$, such that:
\begin{equation}
\begin{aligned} \sigma_k(\mathbf{X}) &\leq \
\begin{cases}
 C_1 \cdot k^{-\alpha/2},& \quad 1\leq k\leq r\\
C_2 ,&\quad r<k\leq d
\end{cases}
\end{aligned}.
\end{equation}
Since $ \sigma_{k+1}(\mathbf{X})\leq  \sigma_k(\mathbf{X})$, for all $k\in(r,d]$, 
\begin{equation}
 \sigma_k(\mathbf{X})\leq  \sigma_r(\mathbf{X})\leq  C_1\cdot r^{-\alpha/2}
\end{equation}
Take $C_2=C_1\cdot r^{-\alpha/2}$ yields:
\begin{equation}
\begin{aligned} \sigma_k(\mathbf{X}) &\leq \
\begin{cases}
 C_1 \cdot k^{-\alpha/2},& \quad 1\leq k\leq r\\
C_1\cdot r^{-\alpha/2} ,&\quad r<k\leq d
\end{cases}
\end{aligned}
\end{equation}
\end{proof}

\begin{proof}[Proof of Theorem~\ref{ThmG}]
By the Chain Rule, the gradient of the weight matrix $\nabla_\mathbf{W}$ is calculated by the product of the transpose of the input matrix and the error signal matrix:
\begin{equation}
\nabla_\mathbf{W} = \mathbf{X}^\top \mathbf{G},
\end{equation}
where $\mathbf{X}^\top\in\mathbb{R}^{N\times d}$.  By the product property of singular values of matrices, for any two matrices $A$ and $B$, the $k$-th largest singular value of $AB$ is bounded by: 
\begin{equation}
\sigma_k(AB) \le \|A\|_2 \cdot \sigma_k(B).
\end{equation}
As the transposition does not change matrix singular values, 
\begin{equation}
\sigma_k(\nabla_\mathbf{W})=\sigma_k(\nabla_\mathbf{W}^\top) = \sigma_k(\mathbf{G^\top X}) \le \|\mathbf{G}^\top\|_2 \cdot \sigma_k(\mathbf{X}).
\end{equation}
Since $\|\mathbf{G}^\top\|_2=\|\mathbf{G}\|_2$, 
\begin{equation}
\label{eq:WBound}
\sigma_k(\nabla_\mathbf{W}) \le \|\mathbf{G}\|_2 \cdot \sigma_k(\mathbf{X})\leq M\cdot \sigma_k(\mathbf{X}) 
\end{equation}
By Theorem~\ref{ThmX}, the $k$-th largest singular value of $\mathbf{X}$ satisfies the following asymptotic behavior:
\begin{equation}
\label{eq:XBound}
\begin{aligned} \sigma_k(\mathbf{X}) &\leq \
\begin{cases}
 C_1 \cdot k^{-\alpha/2},& \quad 1\leq k\leq r\\
C_1\cdot r^{-\alpha/2} ,&\quad r<k\leq d
\end{cases}
\end{aligned},
\end{equation}
where $C_1$ is a constant. Substituting~\cref{eq:XBound} into~\cref{eq:WBound} yields:
\begin{equation}
\begin{aligned} \sigma_k(\nabla_\mathbf{W}) &\leq \
\begin{cases}
(MC_1) \cdot k^{-\alpha/2},& \quad 1\leq k\leq r\\
(MC_1)\cdot r^{-\alpha/2} ,&\quad r<k\leq d
\end{cases}
\end{aligned}
\end{equation}
    
\end{proof}

\section{Math Proof of Quantization Increasing Stable Rank}
\label{sec:quant_increase_sr}

\begin{proof}[Proof of~\cref{LemmaCrossTerms}]
We provide the proof for the first equality; the second follows by symmetry. By definition, substituting $E = \sum_{i,j} Z_{ij}$ into the expectation yields:
\begin{align}
    \mathbb{E}[EE^T] &= \mathbb{E}\left[ \left(\sum_{i,j} Z_{ij}\right) \left(\sum_{p,q} Z_{pq}\right)^T \right] \\
    &= \mathbb{E}\left[ \sum_{i,j} \sum_{p,q} Z_{ij} Z_{pq}^T \right].
\end{align}
Exploiting the linearity of the expectation operator, we move the expectation inside the double summation:
\begin{equation}
    \mathbb{E}[EE^T] = \sum_{i,j} \sum_{p,q} \mathbb{E}[Z_{ij} Z_{pq}^T].
\end{equation}
We analyze the summand $\mathbb{E}[Z_{ij} Z_{pq}^T]$ based on the index pairs $(i,j)$ and $(p,q)$. When $(i,j) \ne (p,q)$, due to the mutual independence of the quantization noise, the expectation factorizes: 
\begin{equation}
    \mathbb{E}[Z_{ij} Z_{pq}^T] = \mathbb{E}[Z_{ij}] \mathbb{E}[Z_{pq}^T].
\end{equation}
Since $\mathbb{E}[Z_{ij}] = 0$, these terms vanish identically ($0 \cdot 0 = 0$). When $(i,j) = (p,q)$, these terms correspond to the self-variance and remain as
\begin{equation}
    \mathbb{E}[Z_{ij} Z_{ij}^T].
\end{equation}
Consequently, all off-diagonal cross-terms in the summation are zero, and the expression collapses to a single summation over the basis matrices:
\begin{equation}
    \mathbb{E}[EE^T] = \sum_{i,j} \mathbb{E}[Z_{ij} Z_{ij}^T].
\end{equation}
Similarly, $\mathbb{E}[E^TE] = \sum_{i,j} \mathbb{E}[Z_{ij}^T Z_{ij}]$ holds following the above analysis. This concludes the proof of Lemma~\ref{LemmaCrossTerms}.
\end{proof}

\begin{proof}[Proof of~\cref{ThmBernstein2}]
With Lemma~\ref{LemmaCrossTerms} established, we proceed to compute the value of $\delta^2$. We evaluate the matrix $\mathbb{E}[EE^T]$ element-wise. The $(i,j)$-th entry of the product matrix $EE^T$ is given by the inner product of the $i$-th and $j$-th rows of $E$:
\begin{equation}
    (EE^T)_{ij} = \sum_{k=1}^n e_{ik}e_{jk}.
\end{equation}
Taking the expectation of this entry and applying linearity:
\begin{equation}
    \mathbb{E}[(EE^T)_{ij}] = \sum_{k=1}^n \mathbb{E}[e_{ik}e_{jk}].
\end{equation}
We again invoke the independence property of the noise terms $e_{ik}$ and $e_{jk}$. For off-diagonal entries where $i \ne j$, the noise terms are independent, so 
\begin{equation}
    \mathbb{E}[e_{ik}e_{jk}] = \mathbb{E}[e_{ik}]\mathbb{E}[e_{jk}] = 0.
\end{equation}
Thus, $\mathbb{E}[EE^T]$ is a diagonal matrix. For diagonal entries where $i = j$, the expectation becomes the variance sum: $\mathbb{E}[e_{ik}^2] = B$. The summation yields:
\begin{equation}
        \sum_{k=1}^n \mathbb{E}[e_{ik}^2] = \sum_{k=1}^n B = nB.
\end{equation}

Therefore, $\mathbb{E}[EE^T]$ is a scaled identity matrix of size $m \times m$ with diagonal values $nB$. Its spectral norm is simply this diagonal value:
\begin{equation}
    \left\| \sum_{i,j} \mathbb{E}[Z_{ij} Z_{ij}^T] \right\|_2 = nB.
\end{equation}
By a symmetric argument for $\mathbb{E}[E^T E]$, we obtain a norm of $mB$. The variance statistic $\delta^2$ is determined by the maximum of these two values:
\begin{equation}
    \delta^2 = \max \left( nB, mB \right) = \max(m, n) \cdot B.
\end{equation}

Substituting this specific $\delta^2$ into Theorem \ref{thm:bernstein}, we obtain the tail bound for the quantization error. For large-scale deep learning models where $m, n \gg 1$, the variance term $\delta^2$ dominates the boundedness term $Rt/3$ in the denominator. Ignoring higher-order small terms, there exists a constant $C$ such that the bound simplifies to:
\begin{equation}
    P(\|E\|_2 \ge t) \le (m+n) \cdot \exp \left( -\frac{t^2}{C \cdot \max(m,n) \cdot s^2} \right),
\end{equation}
which concludes the proof.    
\end{proof}

\begin{proof}[Proof of~\cref{thm:relative_error}]
Our objective is to derive a probabilistic upper bound for the relative error, such that the stability holds with probability at least $1-\theta$. Recall that $\mathcal{R}(\epsilon_k)$ denotes the event where the relative error exceeds a threshold $\epsilon_k$. We algebraically transform this violation condition into an absolute error inequality:
\begin{equation}
\begin{split}
\mathcal{R}(\epsilon_k) &\iff \left\{ \frac{|\sigma_k(\tilde{A}) - \sigma_k(A)|}{\sigma_k(A)} > \epsilon_k \right\}\\
&\iff \left\{ |\sigma_k(\tilde{A}) - \sigma_k(A)| > \epsilon_k \cdot\sigma_k(A) \right\}.
\end{split}
\end{equation}
By applying Theorem \ref{lemma:weyl}, the spectral norm of the noise $\|E\|_2$ is a deterministic upper bound on the absolute perturbation $|\sigma_k(\tilde{A}) - \sigma_k(A)|$. This leads to a crucial logical implication: if the absolute perturbation exceeds $\epsilon_k\cdot \sigma_k(A)$, then the noise norm $\|E\|_2$ must necessarily exceed this threshold as well. In set-theoretic terms, the relative error violation event is a subset of the spectral norm tail event:
\begin{equation}
    \left\{ |\sigma_k(\tilde{A}) - \sigma_k(A)| > \epsilon_k \cdot \sigma_k(A) \right\} \subseteq \left\{ \|E\|_2 > \epsilon_k \cdot\sigma_k(A) \right\}.
\end{equation}
Consequently, by the monotonicity of probability measures, the probability of the relative error violation is strictly bounded by the tail probability of the spectral norm:
\begin{equation}
    P(\mathcal{R}(\epsilon_k)) \le P(\|E\|_2 > \epsilon_k\cdot \sigma_k(A)).
\end{equation}
Recalling the tail bound function $F(t) = P(\|E\|_2 \geq t)$ derived in Equation~\eqref{F(t)}, we obtain:
\begin{equation}
    P(\mathcal{R}(\epsilon_k)) \le F(\epsilon_k\cdot \sigma_k(A)).
\end{equation}
To guarantee the specified confidence level $1-\theta$, we must ensure that the violation probability  does not exceed $\theta$. We strictly equate the upper bound to $\theta$:
\begin{equation}
    F(\epsilon_k\cdot \sigma_k(A)) = \theta.
\end{equation}
This equation defines the boundary of our confidence interval. Since the tail function $F(t)$ is strictly monotonically decreasing for $t>0$, its inverse function $F^{-1}$ exists. Solving for the specific relative error threshold $\epsilon_k$ yields:
\begin{equation}
    \epsilon_k \cdot \sigma_k(A) = F^{-1}(\theta) \implies \epsilon_k(\theta) \triangleq\epsilon_k = \frac{F^{-1}(\theta)}{\sigma_k(A)}.
\end{equation}
In this final expression, the numerator $F^{-1}(\theta)$ represents a constant noise budget determined solely by the matrix dimensions, noise variance, and the confidence level $\theta$. It is independent of the index $k$.
This explicitly demonstrates the inverse proportionality:
\begin{equation}
    \epsilon_k(\theta) \propto \frac{1}{\sigma_k(A)}.
\end{equation}
Therefore, for any two singular values with magnitudes $\sigma_i > \sigma_j > 0$, under the same confidence level $1-\theta$, it strictly holds that:
\begin{equation}
    \epsilon_i(\theta) < \epsilon_j(\theta).
\end{equation}
This concludes the proof.
\end{proof}

\begin{proof}[Proof of Theorem~\ref{thm:asymptotic_robustness_exact}]
The proof relies on perturbation theory and concentration inequalities. By Theorem~\ref{ThmBernstein2}, the probability that the noise norm $||E||_2$ exceeds a threshold $t$ is bounded by:
\begin{equation}
    P(\|E\|_2 \ge t) \le (m+n) \cdot \exp\left( -\frac{t^2}{C \cdot D \cdot s^2} \right).
\end{equation}
Based on Theorem~\ref{lemma:weyl}, the absolute perturbation is bounded by the noise norm:
\begin{equation}
|\sigma_k(\tilde{A}) - \sigma_k(A)| \le \|E\|_2. 
\end{equation}
Thus, the failure condition $R_k > \eta$ implies $\|E\|_2 > \eta\cdot \sigma_k(A)$. Substituting $t = \eta\cdot \sigma_k(A)$ into the concentration bound yields:
\begin{equation}
\label{theta_P}
    \theta_k = P(R_k > \eta) \le (m+n) \cdot \exp\left( -\frac{\eta^2\cdot \sigma_k^2(A)}{C \cdot D \cdot s^2} \right).
\end{equation}
We now evaluate Equation (\ref{theta_P}) by substituting the piecewise asymptotic spectral profile from Assumption~\ref{ass:power_law}.

\textbf{Case 1:}
In regime $1\leq k\leq r$, $\sigma_k(A) \sim \mu \cdot k^{-\alpha/2}$. Squaring this term yields $\sigma_k^2(A) \sim \mu^2\cdot k^{-\alpha}$. Substituting this into the exponent:
\begin{equation}
\label{independent_D}
\begin{split}
    \frac{\eta^2\cdot \sigma_k^2(A)}{C \cdot D \cdot s^2} &\sim \frac{\eta^2 \cdot (\mu \cdot k^{-\alpha/2} )^2}{C \cdot D \cdot s^2} \\
    &= \frac{\eta^2 \cdot \mu^2\cdot k^{-\alpha} }{C \cdot D \cdot s^2} \\
    &= \underbrace{\frac{\eta^2 \mu^2}{C Ds^2}}_{\text{Constant term}} \cdot \underbrace{k^{-\alpha}}_{\text{Decay term}}.
\end{split}
\end{equation}
Applying Equation~\eqref{independent_D} to Equation~\eqref{theta_P} yields:
\begin{equation}
    \theta_k \lesssim (m+n) \cdot \exp\left( - \frac{\eta^2 \mu^2}{CD s^2} \cdot \frac{1}{k^{\alpha}} \right),
\end{equation}
\textbf{Case 2:} In regime $r<k\leq \min(m,n)$, the singular values flatten to the level of the $r$-th component: $\sigma_k(A) \sim \mu \cdot r^{-\alpha/2}$. Consequently, $\sigma_k^2(A) \sim \mu^2\cdot r^{-\alpha}$. Substituting this into the exponent yields a constant bound independent of $k$:
\begin{equation}
    \theta_k \lesssim (m+n) \cdot \exp\left( - \frac{\eta^2 \mu^2}{CDs^2} \cdot \frac{1}{r^{\alpha}} \right).
    \label{eq:bound_tail}
\end{equation}
Combining both cases, the explicit failure probability is bounded by:
\begin{equation}
    \theta_k \lesssim (m+n) \cdot \exp\left( - \frac{\eta^2 \mu^2}{CDs^2} \cdot \xi_k \right),
    \label{eq:final_piecewise_bound}
\end{equation}
where
\begin{equation}
    \xi_k = \begin{cases} k^{-\alpha}, & 1 \le k \le r \\ r^{-\alpha}, & r < k \le \min(m,n) \end{cases},
\end{equation}
which concludes the proof.
\end{proof}

\begin{proof}[Proof of~\cref{thm:stable_rank_impact}]
To analyze the shift in stable rank, we introduce the energy concentration function $F_A(k)$, defined as the proportion of the total energy captured by the top-$k$ principal components:
\begin{equation}
\label{eq:concentration}
    F_A(k) \triangleq \frac{H_k(A)}{H_k(A) + T_k(A)},
\end{equation}
where $H_k(A) = \sum_{i=1}^k \sigma_i^2(A)$ and $T_k(A) = \sum_{j=k+1}^r \sigma_j^2(A)$ represent the head and tail energies, respectively.
For the quantized matrix $\tilde{A}$, the perturbed energies can be expressed as $H_k(\tilde{A})=(1+\alpha)H_k(A)$ and $T_k(\tilde{A})=(1+\beta)T_k(A)$, where $\alpha,\beta\geq -1$. Substituting these into~\cref{eq:concentration} yields:
\begin{equation}
\begin{split}
    F_{\tilde{A}}(k) &= \frac{(1+\alpha)H_k(A)}{(1+\alpha)H_k(A) + (1+\beta)T_k(A)} \\
    &= \frac{H_k(A)}{H_k(A) + \frac{1+\beta}{1+\alpha}T_k(A)}.
\end{split}
\end{equation}
By \cref{thm:relative_error} and~\cref{thm:asymptotic_robustness_exact}, uniform quantization noise significantly inflates the tail energy while inducing negligible relative error in the head energy, implying $\beta > \alpha$. Consequently, the scaling factor satisfies $\frac{1+\beta}{1+\alpha} > 1$. It follows that the denominator in the expression for $F_{\tilde{A}}(k)$ increases relative to the numerator, leading to:
\begin{equation}
\label{eq:concentration_inequality}
    F_{\tilde{A}}(k) < F_{A}(k), \quad \forall k.
\end{equation}Specifically,  if $\alpha=-1$, 
\begin{equation}
    F_{\tilde{A}}(k) = \frac{(1+\alpha)H_k(A)}{(1+\alpha)H_k(A) + (1+\beta)T_k(A)}=0, 
\end{equation}
which also leads to 
\begin{equation}
    F_{\tilde{A}}(k)< F_A(k), \quad \forall k.
\end{equation}
Note that as $\beta>\alpha$, $\beta>-1$. Finally, we observe that the stable rank $S_r(A) = \frac{\|A\|_F^2}{\|A\|_2^2}$ is mathematically equivalent to the reciprocal of the energy concentration at $k=1$:
\begin{equation}
    S_r(A) = \frac{\sum_{i} \sigma_i^2(A)}{\sigma_1^2(A)} = \frac{1}{F_A(1)}.
\end{equation}
Applying the inequality from Eq.~\eqref{eq:concentration_inequality} at $k=1$, we conclude:
\begin{equation}
    S_r(\tilde{A}) = \frac{1}{F_{\tilde{A}}(1)} > \frac{1}{F_{A}(1)} = S_r(A).
\end{equation}
\end{proof}

\section{More Experiments}
\label{sec:more_exp}
\subsection{ Empirical Validation of Quantization Error Unbiasedness}
\label{subsec:exp1_unbiasedness}

The validity of our spectral analysis relies heavily on Assumption~\ref{uniform_ass}, which posits that quantization noise acts as a centered random variable with zero mean ($\mathbb{E}[e_{ij}] = 0$). To empirically substantiate this theoretical premise, we analyze the distribution of quantization errors in a pre-trained LLM.

We employ a standard decoder-only Transformer architecture following the \textbf{GPT-2-124M} configuration ($L=12$ layers, $H=12$ heads, $d_{model}=768$). The model is trained on the OpenWebText dataset  using the AdamW optimizer with a peak learning rate of $1.5 \times 10^{-4}$ and a global batch size of 2,560 (achieved via gradient accumulation on 8 GPUs). The detailed hyperparameters are summarized in Table~\ref{tab:exp1_setup}. All \texttt{FP4} training in this work adopts a strict W4A4G4 quantization scheme, where weights, activations, and gradients are universally represented in the E2M1 \texttt{NVFP4} format. We note that due to the proprietary nature of NVIDIA's closed-source \texttt{FP4} training software stack, native hardware-supported \texttt{FP4} training is not currently accessible for open research. Consequently, consistent with standard practices in algorithmic research, our experiments with \texttt{NVFP4} are conducted through high-fidelity simulation in \texttt{BF16}. This simulation accurately emulates the quantization noise and dynamic range constraints of the E2M1 format while executing computations on standard \texttt{BF16}-capable hardware.  To verify the error properties, we extracted the quantization error matrices $E = Q(W) - W$ from the Multi-Layer Perceptron (MLP) weights across all 24 sub-layers of the model. Figure~\ref{fig:quant_error_dist} illustrates the error frequency histograms for each layer, overlaid with fitted normal distribution curves.
\begin{figure}[htbp]
    \centering
    \includegraphics[width=1.0\linewidth]{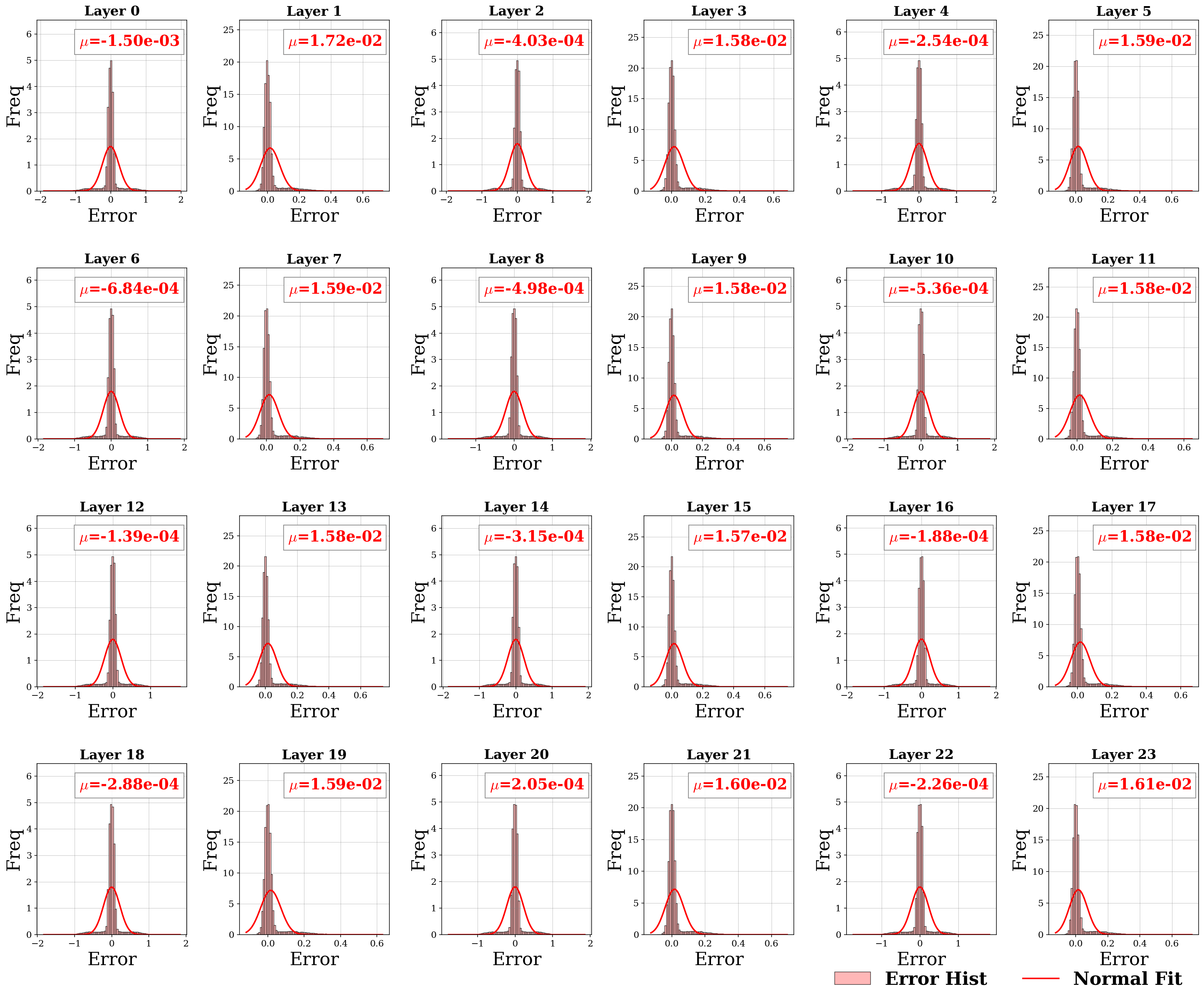} 
    \caption{ The histograms display the frequency of quantization errors for MLP linear layers in the \textbf{GPT-2-124M} model quantized with \texttt{NVFP4} . The red curves represent normal distribution fits. The annotated $\mu$ values indicate the empirical mean of the error for each layer.}
    \label{fig:quant_error_dist}
\end{figure}
\begin{table*}[htbp]
\centering
\caption{ This table summarizes the model architecture, optimization hyperparameters, and quantization settings used to validate the unbiasedness of quantization error in Section~\ref{subsec:exp1_unbiasedness}.}
\label{tab:exp1_setup}
\vspace{8pt}
\begin{tabular}{l|l||l|l}
\hline
\multicolumn{2}{c||}{\textbf{Model \& Data}} & \multicolumn{2}{c}{\textbf{Training Hyperparameters}} \\ \hline
\textbf{Architecture} & GPT-2 (Decoder-only) & \textbf{Optimizer} & AdamW \\
\textbf{Parameters} & $\approx$ 124M & \textbf{Peak Learning Rate} & $1.5 \times 10^{-4}$ \\
\textbf{Layers ($L$)} & 12 & \textbf{Weight Decay} & 0.1 \\
\textbf{Hidden Size ($d$)} & 768 & \textbf{Global Batch Size} & 2,560 \\
\textbf{Attention Heads ($H$)} & 12 & \textbf{Gradient Accumulation} & 40 steps \\
\textbf{Context Window} & 1024 tokens & \textbf{Warmup Steps} & 20 \\
\textbf{Dataset} & OpenWebText & \textbf{Gradient Clipping} & 1.0 \\
\textbf{Hardware} & 8 $\times$ GPUs & \textbf{Epochs} & 1 \\ \hline
\multicolumn{4}{c}{\textbf{Quantization Configuration (\texttt{NVFP4})}} \\ \hline
\textbf{Format} & \multicolumn{3}{l}{\texttt{NVFP4} (E2M1) - 4-bit Floating Point (2 exponent bits, 1 mantissa bit)} \\
\textbf{Scope} & \multicolumn{3}{l}{Weights, Activations, Gradients (Full \texttt{FP4} Training)} \\
\textbf{Scheme} & \multicolumn{3}{l}{Block-wise Symmetric Uniform Quantization} \\ \hline
\end{tabular}
\end{table*}

\paragraph{Results.} As evidenced in Figure~\ref{fig:quant_error_dist}, the quantization error distributions exhibit a high degree of symmetry around zero. Crucially, the empirical mean values ($\mu$) for all layers are negligible, ranging from order $10^{-4}$ to $10^{-2}$ (e.g., Layer 0: $\mu = -1.50 \times 10^{-3}$, Layer 10: $\mu = -5.36 \times 10^{-4}$), which are statistically insignificant compared to the dynamic range of the weights. This empirical evidence strongly supports Assumption~\ref{uniform_ass}. While the theoretical assumption simplifies the noise to a uniform distribution, the critical property required for our spectral perturbation bounds is the unbiased nature of the noise (i.e., the first moment $\mathbb{E}[E] = 0$). The observed distributions confirm that the \texttt{NVFP4} quantization mechanism does not introduce a systematic bias to the weight matrices, thereby validating the use of centered random matrix theory in our subsequent spectral analysis.
\subsection{ Validation of Relative Error Bounds}
\label{subsec:exp2_relative_error}

Building on the confirmation of error unbiasedness, this experiment aims to rigorously validate Theorem~\ref{thm:relative_error}, which posits a fundamental spectral asymmetry: the relative quantization error $\epsilon_k$ is strictly inversely proportional to the magnitude of the singular value $\sigma_k$. Mathematically, this relationship is expressed as $\epsilon_k \propto 1/\sigma_k(A)$. This experiment inherits the identical experimental setup described in Section~\ref{subsec:exp1_unbiasedness} and summarized in Table~\ref{tab:exp1_setup}. We utilize the same pre-trained \textbf{GPT-2-124M} model and apply the simulated \texttt{NVFP4} (W4A4G4) quantization to all linear layers. For each layer $l$, we compute the Singular Value Decomposition (SVD) of the original parameter matrix $A^{(l)}$ and its quantized counterpart $\tilde{A}^{(l)}$. We then calculate the component-wise relative error $\epsilon_k$ for each rank index $k$:
\begin{equation}
    \epsilon_k = \frac{|\sigma_k(\tilde{A}^{(l)}) - \sigma_k(A^{(l)})|}{\sigma_k(A^{(l)})}
\end{equation}
To test the linearity predicted by Theorem~\ref{thm:relative_error}, we perform a linear regression analysis of $\epsilon_k$ against the inverse singular values $1/\sigma_k(A^{(l)})$.

\begin{figure*}[htbp]
    \centering
    \includegraphics[width=1.0\linewidth]{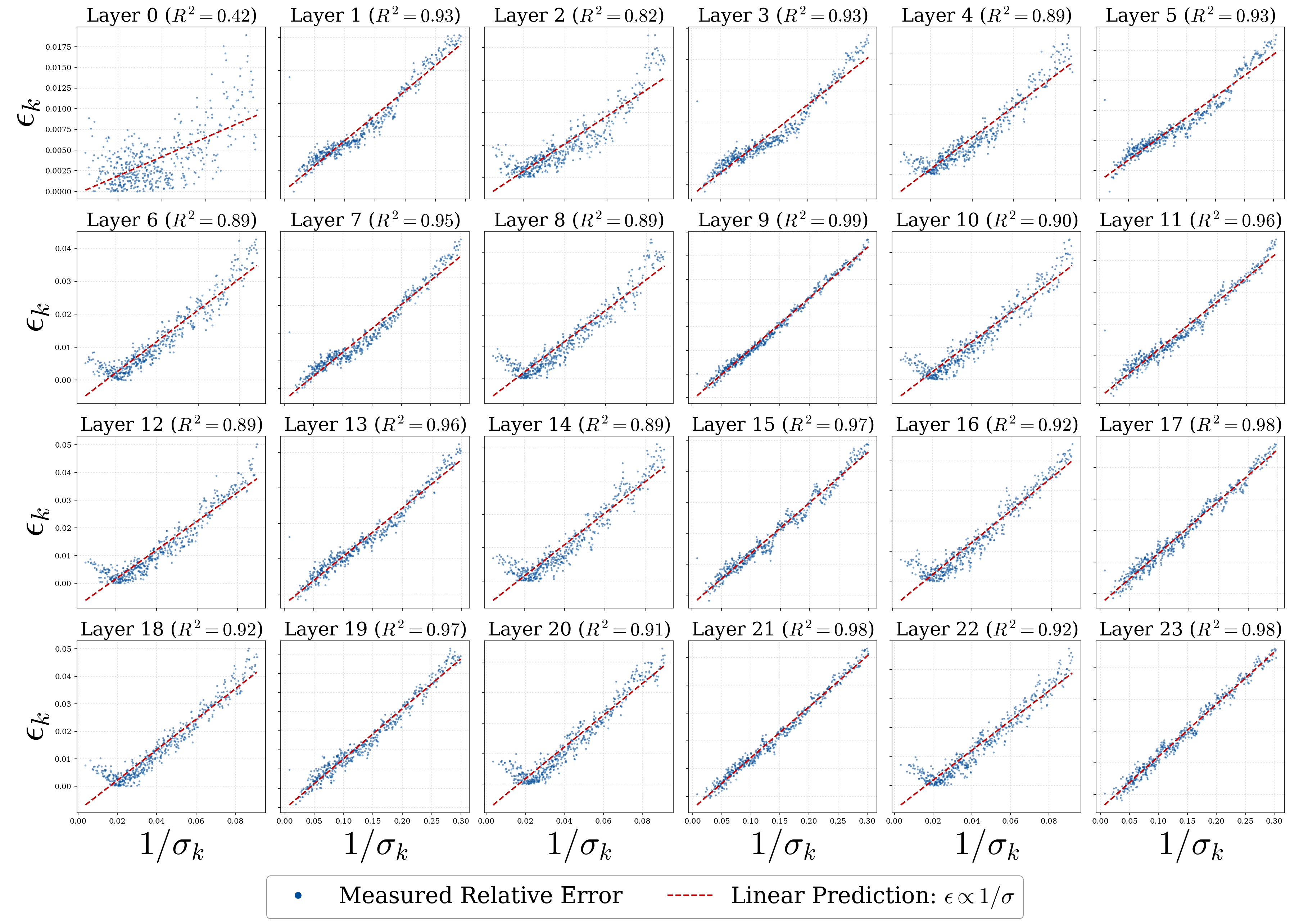} % 请确保文件名与 Figure 4 对应
    \caption{ The scatter plots illustrate the relationship between the relative quantization error $\epsilon_k$ (y-axis) and the inverse of the singular value $1/\sigma_k$ (x-axis) across varying layers of the \textbf{GPT-2-124M} model. The red dashed lines represent the linear regression fit. The coefficient of determination ($R^2$) is annotated for each layer, indicating the goodness of fit.}
    \label{fig:relative_error_analysis}
\end{figure*}
\paragraph{Results.} The results are presented in Figure~\ref{fig:relative_error_analysis}, covering layers 0 through 23. The visual evidence strongly corroborates our theoretical derivation. Across the majority of layers, we observe a striking linear relationship between the relative error $\epsilon_k$ and the reciprocal of the singular value $1/\sigma_k$. This confirms the prediction of~\cref{eq:inverse} that $\epsilon_k(\theta) \propto \frac{1}{\sigma_k(A)}$. The linear regression fits yield high coefficients of determination ($R^2$). For instance, deeper layers such as Layer 9 ($R^2=0.99$), Layer 19 ($R^2=0.97$), and Layer 23 ($R^2=0.98$) exhibit near-perfect adherence to the theoretical model. While some initial layers (e.g., Layer 0 with $R^2=0.42$) show higher variance due to distinct initialization distributions, the global trend remains consistent. The data demonstrates that dominant singular values (where $1/\sigma_k$ is small) suffer from negligible relative error, effectively preserving the head information. Conversely, as we move to the tail (where $1/\sigma_k$ is large), the relative error scales linearly, leading to significant signal distortion. This mechanism explains the spectral flattening phenomenon: the quantization noise floor disproportionately destroys the low-energy components responsible for fine-grained semantic distinctions, while leaving the coarse-grained structural information relatively intact.
This experiment provides robust empirical support for Theorem~\ref{thm:relative_error}, confirming that uniform quantization does not degrade information uniformly; rather, it selectively erodes the heavy-tailed spectral structure essential for LLM performance.

\subsection{ Universality of Stable Rank Increase}
\label{subsec:exp3_stablerank}

Theorem~\ref{thm:stable_rank_impact} predicts that uniform quantization inherently flattens the spectral distribution, leading to a strict increase in the stable rank of parameter matrices ($S_r(\tilde{A}) > S_r(A)$). To demonstrate the universality of this phenomenon, we conducted full \texttt{FP4} training (simulated W4A4G4) on a diverse set of Large Language Models. The experimental suite includes \textbf{GPT-2-124M} and five state-of-the-art architectures: \textbf{TinyLlama-1.1B}, \textbf{Phi-1.5-1.3B}, \textbf{Pythia-1.4B}, \textbf{DeepSeek-Coder-1.3B}, and \textbf{Qwen2.5-1.5B}. These models cover a wide spectrum of design choices, including varying depths (12 to 28 layers), embedding dimensions (768 to 2048), head counts (12 to 32), and vocabulary sizes (32k to 152k). The detailed architectural specifications and training hyperparameters are summarized in Table~\ref{tab:exp3_models}. All models were trained on the OpenWebText dataset using 8 GPUs with \texttt{NVFP4} quantization. We compared the singular value spectra of the pre-quantization weights (\texttt{BF16}) and their post-quantization counterparts (\texttt{NVFP4}) across all layers for each model. The results are visualized in Figure~\ref{fig:exp3_part1}, Figure~\ref{fig:exp3_part2}, and Figure~\ref{fig:exp3_part3}.

\begin{table*}[t]
\centering
\caption{Model Architectures and Training Hyperparameters for Section \ref{subsec:exp3_stablerank}. This table details the diverse set of LLMs used to validate the universality of Theorem~\ref{thm:stable_rank_impact}. All models utilize AdamW optimizer, Cosine learning rate schedule, and W4A4G4 (\texttt{NVFP4}) quantization.}   
\label{tab:exp3_models}
\vspace{8pt}
\resizebox{\textwidth}{!}{%
\begin{tabular}{l|c|c|c|c|c|c|c|c}
\hline
\textbf{Model Family} & \textbf{Params} & \textbf{Layers} & \textbf{Hidden Dim} & \textbf{Heads} & \textbf{KV Heads} & \textbf{Vocab Size} & \textbf{Peak LR} & \textbf{Context Window} \\ \hline
\textbf{GPT-2}  & 124M & 12 & 768 & 12 & 12 & 50,257 & $1.5 \times 10^{-4}$ & 1024 \\ 
\textbf{TinyLlama} & 1.1B & 22 & 2048 & 32 & 8 & 32,000 & $4.0 \times 10^{-3}$ & 1024 \\
\textbf{Phi-1.5} & 1.3B & 24 & 2048 & 32 & 32 & 51,200 & $4.0 \times 10^{-4}$ & 1024 \\
\textbf{Pythia} & 1.4B & 24 & 2048 & 16 & 16 & 50,304 & $4.0 \times 10^{-4}$ & 1024 \\
\textbf{DeepSeek-Coder} & 1.3B & 24 & 2048 & 32 & 32 & 50,304 & $4.0 \times 10^{-4}$ & 1024 \\
\textbf{Qwen2.5} & 1.5B & 28 & 1536 & 12 & 2 & 151,936 & $4.0 \times 10^{-4}$ & 1024 \\ \hline
\end{tabular}%
}
\end{table*}

% Figure Part 1: GPT-2 & TinyLlama
\begin{figure*}[htbp]
    \centering
    \begin{minipage}{0.49\textwidth}
        \centering
        \includegraphics[width=\linewidth]{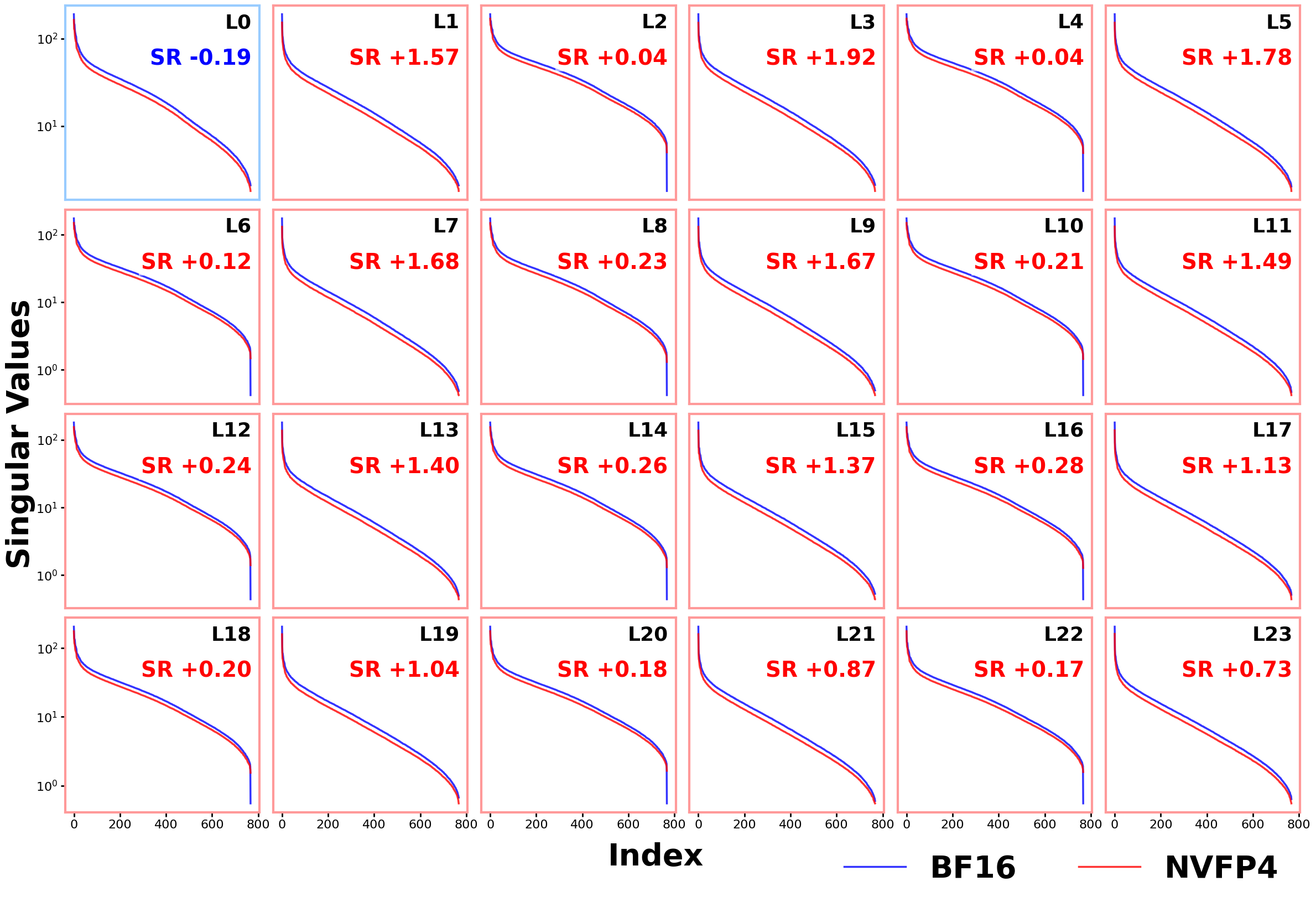}
        \caption*{\textbf{ GPT-2-124M}}
    \end{minipage}
    \hfill
    \begin{minipage}{0.49\textwidth}
        \centering
        \includegraphics[width=\linewidth]{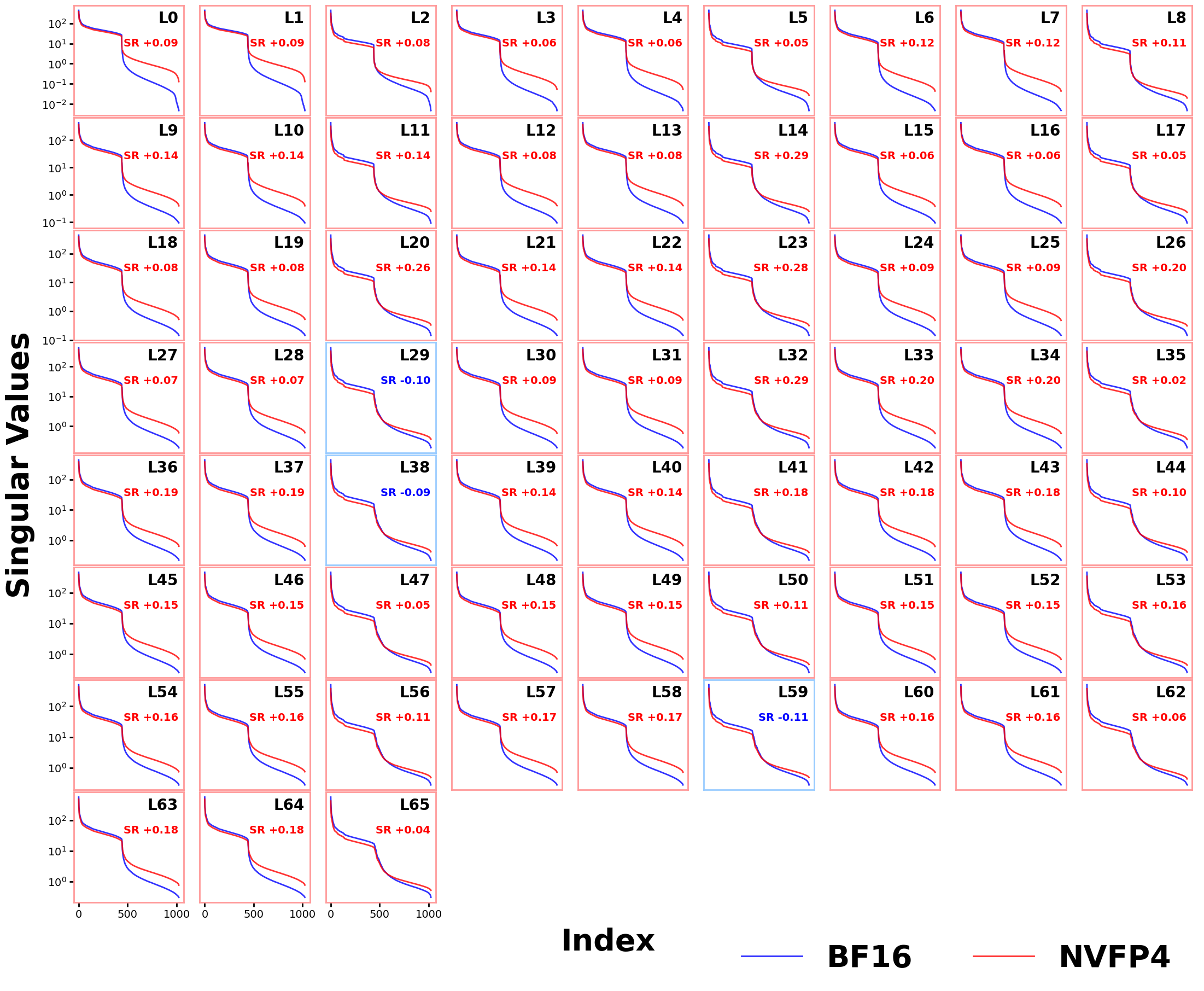}
        \caption*{\textbf{ TinyLlama-1.1B}}
    \end{minipage}
    \caption{Spectral analysis of \textbf{GPT-2-124M} and \textbf{TinyLlama-1.1B}, where quantization induces a consistent positive shift in Stable Rank (SR+) in the majority of layers.}
    \label{fig:exp3_part1}
\end{figure*}

% Figure Part 2: Phi-1.5 & Pythia

% Figure Part 3: DeepSeek & Qwen

\paragraph{Results.} As shown in Figures \ref{fig:exp3_part1}-\ref{fig:exp3_part3}, every tested architecture exhibits the characteristic spectral flattening pattern. The head singular values remain relatively stable, while the tail singular values are significantly elevated by the quantization noise floor. Quantitatively, this results in a strict increase in the Stable Rank for the vast majority of layers (indicated by SR+ in the plots). Starting with the \textbf{GPT-2-124M}, we observe the foundational spectral degradation. This trend is amplified in larger, deeper models. In \textbf{Qwen2.5-1.5B}, which features a deep narrow structure (28 layers, 1536 dim), the stable rank increase is pronounced across middle layers. Similarly, \textbf{TinyLlama-1.1B} and \textbf{Phi-1.5-1.3B} show consistent spectral degradation. This cross-model consistency validates Theorem~\ref{thm:stable_rank_impact} as a fundamental law governing low-precision embeddings, suggesting that the loss of representational capacity via tail truncation is an unavoidable consequence of uniform quantization in heavy-tailed distributions.
\begin{figure*}[htbp]
    \centering
    \begin{minipage}{0.49\textwidth}
        \centering
        \includegraphics[width=\linewidth]{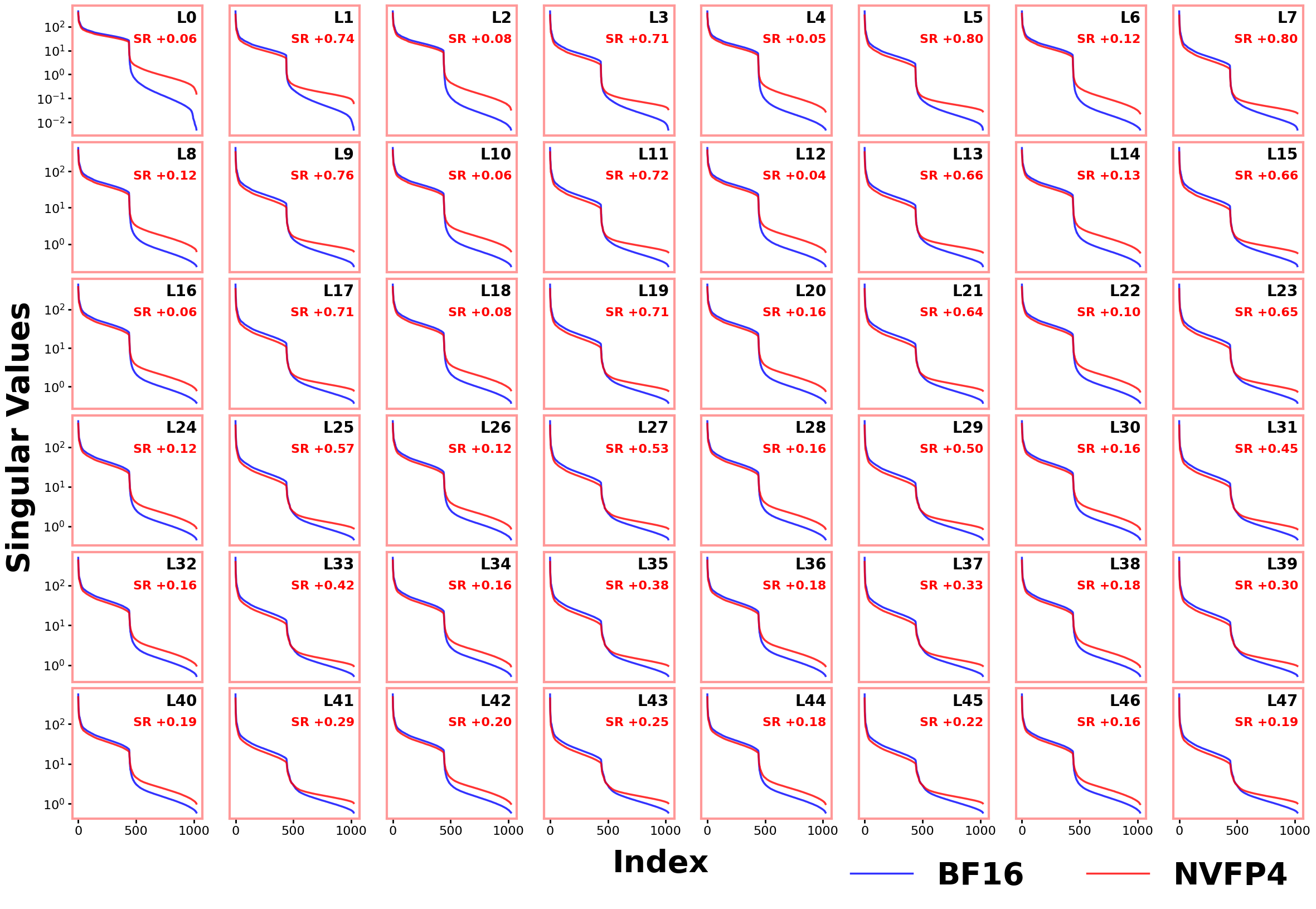}
        \caption*{\textbf{ Phi-1.5-1.3B}}
    \end{minipage}
    \hfill
    \begin{minipage}{0.49\textwidth}
        \centering
        \includegraphics[width=\linewidth]{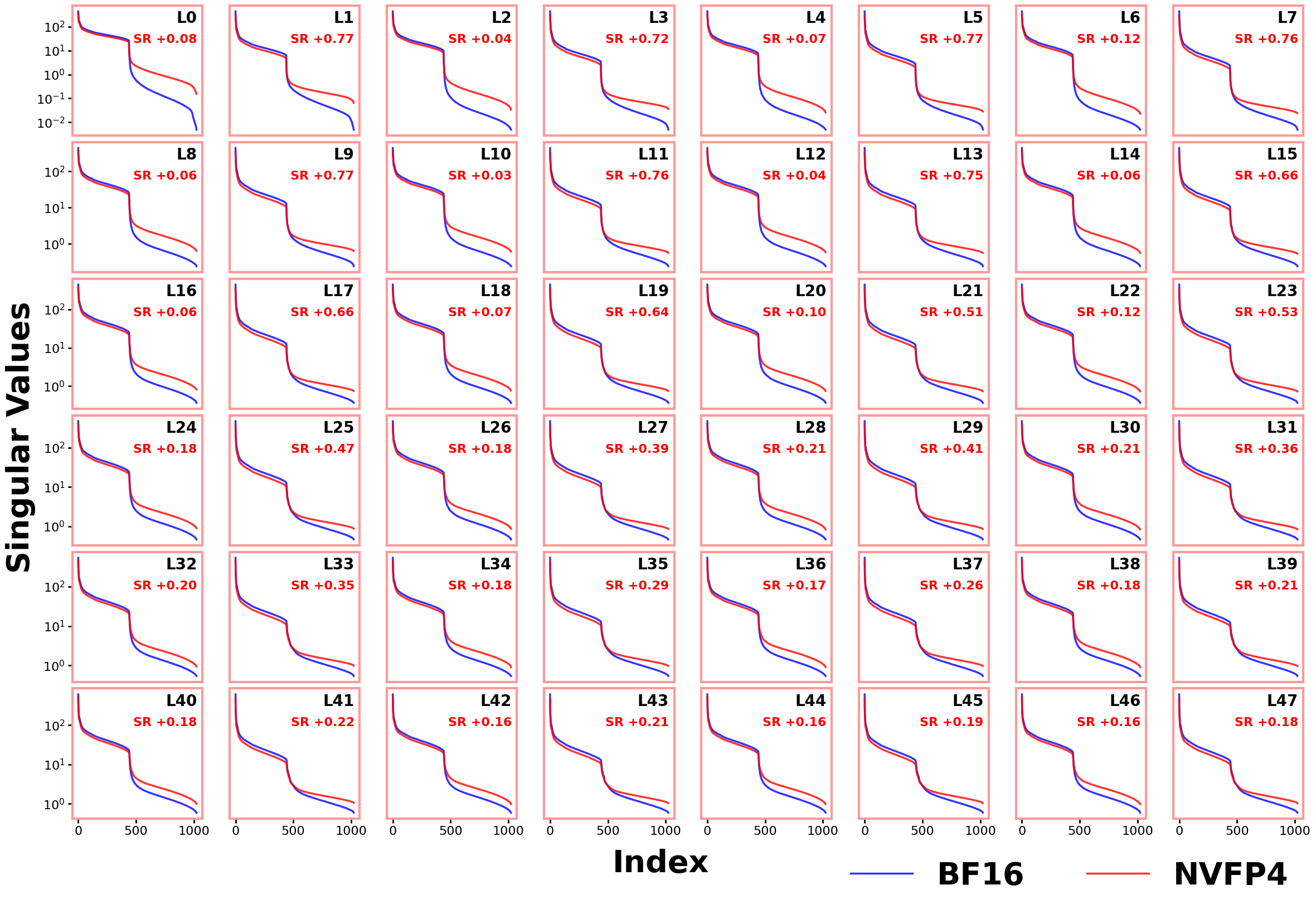}
        \caption*{\textbf{ Pythia-1.4B}}
    \end{minipage}
    \caption{Spectral analysis of \textbf{Phi-1.5-1.3B} and \textbf{Pythia-1.4B}, where quantization induces a consistent positive shift in Stable Rank (SR+) in the majority of layers.}
    \label{fig:exp3_part2}
\end{figure*}
\begin{figure*}[htbp]
    \centering
    \begin{minipage}{0.49\textwidth}
        \centering
        \includegraphics[width=\linewidth]{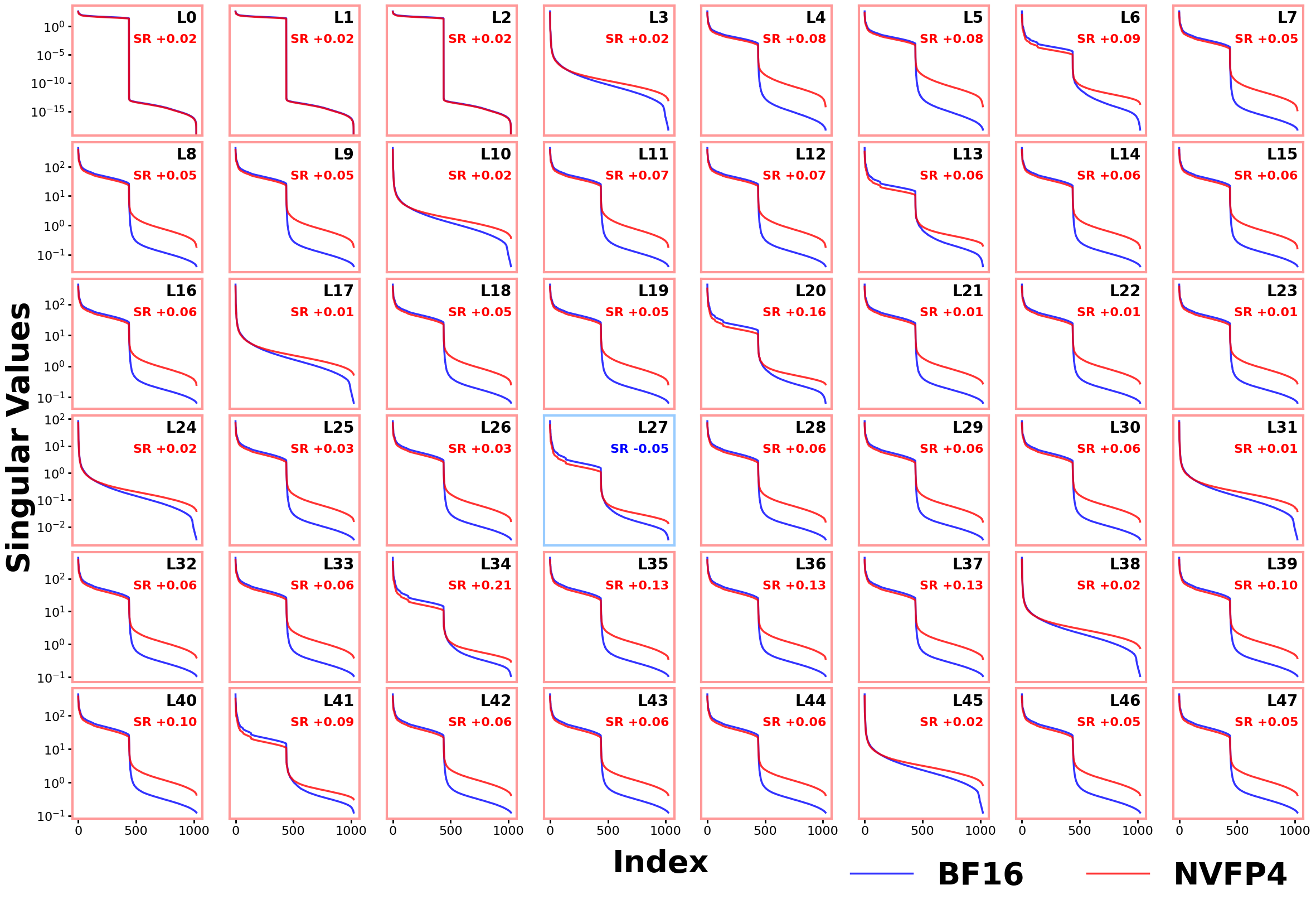}
        \caption*{\textbf{ DeepSeek-Coder-1.3B}}
    \end{minipage}
    \hfill
    \begin{minipage}{0.49\textwidth}
        \centering
        \includegraphics[width=\linewidth]{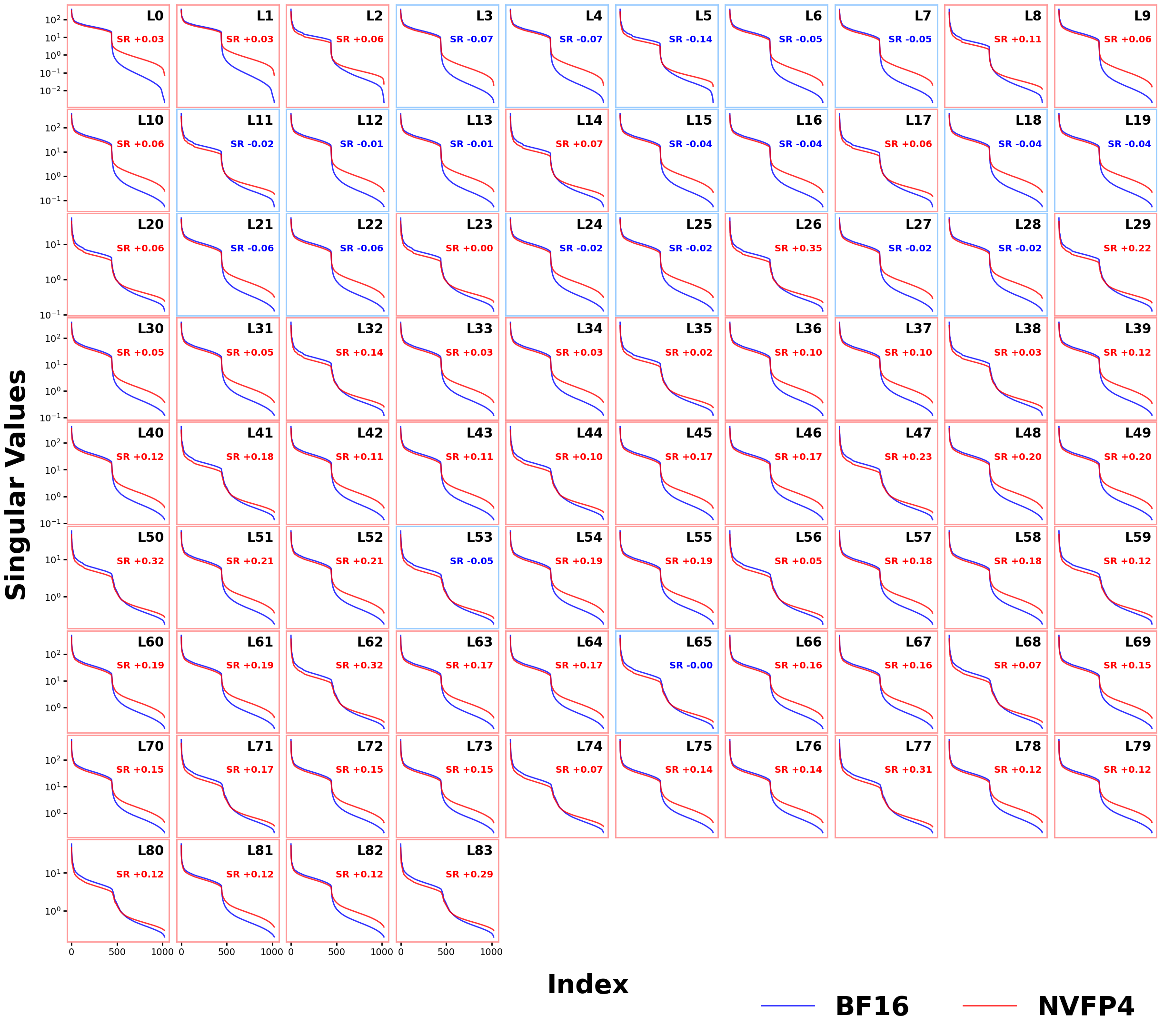}
        \caption*{\textbf{ Qwen2.5-1.5B}}
    \end{minipage}
    \caption{Spectral analysis of \textbf{DeepSeek-Coder-1.3B} and \textbf{Qwen2.5-1.5B}, where quantization induces a consistent positive shift in Stable Rank (SR+) in the majority of layers.}
    \label{fig:exp3_part3}
\end{figure*}

\end{document}